\newcommand{\ALG}{PrivSGP-VR}
\newtheorem{Thm}{Theorem}
\newtheorem{Def}{Definition}
\newtheorem{Lem}{Lemma}
\newtheorem{Ass}{Assumption}
\newtheorem{Rem}{Remark}
\newtheorem{Cor}{Corollary}
\title{{\ALG}: Differentially Private Variance-Reduced Stochastic Gradient Push with Tight Utility Bounds}
\author{
Zehan Zhu$^1$
\and
Yan Huang$^1$\and
Xin Wang$^2$\And
Jinming Xu$^{1}$\thanks{Corresponding author.}
\affiliations
$^1$Zhejiang University, Hangzhou, China\\
$^2$Qilu University of Technology, Jinan, China
\emails
\{12032045, huangyan5616\}@zju.edu.cn,
xinwang@qlu.edu.cn,
jimmyxu@zju.edu.cn
}
\begin{document}
\maketitle

\begin{abstract}
In this paper, we propose a differentially private decentralized learning method (termed PrivSGP-VR) which employs stochastic gradient push with variance reduction and guarantees $(\epsilon, \delta)$-differential privacy (DP) for each node. Our theoretical analysis shows that, under DP Gaussian noise with constant variance, PrivSGP-VR achieves a sub-linear convergence rate of $\mathcal{O}(1/\sqrt{nK})$, where $n$ and $K$ are the number of nodes and iterations, respectively, which is independent of stochastic gradient variance, and achieves a linear speedup with respect to $n$. Leveraging the moments accountant method, we further derive an optimal $K$ to maximize the model utility under certain privacy budget in decentralized settings. With this optimized $K$, PrivSGP-VR achieves a tight utility bound of $\mathcal{O}\left( \sqrt{d\log \left( \frac{1}{\delta} \right)}/(\sqrt{n}J\epsilon) \right)$, where $J$ and $d$ are the number of local samples and the dimension of decision variable, respectively, which matches that of the server-client distributed counterparts, and exhibits an extra factor of $1/\sqrt{n}$ improvement compared to that of the existing decentralized counterparts, such as A(DP)$^2$SGD. Extensive experiments corroborate our theoretical findings, especially in terms of the maximized utility with optimized $K$, in fully decentralized settings.
\end{abstract}

\section{Introduction}
Distributed learning has been widely adopted in various application domains due to its great potential in improving computing efficiency~\cite{langer2020distributed}.
In particular, we assume that each computing node has $J$ data samples, and we use $f_i(x; j)$ to denote the loss of the $j$-th data sample at node $i$ with respect to the model parameter $x\in \mathbb{R}^d$. We are then interested in solving the following non-convex finite-sum optimization problem via a group of $n$ nodes:
\begin{equation}
\label{global_loss_function}
\underset{x\in \mathbb{R}^d}{\min}f\left( x \right) \triangleq \frac{1}{n}\sum_{i=1}^n{f_i\left( x \right)},
\end{equation}
where $f_i\left( x \right) \triangleq \frac{1}{J}\sum_{j=1}^J{f_i\left( x;j \right)}$ is the loss function of node $i$ and all nodes collaborate to find a common model parameter $x$ minimizing their average loss functions.
We also assume that each node $i$ can only evaluate local stochastic gradient $\nabla f_i\left( x;\xi_i \right)$, $\xi_i \in \{1,2,...,J\}$.

For distributed parallel methods~\cite{li2014scaling,mcmahan2017communication} where there is a center (e.g., parameter server), they suffer from high communication burden and single point failure of the central node~\cite{lian2017can}.
% For distributed parallel methods~\cite{li2014scaling,zinkevich2010parallelized,mcmahan2017communication} where there is a center (e.g., parameter server), they suffer from high communication burden and single point failure of the central node~\cite{lian2017can}.
These potential bottlenecks motivate researchers to study fully decentralized methods~\cite{lian2017can,lian2018asynchronous} to solve Problem~\eqref{global_loss_function}, where the central node is not required and each node only communicates with its neighbors. 
The existing decentralized learning algorithms usually employ undirected graphs for communication, which can not be easily implemented due to the existence of deadlocks~\cite{assran2019stochastic}.
It is desirable to consider more practical scenarios where communication graphs may be directed and even time-varying. Stochastic gradient push (SGP) proposed in~\cite{assran2019stochastic}, which builds on push-sum protocol~\cite{kempe2003gossip}, is proven to be very effective in solving Problem~\eqref{global_loss_function} over directed and time-varying communication graphs.

It has been well known that the frequent exchange of model parameters in decentralized learning may lead to severe concern on privacy leakage as the disclose of intermediate parameters could potentially compromise the original data~\cite{wang2019beyond}. For instance, previous studies~\cite{truex2019hybrid,carlini2019secret} have shown that the exposed parameters can be utilized to crack original data samples.
To address the above issue, differential privacy (DP), as a theoretical tool to provide rigorous privacy guarantees and quantify privacy loss, can be incorporated into each node in decentralized learning systems to enhance the privacy protection. DP techniques usually inject certain noises to disturb parameters for privacy preservation, which inevitably degrades the model accuracy. Besides, the variance of the added DP noise needs to be increased with the total number of iterations $K$ to ensure certain privacy guarantee due to the accumulated privacy leakage over communication rounds~\cite{dwork2014algorithmic,abadi2016deep,wu2020value,wei2021user}. 
% In this regard, the more the number of iterations being executed, the larger the variance of the added DP noise will be, which in turn severely deteriorate the model utility.
In this regard, an excessive total number of iterations $K$ may severely deteriorate the model accuracy, and hence prohibits the implementation of DP in real decentralized learning systems. Therefore, given certain privacy budget, it is necessary to optimize $K$ to achieve a useful model with high accuracy. However, the few existing decentralized learning algorithms with DP guarantee for non-convex problems either do not consider optimizing $K$ under certain privacy guarantee~\cite{yu2021decentralized}, or their derived theoretical utility bound under the optimized $K$ cannot match that of the server-client distributed conterparts~\cite{xu2021dp}.

% Consequently, the following question arises naturally: \emph{Can we design a differentially private decentralized learning algorithm for non-convex problems, find the optimal $K$ that attains maximized model accuracy given certain privacy budget, and achieve a tight utility bound matching that of the server-client distributed conterparts?}

% \textbf{Summary of Contributions.} To address the above question, we develop a new differentially private distributed learning algorithm, termed PrivSGP-VR, for decentralized training problems over time-varying directed networks. The key contributions are summarized as follows:

In this paper, we aim to design a differentially private decentralized learning algorithm for non-convex problems, find the optimal $K$ that attains maximized model accuracy given certain privacy budget, and achieve a tight utility bound matching that of the server-client distributed conterparts. We summarize our main contributions as follows:

\begin{itemize}
    \item \textbf{New efficient algorithm with personalized DP guarantee for each node.} 
    Different from the existing works, we propose a differentially private learning method (termed PrivSGP-VR) for non-convex problems, which can work over general time-varying directed communication topologies in \emph{fully decentralized} settings. More importantly, a personalized $(\epsilon_i,\delta_i)$-differential privacy (DP) is guaranteed for each node $i$, and variance-reduced technique is adopted to eliminate the effect of stochastic gradient noise, improving the convergence performance.
    % \item \textbf{Linear speedup and  maximized model utility.} 
    % Under DP Gaussian noise with constant variance $\sigma_i^2$, we derive a sub-linear convergence rate of $\mathcal{O}(\frac{1}{\sqrt{nK}})$ for PrivSGP-VR, which is independent of stochastic gradient variance and scales linearly w.r.t. the number of nodes $n$. More importantly, given certain privacy budget $(\epsilon_i, \delta_i)$ for each node $i$, leveraging the moments accountant method, we derive the optimized number of iterations to  obtain a tighter convergence error bound maximizing the model utility by properly designing the DP noise with its variance $\sigma_i^2$ depending on $K$ and privacy budget. With this optimized $K$, we achieve the same privacy guarantee while maintaining the maximized model utility as the existing private centralized/server-client counterparts.
    \item \textbf{Linear speedup and tight utility bound.} 
    Under DP Gaussian noise with constant variance for each node, we derive a sub-linear convergence rate of $\mathcal{O}(\frac{1}{\sqrt{nK}})$ for PrivSGP-VR, which is independent of stochastic gradient variance and scales linearly w.r.t. the number of nodes $n$. More importantly, given certain privacy budget $(\epsilon_i, \delta_i)$ for each node $i$, leveraging the moments accountant method, we derive the optimized number of iterations $K$ to obtain a tight convergence error bound maximizing the model utility.
    With this optimized $K$, we achieve the utility bound of $\mathcal{O}\left( \sqrt{d\log \left( \frac{1}{\delta} \right)}/(\sqrt{n}J\epsilon) \right)$\footnote{Here, we set $\epsilon_i=\epsilon$ and $\delta_i=\delta$ for utility bound comparison.} for PrivSGP-VR, which matches that of the server-client distributed counterparts, and enjoys an extra factor of $1/\sqrt{n}$ improvement compared to that of the existing decentralized counterparts (c.f., Table~\ref{table_1}).
    \item \textbf{Extensive experimental evaluations.} 
    Extensive experiments on two training tasks are conducted to validate our theoretical findings. In particular, our experimental results show that properly setting the total number of iterations $K$ will significantly improve the model accuracy for the proposed PrivSGP-VR algorithm under certain privacy budget. To the best of our knowledge, this is the first empirical validation of the existence of an optimal choice of $K$ in the realm of differentially private decentralized learning. Besides, we validate the property of linear speedup for PrivSGP-VR employing DP noise with a constant variance. Moreover, we demonstrate the trade-off between maximizing model utility and ensuring privacy protection by executing PrivSGP-VR with various optimized numbers of iterations that correspond to different privacy budgets. 
    % \zh{To the best of our knowledge, this is the first empirical validation of the significance of selecting an appropriate $K$ to ensure optimal performance of differentially private decentralized learning algorithms under a given privacy budget.}
    % \zehan{which is the first empirical validation of the significance of optimizing $K$ to ensure optimal performance of decentralized learning under certain privacy budget}
\end{itemize}

\begin{table*}[t]
\setlength{\abovecaptionskip}{0.15cm}
\centering
    \begin{threeparttable}
    \begin{tabular}{|c|c|c|c|c|}
    \hline
    \rule{0pt}{10pt}
    \textbf{Algorithm} & \textbf{Privacy} & \textbf{Utility} & \textbf{Communication rounds} & \textbf{Architecture}  \\
    % \rule{0pt}{9pt}
    %  &&&\textbf{rounds}& \\
    \hline
    \rule{0pt}{9pt}
    DP-SGD & \multirow{2}{*}{$(\epsilon,\delta)$-DP} &\multirow{2}{*}{$\frac{\sqrt{d\log \left( \frac{1}{\delta} \right)}}{J\epsilon}
$}  & \multirow{2}{*}{--} & single node \\
    \rule{0pt}{9pt}
     \cite{abadi2016deep}&&&& centralized\\
    \hline
    \rule{0pt}{9pt}
    Distributed DP-SRM\tnote{1} & $(\epsilon,\delta)$-DP &\multirow{2}{*}{$\frac{\sqrt{d\log \left( \frac{1}{\delta} \right)}}{nJ\epsilon}
$}  & \multirow{2}{*}{$\frac{n^2J\epsilon \sqrt{d}}{\sqrt{\log \left( \frac{1}{\delta} \right)}}
$} & $n$ nodes \\
    \rule{0pt}{9pt}
     \cite{wang2019efficient}& global&&& server-client \\
    \hline
    \rule{0pt}{9pt}
    LDP SVRG/SPIDER & $(\epsilon,\delta)$-DP &\multirow{2}{*}{$\frac{\sqrt{d\log \left( \frac{1}{\delta} \right)}}{\sqrt{n}J\epsilon}
$}  & \multirow{2}{*}{$\frac{n^{\frac{3}{2}}J\epsilon \sqrt{d}}{\sqrt{\log \left( \frac{1}{\delta} \right)}}
$} & $n$ nodes \\
    \rule{0pt}{9pt}
     \cite{lowy2023private}& for each node&&& server-client \\
    \hline
    \rule{0pt}{9pt}
    SoteriaFL-SAGA/SVRG& $(\epsilon,\delta)$-DP &\multirow{2}{*}{$\frac{\sqrt{\left( 1+\omega \right) d\log \left( \frac{1}{\delta} \right)}}{\sqrt{n}J\epsilon}$}  & \multirow{2}{*}{$\frac{\sqrt{n}J\epsilon}{\sqrt{\left( 1+\omega \right) d\log \left( \frac{1}{\delta} \right)}}
$} & $n$ nodes \\
    \rule{0pt}{9pt}
     \cite{li2022soteriafl}& for each node&&& server-client \\
    \hline
    \rule{0pt}{9pt}
    A(DP)$^2$SGD\tnote{2} & $(\epsilon,\delta)$-DP &\multirow{2}{*}{$\frac{\sqrt{d\log \left( \frac{1}{\delta} \right)}}{J\epsilon}$}  & \multirow{2}{*}{$\frac{J^2\epsilon ^2}{d\log \left( \frac{1}{\delta} \right)}$} & $n$ nodes \\
    \rule{0pt}{9pt}
     \cite{xu2021dp}& for each node&&& decentralized \\
    \hline
    \rule{0pt}{9pt}
    PrivSGP-VR & $(\epsilon,\delta)$-DP &\multirow{2}{*}{$\frac{\sqrt{d\log \left( \frac{1}{\delta} \right)}}{\sqrt{n}J\epsilon}
$}  & \multirow{2}{*}{$\frac{J^2\epsilon ^2}{d\log \left( \frac{1}{\delta} \right)}
$} & $n$ nodes \\
    \rule{0pt}{9pt}
     (Our Algorithm~\ref{PrivSGP-VR})& for each node&&& decentralized \\
    \hline
    \end{tabular}
    \begin{tablenotes}
        \item[1] Wang \textit{et al.}~\shortcite{wang2019efficient} only consider global $(\epsilon,\delta)$-DP that merely protects the privacy for the entire dataset while we consider $(\epsilon,\delta)$-DP for each node, which can protect the local dataset at the node's level.
        \item[2] For A(DP)$^2$SGD, the authors only provide the utility bound under global $(\epsilon,\delta)$-DP for the entire dataset. We thus derive their utility bound in the sense of ensuring $(\epsilon,\delta)$-DP for each node for fair comparison.
    \end{tablenotes}
    \end{threeparttable}
\caption{Comparison of existing differentially private stochastic algorithms for the non-convex problem in terms of privacy, utility and communication complexity. 
DP-SGD is the centralized (single-node) stochastic learning algorithm serving as a baseline.
Distributed DP-SRM and LDP SVRG/SPIDER are server-client distributed learning algorithms without communication compression.
SoteriaFL-SAGA/SVRG are server-client distributed learning algorithms with communication compression.
% $n$ is the number of nodes. 
% $J$ is the data sample size of each node. 
% $d$ is the model dimension.
$\omega$ is the parameter for unbiased compression in SoteriaFL-SAGA/SVRG ($\omega=0$ corresponds to no compression).
A(DP)$^2$SGD is decentralized learning algorithm.
For comparison, we set $\epsilon_i=\epsilon$ and $\delta_i=\delta$ for each node $i$ in our PrivSGP-VR. 
The Big $\mathcal{O}$ notation is omitted for simplicity.
}
\label{table_1}
\vspace{-11pt}
\end{table*}

\section{Related Works}
Differential privacy (DP) was first proposed in~\cite{dwork2006our} to protect data privacy for database queries.
A DP mechanism adds randomly generated zero-mean noises to the output of a query function before it is exposed, making it difficult for curious attackers to extract users' private information from the distorted query results. 
% DP works well in various applications that require preventing implicit leakage, such as meta learning~\cite{li2019differentially},  personalized recommendation~\cite{shin2018privacy}, and wireless network~\cite{wei2021low}.
The basic composition theorem~\cite{dwork2006our,dwork2009differential} and advanced composition theorem~\cite{dwork2010boosting,bun2016concentrated} are commonly used for computing the overall accumulated privacy loss in iterative training processes. 
However, these theorems can result in loose estimates of privacy loss. 
To address this issue, the moments accountant method proposed in~\cite{abadi2016deep} obtains a much tighter estimate on the overall privacy loss by tracking higher moments and thus provides a more accurate way for calculating the privacy spending.

There has been a recent surge in research efforts towards achieving differential privacy guarantees in large-scale machine learning systems.
%\citeauthor{abadi2016deep,wang2017differentially,chen2020understanding,wang2020differentially} 
Abadi \textit{et al.}~\shortcite{abadi2016deep}; Wang \textit{et al.}~\shortcite{wang2017differentially}; Chen \textit{et al.}~\shortcite{chen2020understanding}; Wang \textit{et al.}~\shortcite{wang2020differentially} design differentially private stochastic learning algorithms in a centralized setting.
For distributed\footnote{Here, by being distributed, we mean sever-client architecture.} settings, Laplace and Gaussian mechanisms have been incorporated into federated learning, with corresponding convergence rates analyzed, respectively~\cite{wu2020value,wei2020federated,wei2021user}.
Truex \textit{et al.}~\shortcite{truex2020ldp} explore differential privacy guarantee for each client in federated personalized model learning.
In~\cite{zhou2023optimizing}, the authors consider optimizing the numbers of queries and replies in federated learning to maximize the model utility given certain privacy budget for strongly convex problems.
Zhang \textit{et al.}~\shortcite{zhang2020private}; Li \textit{et al.}~\shortcite{li2022soteriafl} achieve both differential privacy and communication compression in federated learning for non-convex problems, and provided a characterization of trade-offs in terms of privacy, utility, and communication complexity.
There are also other works dedicated to designing differentially private distributed learning algorithms, such as~\cite{wang2019efficient,li2019asynchronous,zeng2021differentially,liu2022loss,lowy2023private,wang2023efficient}, but all the above-mentioned distributed methods are only applicable to the server-client architecture.
% There are also other works dedicated to designing differentially private distributed learning algorithms, such as~\cite{mcmahan2017learning,wang2019efficient,li2019asynchronous,zeng2021differentially,liu2022loss,lowy2023private,wang2023efficient}, but all the above-mentioned distributed methods are only applicable to the server-client architecture.

Recently, there have been few works aiming to achieve differential privacy for fully decentralized learning algorithms. 
For example, the works in~\cite{cheng2018leasgd,cheng2019towards} achieve differential privacy in fully decentralized learning systems for strongly convex problems. 
Wang \textit{et al.}~\shortcite{wang2022tailoring} achieve differential privacy in fully decentralized architectures by tailoring gradient methods for deterministic optimization problems. 
Yu \textit{et al.}~\shortcite{yu2021decentralized} present a decentralized stochastic learning method for non-convex problems with differential privacy guarantee (DP$^2$-SGD) based on D-PSGD~\cite{lian2017can}, which relies on a fixed communication topology and uses the basic composition theorem to bound the overall privacy loss.
To have a tight privacy guarantee, Xu \textit{et al.}~\shortcite{xu2021dp} propose a differentially private asynchronous decentralized learning algorithm (A(DP)$^{2}$SGD) for non-convex problems based on AD-PSGD~\cite{lian2018asynchronous}, which provides privacy guarantee in the sense of R\'enyi differential privacy (RDP)~\cite{mironov2017renyi}. However, it should be noted that the above-mentioned two fully decentralized differentially private algorithms~\cite{yu2021decentralized,xu2021dp} work only for undirected communication graphs, which is often not satisfied in practical scenarios, and their convergence performance suffer from the effect of stochastic gradient variance. Moreover, none of them provide experimental evaluation to verify that selecting an appropriate value of $K$ can, indeed, improve the model utility (accuracy) under given certain privacy budget.

On the theoretical level, for general non-convex problems, a utility bound of $\mathcal{O}\left( \sqrt{d\log \left( \frac{1}{\delta} \right)}/\left( J\epsilon \right ) \right)$ is established for centralized learning with DP~\cite{abadi2016deep}, and a utility bound of $\mathcal{O}\left( \sqrt{d\log \left( \frac{1}{\delta} \right)}/(\sqrt{n}J\epsilon) \right)$ is provided for server-client distributed algorithms with DP~\cite{lowy2023private,li2022soteriafl}, which scales as $1/\sqrt{n}$ w.r.t. the number of nodes $n$. For DP-based fully decentralized algorithms as mentioned above, DP$^2$-SGD~\cite{yu2021decentralized} lacks a theoretical utility guarantee under a given privacy budget, while the utility bound of A(DP)$^{2}$SGD~\cite{xu2021dp} can not match that of the server-client distributed counterparts, losing a scaling factor of $1/\sqrt{n}$ (c.f., Table~\ref{table_1}).

% Although A(DP)$^{2}$SGD offers a utility bound~\cite{xu2021dp}, it can not match that of the server-client distributed counterparts, losing a scaling factor of $1/\sqrt{n}$ (c.f., Table~\ref{table_1}).

\section{Algorithm Development}
We consider solving Problem~\eqref{global_loss_function} over the following general network model.

\textbf{Network Model.}
The communication topology considered in this work is modeled as a sequence of time-varying directed graph $\mathcal{G}^k=\left( \mathcal{V},\mathcal{E}^k \right) $, where $\mathcal{V}=\{1,2,...,n\}$ denotes the set of nodes and $\mathcal{E}^k \subset \mathcal{V} \times \mathcal{V}$ denotes the set of directed edges/links at iteration $k$. 
We associate each graph $\mathcal{G}^k$ with a non-negative mixing matrix $P^k \in \mathbb{R}^{n \times n}$ such that $(i,j) \in \mathcal{E}^k$ if $P_{i,j}^k > 0$, i.e., node $i$ receiving a message from node $j$ at iteration $k$.
Without loss of generality, we assume that each node is an in-neighbor of itself.

The following assumptions are made on the mixing matrix and graph for the above network model to facilitate the subsequent convergence analysis for the proposed algorithm.
\begin{Ass}[Column Stochastic Mixing Matrix]
\label{Ass_weight_matrix}
For any iteration $k$, the non-negative mixing matrix $P^k$ is column-stochastic, i.e., $\mathbf{1}^\top P^k=\mathbf{1}^\top$, where $\mathbf{1}$ is a column vector with all of its elements equal to $1$.
\end{Ass}

\begin{Ass}[$B$-strongly Connected Graph]
\label{assumption_mixing_matrix}
We assume that there exists finite, positive integers $B$ and $\bigtriangleup$, such that the graph with edge set $\bigcup\nolimits_{k=lB}^{\left( l+1 \right) B-1}{\mathcal{E}^k}$ is strongly connected and has diameter at most $\bigtriangleup$ for $\forall 
 l \geqslant 0$.
\end{Ass}

Before developing our proposed algorithm, we briefly introduce the following definition of $(\epsilon, \delta)$-DP~\cite{dwork2014algorithmic}, which is crucial to subsequent analysis.
\begin{Def}[$(\epsilon,\delta)$-DP]
A randomized mechanism $\mathcal{M}$ with domain $\mathcal{D}$ and range $\mathcal{R}$ satisfies $(\epsilon,\delta)$-differential privacy, or $(\epsilon,\delta)$-DP for short, if for any two adjacent inputs $\mathrm{x},\mathrm{x}^{\prime}\in \mathcal{D}$ differing on a single entry and for any subset of outputs $S\subseteq \mathcal{R}$, it holds that
\begin{equation}
Pr\left[ \mathcal{M}\left( \mathrm{x} \right) \in S \right] \leqslant e^{\epsilon}Pr\left[ \mathcal{M}\left( \mathrm{x}^{\prime} \right) \in S \right] +\delta ,
\end{equation}
where the privacy budget $\epsilon$ denotes the privacy lower bound to
measure a randomized query and $\delta$ is the probability of
breaking this bound.
\end{Def}
% It can be observed that the smaller $\epsilon$ and $\delta$ are, the stronger level of privacy guarantee is. 
It can be observed that the smaller the values of $\epsilon$ and $\delta$ are, the higher the level of privacy guarantee will be. In this paper, we allow each node $i$ to tolerate different level of privacy loss, yielding personalized privacy budget $(\epsilon_i,\delta_i$) for each node $i$.
Now, we are ready to present our differentially private decentralized learning algorithm as follows.

\paragraph{Stochastic gradient push over time-varying directed graphs.}
% We design decentralized stochastic gradient push communication protocol , inspired by~\citep{kempe2003gossip}, to ensure consensus of model parameters in a time-varying directed topology. 
We first introduce decentralized stochastic gradient push method based on Push-Sum protocol~\cite{kempe2003gossip}, which can tackle the unblanceness of directed topologies by asymptotically estimating the Perron–Frobenius eigenvector of transition matrices. 
In particular, each node $i$ maintains three variables during the learning process: i) the model parameter $x_i^k$; ii) the scalar Push-Sum weight $w_i^k$ and iii) the de-biased parameter $z_i^k=x_i^k/w_i^k$, with the same initialization of $x_i^0=z_i^0=x^0 \in \mathbb{R}^d$ and $w_i^0=1$ for all nodes $i \in \{1,2,...,n\}$.
At each iteration $k$, each node $i$ updates as follows:
\begin{equation*}
\begin{aligned}
&\texttt{Local SGD:} \quad x_{i}^{k+\frac{1}{2}}=x_{i}^{k}-\gamma \nabla f_i\left( z_{i}^{k};\xi _{i}^{k} \right) , 
\\
&\texttt{Averaging:} \quad x_{i}^{k+1}=\sum_{j=1}^n{P_{i,j}^{k}x_{j}^{k+\frac{1}{2}}},w_{i}^{k+1}=\sum_{j=1}^n{P_{i,j}^{k}w_{j}^{k}},
\\
&\texttt{De-bias:} \quad  z_{i}^{k+1}=x_{i}^{k+1}/w_{i}^{k+1},
\end{aligned}
\end{equation*}
where $\gamma > 0$ is the step size and $\nabla f_i(z_i^k;\xi_i^k)$ is the stochastic gradient evaluated on the de-biased parameter $z_i^k$. Note that, during the training process, each node exchanges model parameter with its neighbors frequently for averaging, resulting in potential privacy leakage as the original data could be recovered based on the disclosed model parameters.

\paragraph{Ensuring differential privacy guarantee for each node.} 
We apply the differential privacy mechanism to protect the exchanged sensitive model parameters of each node. In particular, for each node $i$, the exchanged model parameter $x_i^{k+\frac{1}{2}}$ is obtained by performing a \texttt{Local SGD} step using the gradient $\nabla f_i(z_i^k;\xi_i^k)$. Since perturbing the gradient is equivalent to perturbing the model parameter, we thus inject randomly generated noise to the gradient $\nabla f_i(z_i^k;\xi_i^k)$ instead of directly adding noise to the exchanged model parameter in the proposed approach as follows:
\begin{equation}
\nabla \tilde{f}_i(z_{i}^{k};\xi _{i}^{k})=\nabla f_i(z_{i}^{k};\xi _{i}^{k})+N_i^k
\end{equation}
where the noise $N_i^k$ is drawn from the Gaussian distribution $\mathcal{N}\left( 0,\sigma_i ^2\mathbb{I}_d \right) $ and $\mathbb{I}_d$ represents the identity matrix with $d$ dimension. 
Then, the \texttt{Local SGD} step becomes:
\begin{equation}
\label{sgd_with_dp_nosie}
x_{i}^{k+\frac{1}{2}}=x_{i}^{k}-\gamma \nabla \tilde{f}_i(z_{i}^{k};\xi _{i}^{k})=x_{i}^{k}-\gamma \left( \nabla f_i(z_{i}^{k};\xi _{i}^{k})+N_{i}^{k} \right) .
\end{equation}
We will refer to the above generated differentially private algorithm as PrivSGP (its pseudo-code can be found in Algorithm~\ref{PrivSGP} in Appendix~\ref{appendix_alg}).

\begin{algorithm}[tb]
\caption{PrivSGP-VR}
\label{PrivSGP-VR}
\textbf{Initialization}: $x_{i}^{0}=z_{i}^{0}=x^0 \in \mathbb{R}^d$, $w_i^0=1$ and privacy budget $(\epsilon_i,\delta_i)$ for all $i \in \mathcal{V}$, step size $\gamma > 0$, and  total number of iterations $K$.
\begin{algorithmic}[1] %[1] enables line numbers
\FOR{$j \in \left\{ 1,2,...,J \right\}$, at node $i$,}
\STATE Initializes \quad $\nabla f_i(\phi_{i,j};j)=\nabla f_i(z_i^0;j)$
\ENDFOR
\FOR{$k=0,1,2,...,K-1$, at node $i$,}
\STATE Randomly samples a local training data $\xi_i^k$ with the sampling probability $\frac{1}{J}$;
\STATE Computes the corrected gradient by
        \begin{equation*}
        g_i^k=\nabla f_i(z_i^k;\xi_i^k)-\nabla f_i(\phi_{i,\xi_i^k};\xi_i^k)+\frac{1}{J}\sum_{j=1}^J \nabla f_i(\phi_{i,j};j);
        \end{equation*}
\STATE Stores gradient:\quad $\nabla f_i(\phi_{i,\xi_i^k};\xi_i^k)=\nabla f_i(z_i^k;\xi_i^k) $;
\STATE Adds noise $\tilde{g}_{i}^{k}=g_{i}^{k}+N_{i}^{k}$, where $N_i^k \in 
    \mathbb{R}^d \thicksim \mathcal{N}\left( 0,\sigma_i^2\mathbb{I}_d \right) 
$ and $\sigma_i$ is defined in Theorem~\ref{Theorem_3};
\STATE Generates intermediate model parameter:  $x_i^{k+\frac{1}{2}}=x_i^k-\gamma \tilde{g}_{i}^{k}$ ;
\STATE Sends $\left( x_i^{k+\frac{1}{2}}, w_i^k \right)$ to out-neighbors;
\STATE Receives $\left( x_j^{k+\frac{1}{2}}, w_j^k \right)$ from in-neighbors;
\STATE Updates $x_i^{k+1}$ by: \quad $x_{i}^{k+1}=\sum_{j=1}^n{P_{i,j}^{k}x_{j}^{k+\frac{1}{2}}}$;
\STATE Updates $w_i^{k+1}$ by: \quad $w_{i}^{k+1}=\sum_{j=1}^n{P_{i,j}^{k}w_{j}^{k}}$;
\STATE Updates $z_i^{k+1}$ by: \quad     
        $z_{i}^{k+1}=x_{i}^{k+1}/w_i^{k+1}$;
\ENDFOR
\end{algorithmic}
\end{algorithm}

\paragraph{Eliminating the stochastic gradient noise.} We now introduce the variance reduction technique~\cite{defazio2014saga} to eliminate the effect of stochastic gradient noise of each node on convergence performance.
Specifically, each node $i$ maintains a stochastic gradient table for all of its own local data samples.
At each iteration $k$, after computing the stochastic gradient $\nabla f_i\left( z_i^k;\xi _{i}^{k} \right)$, node $i$ does not perform a local differentially private SGD step using $\nabla f_i\left( z_i^k;\xi _{i}^{k} \right)$ directly (c.f.,~\eqref{sgd_with_dp_nosie}). 
Instead, $\nabla f_i\left( z_i^k;\xi _{i}^{k} \right)$ is corrected by subtracting the previously stored stochastic gradient corresponding to the $\xi_i^k$-th data sample, and then adding the average of all stored stochastic gradients.
With such a corrected stochastic gradient, node $i$ performs a local differentially private SGD step and replaces the stochastic gradient of the $\xi_i^k$-th data sample in the table with $\nabla f_i\left( z_i^k;\xi _{i}^{k} \right)$.
To better understand this process, let 
\begin{equation}
\label{saga_table}
\phi _{i,j}^{k+1}=\left\{ \begin{array}{c}
\phi _{i,j}^{k}\,\,  ,  j\ne \xi _{i}^{k}\\
z_i^k\,\,   ,  j=\xi _{i}^{k}\\
\end{array} \right. ,
\end{equation}
where $\phi _{i,j}^{k}$ is the most recent model parameter used for computing $\nabla f_i\left( \cdot;j \right)$ prior to iteration $k$.
Thus, $\nabla f_i\left( \phi _{i,j}^{k};j \right)$ represents the previously stored stochastic gradient for the $j$-th data sample of node $i$ prior to iteration $k$, and 
\begin{equation}
\label{corrected_stochastic_gradient}
g_{i}^{k} \triangleq \nabla f_i\left( z_i^k;\xi _{i}^{k} \right) -\nabla f_i\left( \phi _{i,\xi _{i}^{k}}^{k};\xi _{i}^{k} \right) +\frac{1}{J}\sum_{j=1}^J{\nabla f_i\left( \phi _{i,j}^{k};j \right)}
\end{equation}
is the corrected stochastic gradient of node $i$ at iteration $k$.
As a result, we replace the original stochastic gradient $\nabla f_i\left( z_i^k;\xi _{i}^{k} \right)$ in~\eqref{sgd_with_dp_nosie} with $g_{i}^{k}$, leading to the following new local differentially private SGD step, i.e.,
\begin{equation}
x_{i}^{k+\frac{1}{2}}=x_{i}^{k}-\gamma \left( g_{i}^{k}+N_{i}^{k} \right),
\end{equation}
which yields the proposed differentially private decentralized learning method PrivSGP-VR, whose complete pseudocode is summarized in Algorithm~\ref{PrivSGP-VR}.

\section{Theoretical Analysis}
In this section, we provide utility and privacy guarantees for the proposed PrivSGP-VR method.

\subsection{Convergence Guarantee}
To facilitate our convergence analysis, we make the following commonly used assumptions.

\begin{Ass}[Smoothness]
\label{assumption_smooth_saga}
For each node $i$,  $ \forall x\in \mathbb{R}^d$ and sample $\forall \xi_i \in  \{1,2,...,J\}$, the local sample loss function $ f_i(x;\xi_i)$ has $L$-Lipschitz continuous gradients . 
\end{Ass}

\begin{Ass}[Unbiased Gradient]
\label{assumption_unbiased_gradient}
For any model $x\in \mathbb{R}^d$, the stochastic gradient $\nabla f_i\left( x;\xi _i \right), \xi_i \sim  \{1,2,...,J\}$ generated by each node $i$ is unbiased, i.e.,
\begin{equation}
\mathbb{E}\left[ \nabla f_i\left( x;\xi _i \right) \right] =\nabla f_i\left( x \right) .
\end{equation}
\end{Ass}

\begin{Ass}[Bounded Data Heterogeneity]
\label{assumption_bounede_outer_variation}
There exists a finite positive constant $b^2$ such that for any node $ i$ and $\forall x\in \mathbb{R}^d$,
\begin{equation}
\left\| \nabla f_i\left( x \right) -\nabla f\left( x \right) \right\| ^2\leqslant b^2
.
\end{equation}
\end{Ass}

With the above assumptions, we have the following convergence result for PrivSGP-VR (Algorithm~\ref{PrivSGP-VR}).

\begin{Thm}[Convergence Rate]
\label{Theorem_1_saga} 
Suppose Assumptions~\ref{Ass_weight_matrix}-\ref{assumption_bounede_outer_variation} hold. Let $K$ be the total number of iterations and $f^*=\underset{x\in \mathbb{R}^d}{\min}f\left( x \right)$. If the step-size is set as $\gamma=\sqrt{\frac{n}{K}}$, then there exist constants $C$ and $q \in [0,1)$, which depend on the diameter of the network $\bigtriangleup $ and the sequence of mixing matrices $P^k$, such that, for any $K$ satisfying $K \geqslant \hat{K}(C,q)$,
% \begin{equation}
% \label{lower_bound_of_T_saga}
% \begin{aligned}
% K \geqslant \max \Bigg \{ & 4nL^2\left( \frac{J^2\left( 1+J \right)}{n}+\frac{12\left( 1+J \right) C^2}{\left( 1-q \right) ^2} \right) ,\frac{144n^3L^2C^4}{\left( 1-q \right) ^4},
% \\
% &\frac{16\left( 4J+5 \right) ^2L^2}{n}, \frac{4L^2C^2\left( 48J+66 \right) n}{\left( 1-q \right) ^2}, \frac{64J^4\left( 1+J \right) ^2L^2n}{9}
%  \Bigg \},
% \end{aligned}
% \end{equation}
we have
\begin{equation}
\label{mian_inequal_saga}
\begin{aligned}
&\frac{1}{K}\sum_{k=0}^{K-1}{\frac{1}{n}\sum_{i=1}^n{\mathbb{E}\left[ \left\| \nabla f\left( z_{i}^{k} \right) \right\| ^2 \right]}}
\\
\leqslant & \frac{13F^0+6L\left\| x^0 \right\| ^2+18Lb^2+24L\cdot \frac{d}{n}\sum_{i=1}^n{\sigma _{i}^{2}}}{\sqrt{nK}}
,
\end{aligned} 
\end{equation}
where $F^0=f\left( x^0 \right) -f^*$,  $C$ and $q$ can be found in Lemma~\ref{def_of_C_and_q}, and the definition of constant $\hat{K}(C,q)$ can be found at~\eqref{total_total_iteration_K} in the appendix, respectively.
\end{Thm}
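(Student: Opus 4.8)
The plan is to follow the standard Lyapunov-descent strategy for decentralized non-convex optimization, adapted to the Push-Sum protocol and the SAGA-type variance reduction. First I would introduce the averaged iterate $\bar{x}^k = \frac{1}{n}\sum_{i=1}^n x_i^k$. Because Assumption~\ref{Ass_weight_matrix} guarantees $\mathbf{1}^\top P^k = \mathbf{1}^\top$, summing the \texttt{Averaging} step over $i$ collapses the mixing and yields the clean recursion $\bar{x}^{k+1} = \bar{x}^k - \frac{\gamma}{n}\sum_{i=1}^n (g_i^k + N_i^k)$. Applying the $L$-smoothness of $f$ (Assumption~\ref{assumption_smooth_saga}) as a descent inequality on $\bar{x}^k$ and taking conditional expectation, I would use the SAGA unbiasedness $\mathbb{E}[g_i^k] = \nabla f_i(z_i^k)$ together with the zero-mean property of $N_i^k$ to convert the inner-product term into $-\frac{\gamma}{n}\sum_i \langle \nabla f(\bar{x}^k), \nabla f_i(z_i^k)\rangle$, and I would handle the quadratic term $\frac{L}{2}\mathbb{E}\|\bar{x}^{k+1}-\bar{x}^k\|^2$ by splitting it into the corrected-gradient part and the independent noise part; the latter contributes exactly the per-step term $\frac{\gamma^2 L d}{2n^2}\sum_i \sigma_i^2$ that eventually produces the $\frac{d}{n}\sum_i \sigma_i^2$ factor in the bound.

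The key decentralized difficulty is that gradients are evaluated at the de-biased local iterates $z_i^k = x_i^k/w_i^k$ rather than at $\bar{x}^k$, so every $\langle\nabla f(\bar x^k),\nabla f_i(z_i^k)\rangle$ and every gradient-norm surrogate must be related to $\nabla f(z_i^k)$ and to the consensus error $\|z_i^k - \bar{x}^k\|$. Here I would invoke the contraction property of the Push-Sum protocol summarized in Lemma~\ref{def_of_C_and_q}: there exist $C>0$ and $q\in[0,1)$ (depending on $\bigtriangleup$ and the products of the $P^k$) such that the deviation of each $z_i^k$ from the network average decays geometrically, up to an accumulation of the per-step update magnitudes. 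Telescoping this contraction gives a bound on $\frac{1}{n}\sum_i \|z_i^k-\bar{x}^k\|^2$ in terms of the history of $\|g_i^t+N_i^t\|^2$, weighted by $q^{k-t}$, whose geometric sum produces the $C/(1-q)$-type factors; the requirement $K \ge \hat{K}(C,q)$ is precisely what lets these consensus terms be absorbed rather than dominate, so that the residual error still scales as $1/\sqrt{nK}$.

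To eliminate the stochastic-gradient variance I would control the SAGA gradient table via the potential $\frac{1}{nJ}\sum_{i,j}\mathbb{E}\|\nabla f_i(\phi_{i,j}^k;j)-\nabla f_i(z_i^k)\|^2$. Using the table update~\eqref{saga_table}, each stored gradient is refreshed with probability $1/J$ per iteration, so this potential contracts geometrically toward zero while injecting a term proportional to the drift $\|z_i^{k+1}-z_i^k\|^2$. Bounding the variance of $g_i^k$ by this potential (via $L$-smoothness) shows that, unlike vanilla SGP, the noise floor carries no additive stochastic-gradient-variance constant, which is the mechanism behind the claimed independence from stochastic gradient variance. I would then assemble a single Lyapunov function $\mathcal{L}^k = \mathbb{E}[f(\bar{x}^k)] + \alpha\,(\text{consensus error}) + \beta\,(\text{SAGA potential})$ and choose the coefficients $\alpha,\beta$ so that the cross terms cancel and $\mathcal{L}^{k+1}\le \mathcal{L}^k - c\gamma\cdot\frac{1}{n}\sum_i\mathbb{E}\|\nabla f(z_i^k)\|^2 + (\text{noise})$.

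Finally I would telescope this one-step inequality from $k=0$ to $K-1$, use $\mathcal{L}^K\ge f^*$ and $\mathcal{L}^0$ evaluated at the common initialization (which produces $F^0$ and the $L\|x^0\|^2$ term from $w_i^0=1$, $z_i^0=x^0$), divide by $c\gamma K$, and substitute $\gamma=\sqrt{n/K}$ to collapse all the $\gamma$-, $\gamma^2$- and consensus-constant contributions into the single $1/\sqrt{nK}$ rate with the stated numerical coefficients. The main obstacle I anticipate is the simultaneous, coupled control of the two auxiliary quantities, namely the Push-Sum consensus error over directed time-varying graphs and the SAGA table drift, since each feeds into the other through the update magnitude $\|z_i^{k+1}-z_i^k\|^2$; getting the Lyapunov coefficients and the threshold $\hat{K}(C,q)$ right so that both are absorbed without degrading the $1/\sqrt{nK}$ scaling, and without reintroducing a gradient-variance term, is the delicate part of the argument.
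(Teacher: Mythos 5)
Your proposal is correct and follows essentially the same route as the paper's proof: a descent inequality on the average iterate $\bar{x}^k$ obtained via column-stochasticity, the Push-Sum contraction lemma (Lemma~\ref{def_of_C_and_q}) to bound the consensus error $\|z_i^k-\bar{x}^k\|$ by geometrically weighted sums of $\|g_i^s+N_i^s\|$, a SAGA-table potential to kill the gradient-variance term, and a coupled telescoping with coefficients chosen so the cross terms are absorbed under the step-size restriction that defines $\hat{K}(C,q)$, followed by $\gamma=\sqrt{n/K}$. The only differences are cosmetic: the paper's VR potential $T^k$ in~\eqref{def_of_T_k} is measured in iterate space anchored at $\bar{x}^k$ (rather than your gradient-space potential anchored at $z_i^k$), and the paper sums the three key inequalities~\eqref{eq_1},~\eqref{eq_2},~\eqref{eq_3} over $k$ before coupling them with a single coefficient $\beta$ and finally converts $\|\nabla f(\bar{x}^k)\|^2$ plus consensus error into $\|\nabla f(z_i^k)\|^2$, instead of packaging everything into a per-step Lyapunov descent.
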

\begin{proof}
See Appendix~\ref{proof_of_theorem}. 
% See Appendix~\ref{proof_of_theorem} in the supplementary document. 
\end{proof}

\begin{Rem}
Under DP Gaussian noise with a constant variance, the above result suggests that the convergence rate for PrivSGP-VR is $\mathcal{O}(\frac{1}{\sqrt{nK}})$, which is independent of stochastic gradient variance $\zeta ^2$ with $\mathbb{E}\left[ \left\| \nabla f_i\left( x;\xi _i \right) -\nabla f_i\left( x \right) \right\|^2 \right] \leqslant \zeta ^2$ that appears in the convergence error bound in~\cite{xu2021dp,yu2021decentralized}, and achieves linear speedup with respect to the number of nodes. 
Although it converges to an exact stationary point as the total number of iterations $K$ goes to infinity, the privacy loss will also become infinite according to the composition theorem~\cite{dwork2006our}.
As a result, it is necessary to consider the trade-off between the model utility and privacy guarantee for the proposed PrivSGP-VR algorithm under certain given privacy budget.
\end{Rem}

\subsection{Privacy and Utility Guarantee}
Leveraging the moments accountant method~\cite{abadi2016deep}, we can calculate the variance $\sigma_i^2$ of the DP Gaussian noise needed to be added according to the total number of iterations $K$ and the given privacy budget $(\epsilon_i, \delta_i)$, which is provided in the following theorem.

\begin{Thm}[Privacy Guarantee]
\label{Theorem_3}
Suppose the stochastic gradient of each $f_i$ is uniformly bounded, i.e., $G=\underset{k,i}{\max}\left\| \nabla f_i\left( z_{i}^{k};\xi _{i}^{k} \right) \right\| <\infty 
$. There exist constants $c_1$ and $c_2$ such that, given the total number of iterations $K$ for Algorithm~\ref{PrivSGP-VR}, $(\epsilon_i,\delta_i)$-differential privacy can be guaranteed for each node $i$, for any $\epsilon_i <\frac{c_1K}{J^2}$ and $\delta_i \in (0,1)$, if $N_i^k$ is drawn from the Gaussian distribution $\mathcal{N}\left( 0,\sigma_i ^2\mathbb{I}_d \right) $ with
\begin{equation}
\label{noise_scale_saga}
\sigma _i=3c_2G\frac{\sqrt{K\log \left( \frac{1}{\delta _i} \right)}}{J\epsilon_i}
.
\end{equation}
\end{Thm}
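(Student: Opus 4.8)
The plan is to reduce the privacy analysis to the \emph{subsampled Gaussian mechanism} and then invoke the moments accountant of \cite{abadi2016deep} to compose the per-iteration privacy loss across all $K$ iterations. The only data-dependent quantity that node $i$ discloses at iteration $k$ is the perturbed parameter $x_i^{k+\frac{1}{2}}=x_i^k-\gamma\tilde{g}_i^k$; since $x_i^k$, $w_i^k$ and the step size $\gamma$ are deterministic functions of previously released messages (the push-sum weights $w_i^k$ carry no information about node $i$'s data), the post-processing property of DP lets me instead bound the privacy loss of releasing the sequence $\{\tilde{g}_i^k\}_{k=0}^{K-1}=\{g_i^k+N_i^k\}_{k=0}^{K-1}$. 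Each such release is an instance of the Gaussian mechanism applied to $g_i^k$, composed (adaptively, since $z_i^k$ depends on earlier outputs) with the uniform subsampling of one index $\xi_i^k$ from the $J$ local samples of node $i$.

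First I would establish the norm bound on the corrected gradient that plays the role of the clipping constant. Writing $g_i^k=\nabla f_i(z_i^k;\xi_i^k)-\nabla f_i(\phi_{i,\xi_i^k}^k;\xi_i^k)+\frac{1}{J}\sum_{j=1}^J\nabla f_i(\phi_{i,j}^k;j)$ and using the uniform bound $\|\nabla f_i(\cdot;\cdot)\|\le G$ with the triangle inequality, each of the three terms has norm at most $G$, so $\|g_i^k\|\le 3G$. This caps the contribution that any single sample can make to the released query, so the effective sensitivity of the subsampled mechanism is $3G$. Second, since node $i$ draws $\xi_i^k$ uniformly with probability $\frac{1}{J}$, the subsampling rate is $q=\frac{1}{J}$.

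With sensitivity $3G$, rate $q=\frac{1}{J}$, and $T=K$ compositions, I would apply Theorem~1 of \cite{abadi2016deep}: there exist absolute constants $c_1,c_2$ such that, for any $\epsilon_i<c_1 q^2 K=c_1 K/J^2$ and $\delta_i\in(0,1)$, the full mechanism for node $i$ is $(\epsilon_i,\delta_i)$-DP provided the noise multiplier satisfies $\sigma_i/(3G)\ge c_2\, q\sqrt{K\log(1/\delta_i)}/\epsilon_i$, i.e. $\sigma_i\ge 3c_2 G\sqrt{K\log(1/\delta_i)}/(J\epsilon_i)$. This is exactly \eqref{noise_scale_saga}, and it simultaneously recovers the admissibility condition $\epsilon_i<c_1 K/J^2$ stated in the theorem. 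Carrying this out independently at each node yields the claimed personalized $(\epsilon_i,\delta_i)$-DP guarantee.

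The main obstacle is justifying that the sensitivity reduction to $3G$ remains valid despite the SAGA-type variance-reduction structure. Unlike vanilla DP-SGD, the average term $\frac{1}{J}\sum_{j=1}^J\nabla f_i(\phi_{i,j}^k;j)$ depends on \emph{every} local sample --- including the one on which two adjacent datasets differ --- at \emph{every} iteration, and the stored anchors $\phi_{i,j}^k$ are themselves random, being refreshed only on the iterations when $j$ is sampled. I would need to argue that, conditioned on the sampling history, the map from one modified sample to $g_i^k$ still perturbs the released output by a quantity whose norm is controlled by the $3G$ bound, so that the subsampled-Gaussian moments bound of \cite{abadi2016deep} applies verbatim; making this conditioning rigorous, and confirming that the gradient table leaks nothing beyond what the released iterates already reveal, is the delicate step of the argument.
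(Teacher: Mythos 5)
Your proposal follows essentially the same route as the paper: the paper's own proof consists exactly of the triangle-inequality bound $\left\| g_i^k \right\| \leqslant 3G$ on the corrected gradient, followed by the observation that the sampling probability is $\frac{1}{J}$ and the assertion that the result then follows by extending Theorem 1 of~\cite{abadi2016deep}. The subtlety you flag in your final paragraph --- that the SAGA average term $\frac{1}{J}\sum_{j=1}^J \nabla f_i(\phi_{i,j}^k;j)$ depends on every local sample at every iteration, so the subsampled-Gaussian moments bound does not apply verbatim --- is precisely the step the paper hides behind the word ``straightforward'' and never resolves, so your write-up is, if anything, more candid than the paper about what a fully rigorous argument would still need to establish.
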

\begin{proof}
See Appendix~\ref{proof_of_moments_accountant}.
% See Appendix~\ref{proof_of_moments_accountant} in the supplementary document.
\end{proof}

As highlighted in Theorem~\ref{Theorem_3}, it is evident that when a certain privacy budget $(\epsilon_i, \delta_i)$ is given, a larger value of $K$ requires the added DP Gaussian noise with a larger variance $\sigma_i^2$. This can potentially impact the model utility negatively.
Therefore, our objective is to optimize the value of $K$ in order to maximize the final model accuracy under certain privacy budget $(\epsilon_i, \delta_i)$ for each node $i$.

Plugging \eqref{noise_scale_saga} into \eqref{mian_inequal_saga} in Theorem~\ref{Theorem_1_saga}, we obtain
the following utility guarantee.
\begin{Cor}[Maximized Utility Guarantee]
\label{proposition_saga}
Given certain privacy budget $(\epsilon_i,\delta_i)$ for each node $i\in \{1,2,...,n\}$, under the same conditions of Theorem~\ref{Theorem_1_saga} and~\ref{Theorem_3}, if the total number of iterations $K$ further satisfies
\begin{equation}
\label{value_of_T_saga}
K=\frac{\left( 13 F^0 +6L\left\| x^0 \right\| ^2+18Lb^2 \right) J^2n}{216Ldc_{2}^{2}G^2\sum_{i=1}^n{\frac{1}{\epsilon _{i}^{2}}\log \left( \frac{1}{\delta _i} \right)}}
,
\end{equation}
then we have
\begin{equation}
\label{upper_bound_last_saga}
\begin{aligned}
&\frac{1}{K}\sum_{k=0}^{K-1}{\frac{1}{n}\sum_{i=1}^n{\mathbb{E}\left[ \left\| \nabla f\left( z_{i}^{k} \right) \right\| ^2 \right]}} \leqslant
\\
&\frac{12c_2G\sqrt{6Ld\left( 13F^0+6L\left\| x^0 \right\| ^2+18Lb^2 \right) \sum_{i=1}^n{\frac{1}{\epsilon _{i}^{2}}\log \left( \frac{1}{\delta _i} \right)}}}{nJ}.
\end{aligned}
\end{equation}
\end{Cor}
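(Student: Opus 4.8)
The plan is to treat the convergence bound \eqref{mian_inequal_saga} of Theorem~\ref{Theorem_1_saga} as an explicit function of $K$ once the privacy-calibrated noise variance is inserted, and then to minimize that function over $K$ by a one-line AM--GM argument. Since the corollary is stated as the result of ``plugging \eqref{noise_scale_saga} into \eqref{mian_inequal_saga},'' essentially all the content is the substitution followed by the optimization.

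First I would substitute the noise scale \eqref{noise_scale_saga} from Theorem~\ref{Theorem_3}, i.e.\ $\sigma_i^2 = 9c_2^2 G^2 K\log(1/\delta_i)/(J^2\epsilon_i^2)$, into the right-hand side of \eqref{mian_inequal_saga}. The only $K$-dependent quantity in the numerator is the noise term, which becomes $24L\frac{d}{n}\sum_i \sigma_i^2 = \frac{216 L d c_2^2 G^2 K}{n J^2}\sum_{i=1}^n \frac{1}{\epsilon_i^2}\log(1/\delta_i)$, linear in $K$. Writing $A := 13F^0+6L\|x^0\|^2+18Lb^2$ for the $K$-independent part and $S := \sum_{i=1}^n \frac{1}{\epsilon_i^2}\log(1/\delta_i)$, dividing by the $\sqrt{nK}$ denominator and using $K/\sqrt{nK}=\sqrt{K/n}$ converts the bound into the clean shape
\begin{equation*}
\frac{A}{\sqrt{n}}\cdot \frac{1}{\sqrt{K}} + \frac{216 L d c_2^2 G^2 S}{n^{3/2} J^2}\cdot \sqrt{K}.
\end{equation*}

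Next I would minimize this expression, which is of the canonical form $a/\sqrt{K}+b\sqrt{K}$ with $a=A/\sqrt{n}$ and $b=216 L d c_2^2 G^2 S/(n^{3/2}J^2)$. By AM--GM the minimizer balances the two terms, $a/\sqrt{K}=b\sqrt{K}$, giving $K^\star=a/b$; substituting the explicit $a,b$ collapses the factors $\sqrt{n}$ and $n^{3/2}$ into a single $n$ and reproduces exactly the choice \eqref{value_of_T_saga}. Evaluating at $K^\star$ replaces the bound by its minimum $2\sqrt{ab}$, and collecting constants (note $2\sqrt{216}=12\sqrt6$) together with the powers of $n$ yields precisely the right-hand side of \eqref{upper_bound_last_saga}.

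The algebra is routine once the $\sqrt{K}$ structure is exposed; the only care needed is bookkeeping of the numerical constant and the powers of $n$ so that $K^\star$ and the final bound match \eqref{value_of_T_saga} and \eqref{upper_bound_last_saga} to the letter. The one genuinely substantive point I expect to be the main obstacle is consistency of the chosen $K^\star$ with the validity regimes of the two invoked theorems: the lower bound $K\ge \hat K(C,q)$ demanded by Theorem~\ref{Theorem_1_saga} and the constraint $\epsilon_i<c_1 K/J^2$ demanded by Theorem~\ref{Theorem_3}. I would verify that in the small-privacy-budget regime of interest (small $\epsilon_i$, large $J$) the value $K^\star$ from \eqref{value_of_T_saga} satisfies both, so that the two results can legitimately be chained; this feasibility check is the only step that is more than arithmetic.
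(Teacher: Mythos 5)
Your proposal is correct and takes essentially the same approach as the paper: substitute the calibrated noise variance \eqref{noise_scale_saga} into \eqref{mian_inequal_saga} to expose a bound of the form $a/\sqrt{K}+b\sqrt{K}$, then minimize over $K$, which reproduces \eqref{value_of_T_saga} and \eqref{upper_bound_last_saga} exactly (the paper states this minimization in one line without spelling out the AM--GM balancing you describe). The feasibility issue you flag --- compatibility of $K^\star$ with $K\geqslant \hat{K}(C,q)$ from Theorem~\ref{Theorem_1_saga} and $\epsilon_i<c_1K/J^2$ from Theorem~\ref{Theorem_3} --- is not checked in the paper's proof either; it is absorbed into the corollary's hypothesis ``under the same conditions of Theorem~\ref{Theorem_1_saga} and~\ref{Theorem_3}.''
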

\begin{proof}
Substituting the Gaussian noise level in~\eqref{noise_scale_saga} into~\eqref{mian_inequal_saga}, we have
\begin{equation}
\begin{aligned}
\label{last_inequal_saga}
& \frac{1}{K}\sum_{k=0}^{K-1}{\frac{1}{n}\sum_{i=1}^n{\mathbb{E}\left[ \left\| \nabla f\left( z_{i}^{k} \right) \right\| ^2 \right]}}\leqslant  \frac{13 F^0 +6L\left\| x^0 \right\| ^2+18Lb^2}{\sqrt{nK}}
\\
&+\sqrt{K}\cdot \frac{216Ldc_{2}^{2}G^2}{J^2\sqrt{n}}\cdot \frac{1}{n}\sum_{i=1}^n{\frac{1}{\epsilon _{i}^{2}}\log \left( \frac{1}{\delta _i} \right)}.
\end{aligned}
\end{equation}
Regarding the right hand side of \eqref{last_inequal_saga} as a function of $K$, we can obtain the optimal value of $K$ (c.f.,~\eqref{value_of_T_saga}) 
and error bound (c.f.,~\eqref{upper_bound_last_saga}) by minimizing this function.
\end{proof}

\begin{Rem}
\label{remark_for_trade_off}
Corollary~\ref{proposition_saga} provides a valuable insight regarding the optimization of the total number of iterations $K$ under certain privacy budget. It shows that there exists an optimal value of $K$ that minimizes the error bound and thus maximizes the model accuracy. This suggests that simply increasing the value of $K$ will not necessarily lead to improved results, and it is important to carefully select the appropriate number of iterations. Furthermore, \eqref{upper_bound_last_saga} highlights the trade-off between privacy and maximized model utility. It shows that stronger privacy protection, represented by smaller privacy budget $(\epsilon_i,\delta_i)$, leads to a larger minimum error bound and thus worse maximized model utility. This finding underscores the inherent tension between privacy and model performance in decentralized learning scenarios.
% If follows from Corollary~\ref{proposition_saga} that the optimized total number of iterations $K$ under a given certain privacy budget results in the smallest error bound and in turn maximized model accuracy, meaning that simply increasing $K$ will not necessarily improve results. Additionally, \eqref{upper_bound_last_saga} highlights a trade-off between privacy and maximized model utility. Stronger privacy protection, represented by smaller privacy budget $(\epsilon_i,\delta_i)$, leads to a larger minimum error bound and worse maximized model utility, emphasizing the inherent tension between privacy and model performance in decentralized learning.
\end{Rem}

\begin{Rem} Note that if one set $\epsilon_i=\epsilon$ and $\delta_i=\delta$ for each node $i$, the utility bound of the proposed PrivSGP-VR~\eqref{upper_bound_last_saga} will be reduced to $\mathcal{O}\left( \sqrt{d\log \left( \frac{1}{\delta} \right)}/(\sqrt{n}J\epsilon) \right)$, achieving the same utility guarantee as differentially private learning algorithms with server-client structure, such as LDP SVRG/SPIDER~\cite{lowy2023private}, and SoteriaFL-SAGA/SVRG~\cite{li2022soteriafl} without communication compression ($\omega=0$), see Table~\ref{table_1}.
Due to the presence of network dynamics in a fully decentralized time-varying setting, it is not surprising that the proposed PrivSGP-VR requires more communication rounds than that of other sever-client distributed counterparts. 
In addition, PrivSGP-VR recovers the same utility $\mathcal{O}\left( \sqrt{d\log \left( \frac{1}{\delta} \right)}/(J\epsilon) \right)$ as the baseline DP-SGD~\cite{abadi2016deep} when $n=1$.
Furthermore, it can be observed that the utility bound of our PrivSGP-VR is tighter than that of the existing decentralized counterpart A(DP)$^2$SGD~\cite{xu2021dp}, exhibiting an extra factor of $1/\sqrt{n}$ improvement. To the best of our knowledge, we are the first to derive such a utility bound scaling as $1/\sqrt{n}$ with respect to the number of nodes in the realm of decentralized learning with DP guarantee for each node, for general non-convex problems.
\end{Rem}

\section{Experiments}
We conduct extensive experiments to validate the theoretical findings for the proposed PrivSGP-VR under various settings. All experiments are deployed in a high performance computer with Intel Xeon E5-2680 v4 CPU @ 2.40GHz and 8 Nvidia RTX 3090 GPUs, and are implemented with distributed communication package \textit{torch.distributed} in PyTorch, where a process serves as a node, and inter-process communication is used to mimic communication among nodes. We consider two non-convex learning tasks (i.e., deep CNN ResNet-18~\cite{he2016deep} training on Cifar-10 dataset~\cite{krizhevsky2009learning} and shallow 2-layer neural network training on Mnist dataset~\cite{deng2012mnist}), in fully decentralized setting. For all experiments, we split shuffled datasets evenly to $n$ nodes. For communication topology, unless otherwise stated, we use time-varying directed exponential graph (refer to Appendix~\ref{missing_definition_of_graph} for its definition) for our PrivSGP-VR. 
% Once the dataset and learning model are given, the problem-related parameters such as $L$ and $b^2$ can be estimated by leveraging the method introduced in~\cite{wang2019adaptive,luo2021cost}. 

\subsection{Deep CNN ResNet-18 training}
We first report the experiments of training CNN model ResNet-18 on Cifar-10 dataset.
Once the dataset and learning model are given, the problem-related parameters such as $L$ and $b^2$ can be estimated by leveraging the method introduced in~\cite{wang2019adaptive,luo2021cost}. The values of these parameters are $L=25, G^2=100, f(\bar{x}^0)-f^*=2.8, b^2=500000$ and $\left\| x^0 \right\| ^2 =780000$, for ResNet-18 training task.

\begin{figure}[!htpb]
 \vspace{-10pt}
\setlength{\abovecaptionskip}{-0cm}
  \centering
  \subfigure[Training loss]{
    \includegraphics[width=0.45\linewidth]{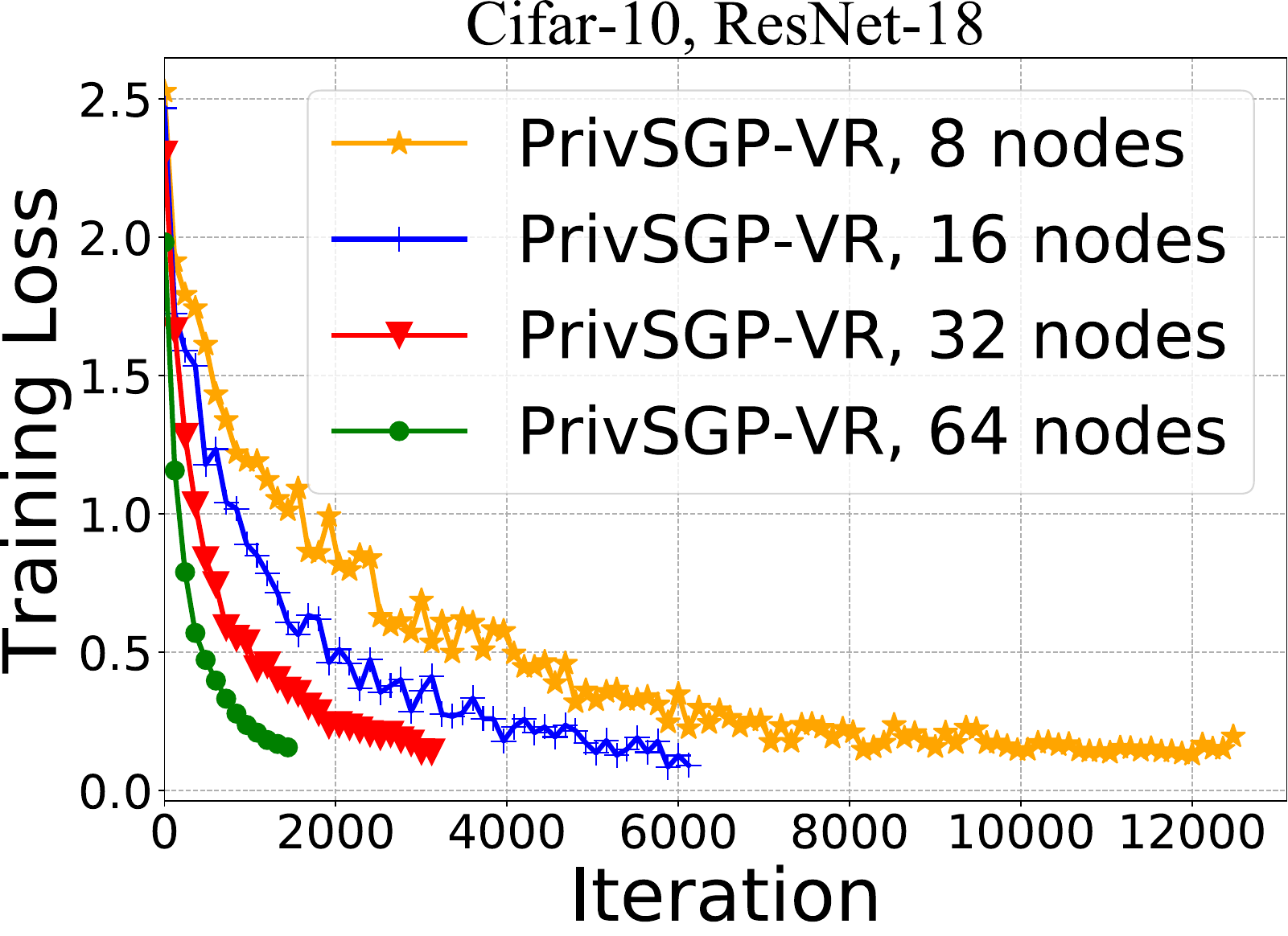}
    \label{loss_speed_up}
  }
  \hfill
  \subfigure[Testing accuracy]{
    \includegraphics[width=0.45\linewidth]{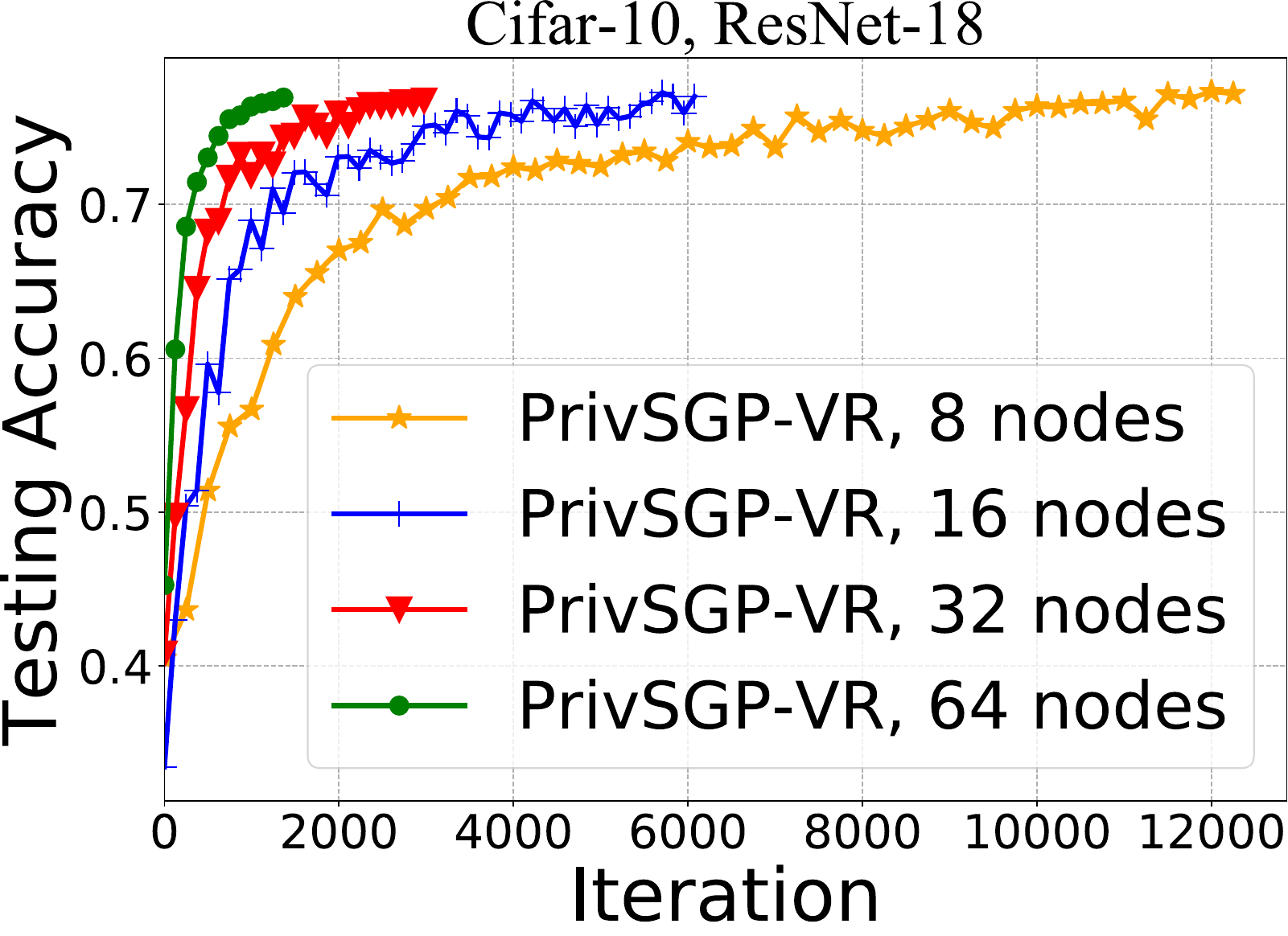}
    \label{acc_speed_up}
  }
  \caption{Comparison of convergence performance for PrivSGP-VR over 8, 16, 32 and 64 nodes under the same DP Gaussian noise variance, when training ResNet-18 on Cifar-10.}
  \label{speed up}
  \vspace{-10pt}
\end{figure}

\paragraph{Linear speedup under constant DP Gaussian noise variance.}
We first illustrate the convergence and scalability in terms of number of nodes $n$ of PrivSGP-VR.
In our experimental setup, we implement PrivSGP-VR on 4 distinct network configurations, comprising 8, 16, 32 and 64 nodes, respectively. All configurations utilize the same DP Gaussian noise variance $\sigma_i^2=0.03$ for each node $i$. 
It can be observed from Figure~\ref{speed up} that, by increasing the number of nodes by a factor of 2, we can achieve comparable final training loss and model testing accuracy by running only half the total number of iterations.
This observation illustrates the linear speedup property exhibited by our PrivSGP-VR algorithm.

% We first illustrate the convergence and scalability in terms of number of nodes $n$ of PrivSGP-VR.
% In our experimental setup, we implemented PrivSGP-VR on two distinct network configurations, comprising 8 and 16 nodes, respectively. Both configurations utilized the same DP Gaussian noise variance, with $\sigma_i^2=0.03$ for each node $i$. Notably, as we increased the number of nodes by a factor of 2, we correspondingly reduced the total number of iterations by the same factor.
% It can be observed from Figure~\ref{speed up} that, by employing 16 nodes, we achieved comparable final training loss and model accuracy to those obtained with 8 nodes. However, the key advantage lies in the fact that the 16-node configuration requires only half the total number of iterations needed by the 8-node setup. This observation vividly illustrates the linear speedup property exhibited by our PrivSGP-VR algorithm.

\begin{figure}[!htpb]
\vspace{-10pt}
\setlength{\abovecaptionskip}{-0cm}
  \centering
  \subfigure[Training loss]{
    \includegraphics[width=0.45\linewidth]{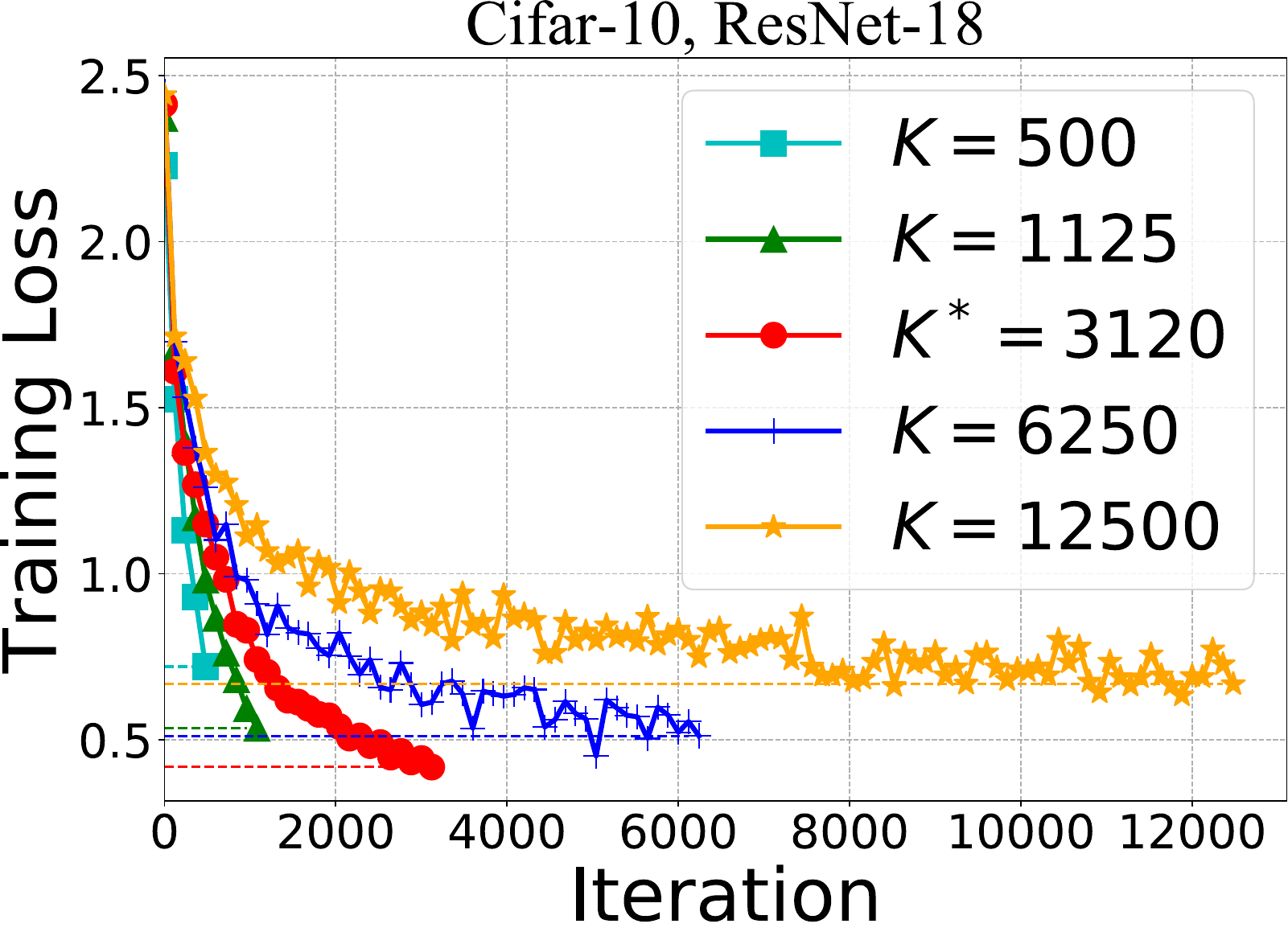}
    \label{loss_different_K}
  }
  \hfill
  \subfigure[Testing accuracy]{
    \includegraphics[width=0.45\linewidth]{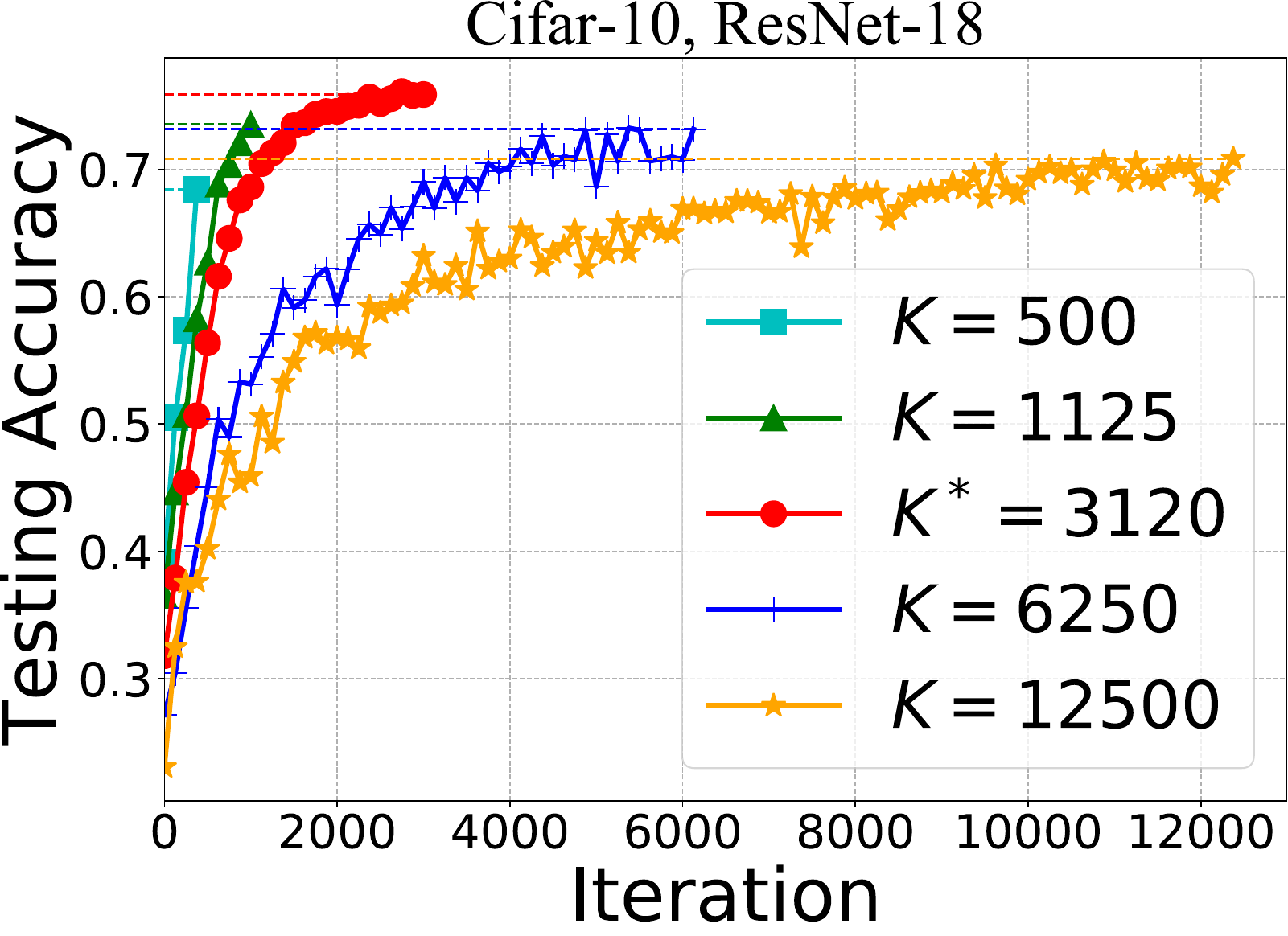}
    \label{acc_different_K}
  }
  \caption{Comparison of convergence performance for PrivSGP-VR over 16 nodes by setting different total number of iterations $K$ under a certain privacy budget, when training ResNet-18 on Cifar-10.}
  \label{different total iteration}
\vspace{-10pt}
\end{figure}

\paragraph{Optimizing number of iterations under certain privacy budget.}
We investigate the significance of selecting an appropriate total number of iterations $K$ for our proposed PrivSGP-VR given a specific privacy budget. To demonstrate this, we conduct experiments using PrivSGP-VR on a network consisting of 16 nodes. For each node $i$, we set the privacy budget to $\epsilon_i=3$ and $\delta_i=10^{-5}$. By varying the value of $K$, we aim to highlight the impact of this parameter on the overall performance of PrivSGP-VR.
Using~\eqref{value_of_T_saga}, we determine the optimal value of $K$ to be approximately 3120. In addition to this optimal choice, we also consider other values of $K$ for comparison: 500, 1125, 6250, and 12500.
For each chosen value of $K$, to guarantee the given privacy budget, we add DP Gaussian noise with variance $\sigma_i^2$ calculated according to~\eqref{noise_scale_saga}.
% The results presented in Figure~\ref{different total iteration} demonstrate the substantial impact that the total number of iterations $K$ can have on both the training loss and testing accuracy. 
The results illustrated in Figure~\ref{different total iteration} demonstrate that the total number of iterations $K$ has a substantial impact on both training loss and testing accuracy.
It is evident that selecting the proper value of $K=3120$, as determined by our proposed approach, leads to the minimized loss and maximized accuracy. On the other hand, if a larger value of $K$ (e.g., 12500) or a smaller value (e.g., 500) is chosen, the training loss becomes larger and the model testing accuracy is lower. These findings validate the importance of selecting an appropriate value for $K$ to ensure optimal performance of PrivSGP-VR under a certain privacy budget.

\begin{figure}[!htpb]
% \vspace{-5pt}
\setlength{\abovecaptionskip}{0.2cm}
\centering
\includegraphics[width=0.7\columnwidth]{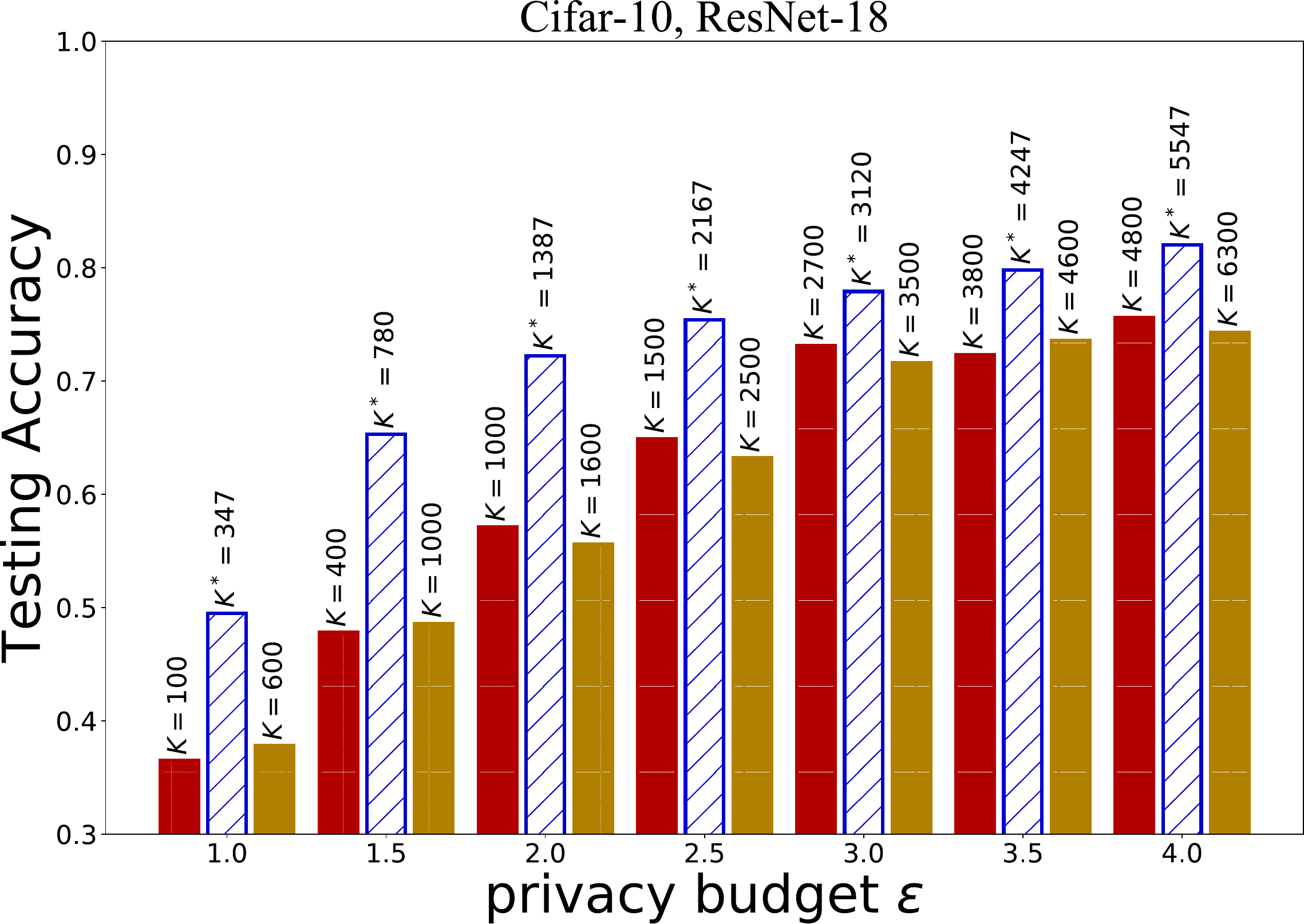} 
\caption{Performance of running PrivSGP-VR for $K^*$ ($K$) iterations under different certain privacy budgets $\epsilon$, when training ResNet-18 on Cifar-10.}
\label{trade_off_utility_privacy}
% \vspace{-10pt}
\end{figure}

\paragraph{Trade off between the maximized model utility and privacy guarantee.}
We conduct experiments by deploying the PrivSGP-VR algorithm on a network consisting of 16 nodes with a fixed value of $\delta=10^{-5}$ for each node. The $\epsilon$ value for each node is varied from the set $\{1, 1.5, 2, 2.5, 3, 3.5, 4\}$.
For each value of $\epsilon$, we determine the optimal total number of iterations $K^*$ using equation~\eqref{value_of_T_saga}. Then, we execute PrivSGP-VR for $K^*$ iterations, along with two other $K$ values for comparative analysis. We incorporate the corresponding DP Gaussian noise with variance calculated according to equation~\eqref{noise_scale_saga}.
Figure~\ref{trade_off_utility_privacy} illustrates the trade-off between model utility (testing accuracy) and privacy under the optimized number of iterations. As the privacy budget $\epsilon$ diminishes (indicating a higher level of privacy protection), the maximized model utility deteriorates. This trade-off between privacy and maximized utility aligns with the theoretical insights outlined in Remark~\ref{remark_for_trade_off}.

\begin{figure}[!htpb]
\vspace{-10pt}
\setlength{\abovecaptionskip}{-0cm}
  \centering
  \subfigure[Training loss]{
    \includegraphics[width=0.45\linewidth]{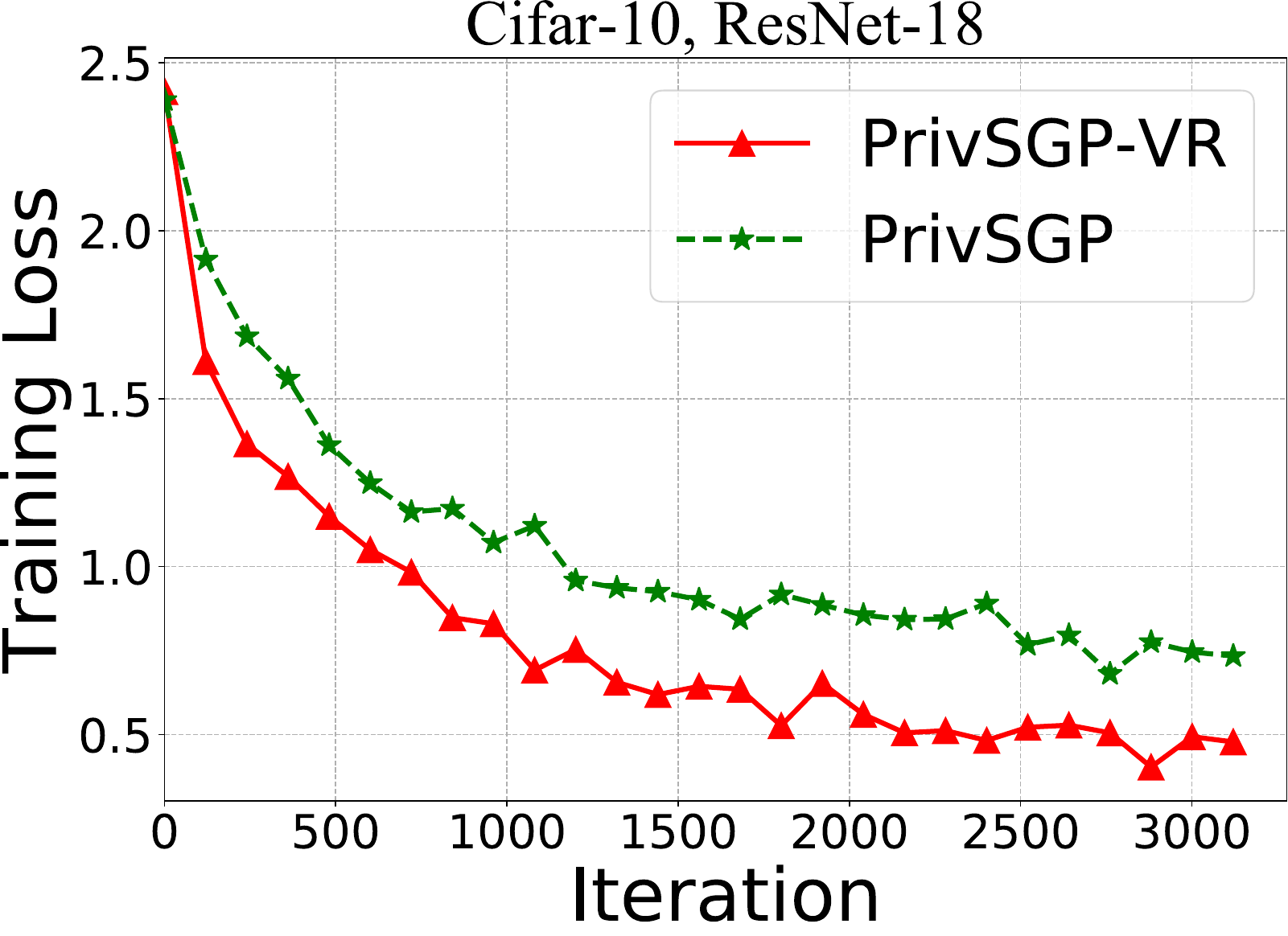}
    \label{loss_VR}
  }
  \hfill
  \subfigure[Testing accuracy]{
    \includegraphics[width=0.45\linewidth]{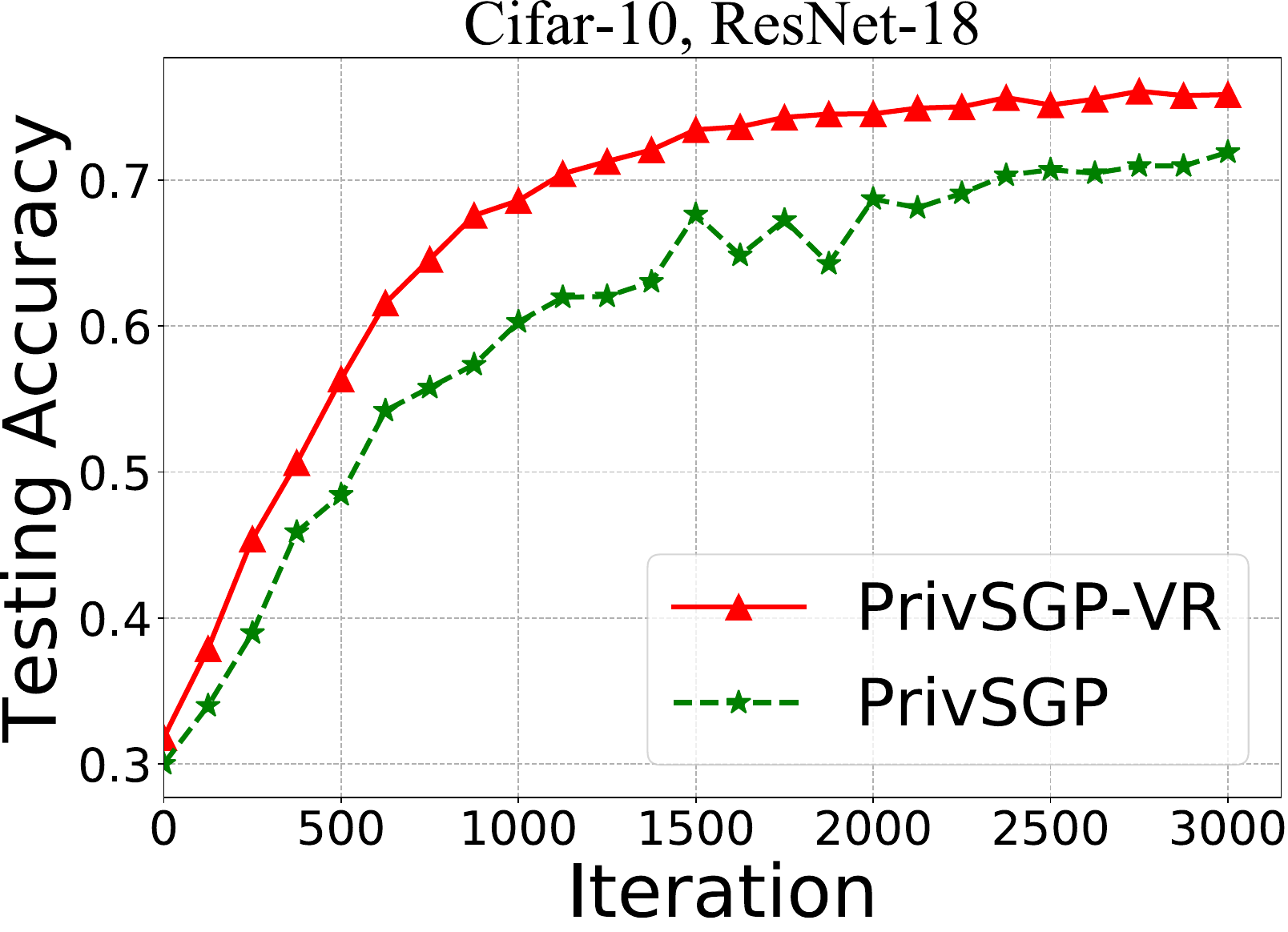}
    \label{acc_VR}
  }
  \caption{Comparison of convergence performance for PrivSGP-VR with PrivSGP over 16 nodes under the same DP Gaussian noise variance, when training ResNet-18 on Cifar-10.}
  \label{variance reduction}
\vspace{-10pt}
\end{figure}

\paragraph{Verifying the effectiveness of variance reduction technique.}
To validate the effectiveness of the variance reduction technique employed by PrivSGP-VR, we conducted experiments to compare PrivSGP-VR with PrivSGP (Algorithm~\ref{PrivSGP}, without the variance reduction technique). For fair comparisons, we applied DP Gaussian noise with an identical variance of $\sigma_i^2=0.03$ for both PrivSGP-VR and PrivSGP. Moreover, both algorithms were executed for a fixed number of 3000 iterations. The results, as depicted in Figure~\ref{variance reduction}, clearly illustrate that PrivSGP-VR outperforms PrivSGP in terms of both training loss and model testing accuracy. 
This validates the effectiveness of the variance reduction technique integrated into PrivSGP-VR.

\begin{figure}[!htpb]
\vspace{-10pt}
\setlength{\abovecaptionskip}{-0cm}
  \centering
  \subfigure[Training loss]{
    \includegraphics[width=0.45\linewidth]{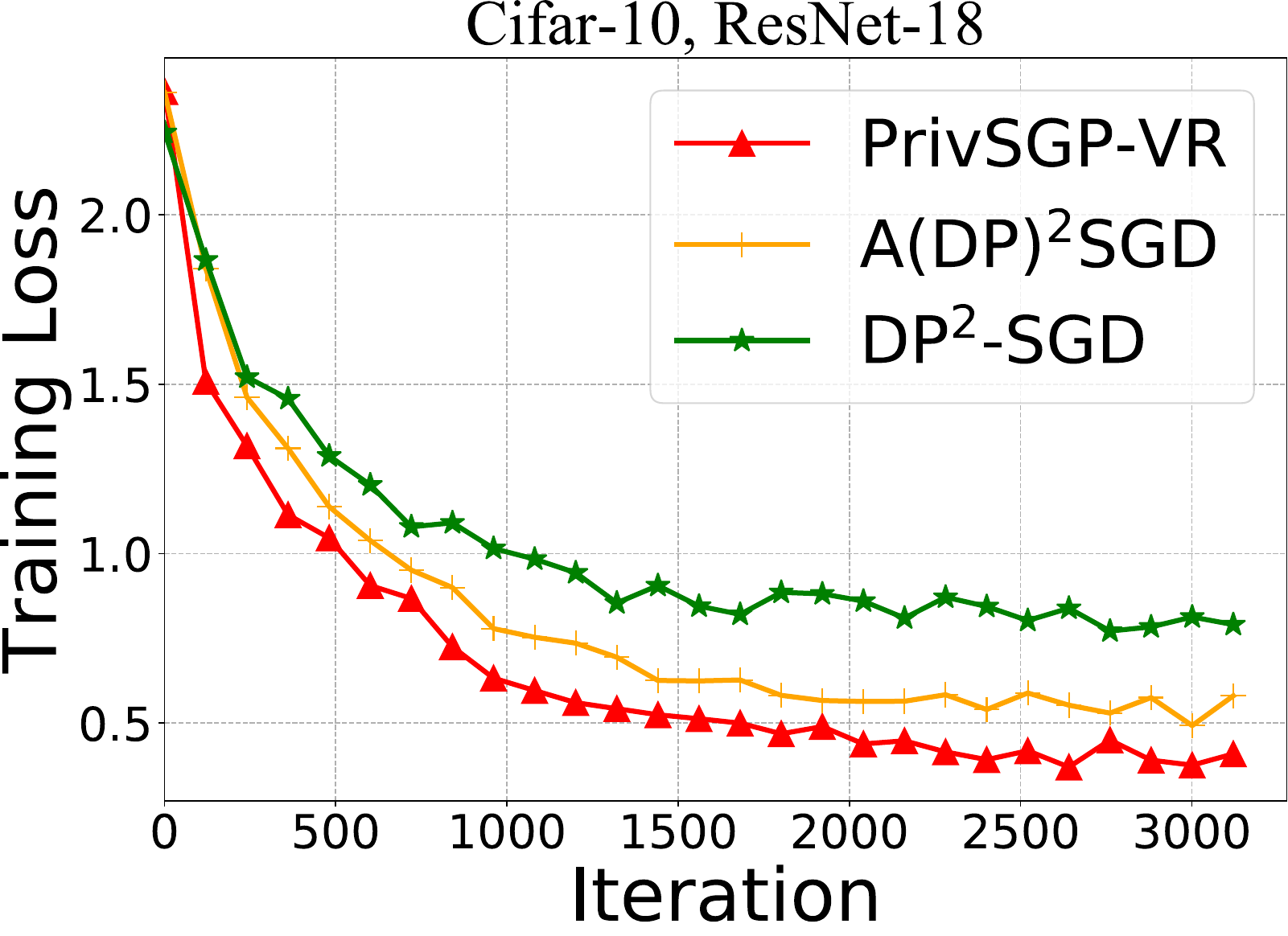}
    \label{loss_compare_alg}
  }
  \hfill
  \subfigure[Testing accuracy]{
    \includegraphics[width=0.45\linewidth]{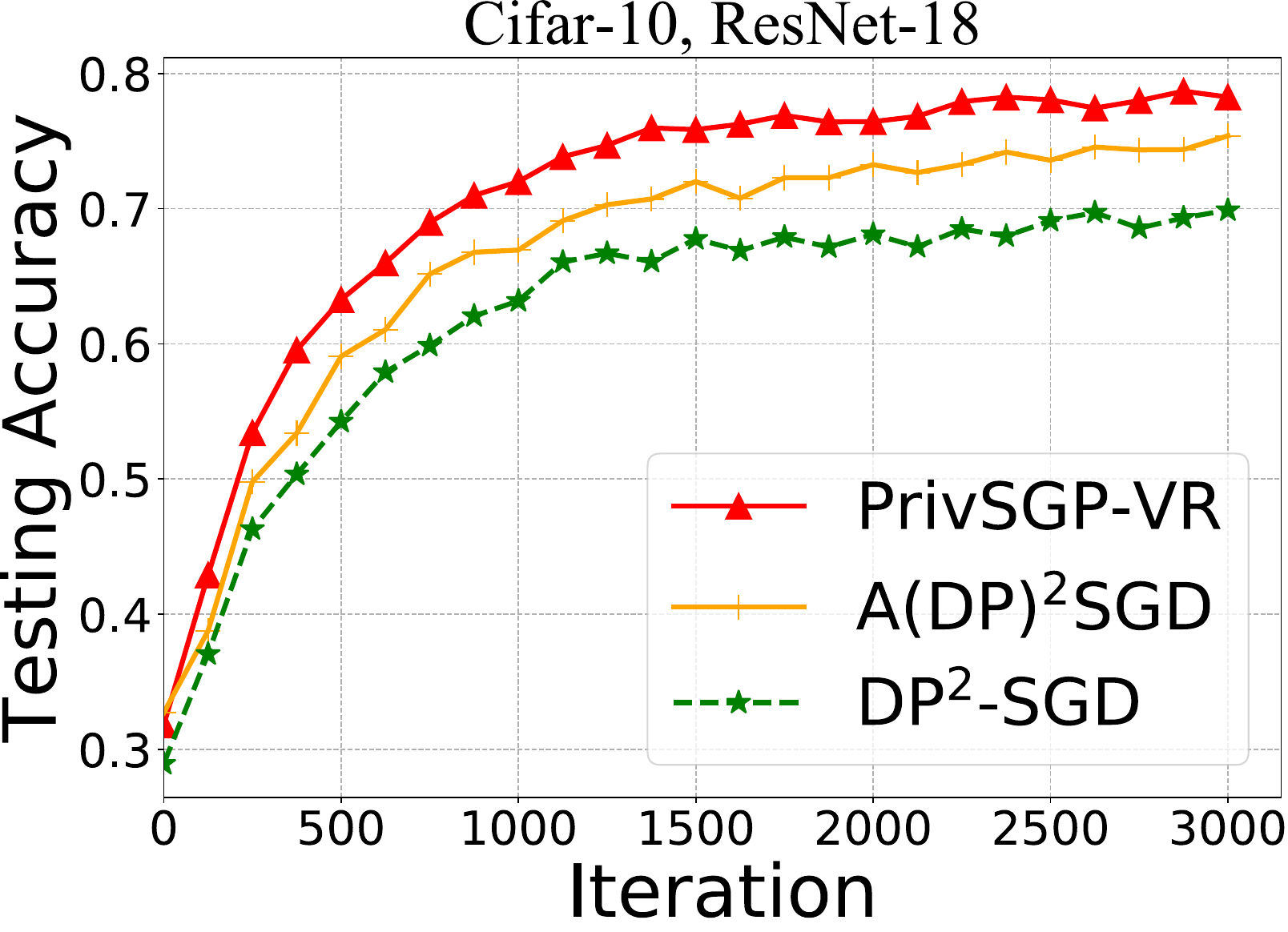}
    \label{acc_compare_alg}
  }
  \caption{Comparison of convergence performance for PrivSGP-VR with DP$^2$-SGD and A(DP)$^2$SGD over 16 nodes with $(3,10^{-5})$-DP guarantee for each node, when training ResNet-18 on Cifar-10.}
  \label{compare_methods}
\vspace{-10pt}
\end{figure}

\paragraph{Comparison with existing decentralized counterparts.}
We present experiments to compare the performance of PrivSGP-VR with other fully decentralized private stochastic algorithms DP$^2$-SGD and A(DP)$^2$SGD.
We implement all three algorithms on an undirected ring graph consisting of 16 nodes. 
The results shown in Figure~\ref{compare_methods} demonstrate that, under $(3,10^{-5})$-DP guarantee for each node, our PrivSGP-VR outperforms DP$^2$-SGD and A(DP)$^2$SGD in that PrivSGP-VR converges faster than the other two algorithms in both training loss and testing accuracy.

\subsection{Shallow 2-layer neural network training}
We also provide additional experimental results for training 2-layer neural network on Mnist dataset which can be found in Appendix~\ref{extra_experiments}, and the experimental results under various settings are aligned with that of training ResNet-18 on Cifar-10 dataset.

\section{Conclusion}
We have proposed a differentially private decentralized learning method over time-varying directed communication topologies, termed PrivSGP-VR.
Our analysis shows that under DP Gaussian noise with constant variance, PrivSGP-VR converges at a sub-linear rate $\mathcal{O}(1/\sqrt{nK})$ which is independent of stochastic gradient variance.
When given a certain privacy budget for each node, leveraging the moments accountant method, we derive an optimal number of iterations $K$ to maximize the model utility.
With this optimized $K$, we achieve a tight utility bound which matches that of the server-client distributed counterparts, and exhibits an extra factor of $1/\sqrt{n}$ improvement compared to that of the existing decentralized counterparts. Extensive experiments are conducted to validate our theoretical findings.

\section*{Acknowledgments}
This work is supported in parts by the National
Key R\&D Program of China under Grant No. 2022YFB3102100, and in parts by National Natural Science Foundation of China under Grants 62373323, 62088101, 62003302.

% \appendix

%% The file named.bst is a bibliography style file for BibTeX 0.99c
\bibliographystyle{named}
\bibliography{ijcai24}

\begin{thebibliography}{}

\bibitem[\protect\citeauthoryear{Abadi \bgroup \em et al.\egroup }{2016}]{abadi2016deep}
Martin Abadi, Andy Chu, Ian Goodfellow, H~Brendan McMahan, Ilya Mironov, Kunal Talwar, and Li~Zhang.
\newblock Deep learning with differential privacy.
\newblock In {\em Proceedings of the 2016 ACM SIGSAC conference on computer and communications security}, pages 308--318, 2016.

\bibitem[\protect\citeauthoryear{Assran \bgroup \em et al.\egroup }{2019}]{assran2019stochastic}
Mahmoud Assran, Nicolas Loizou, Nicolas Ballas, and Mike Rabbat.
\newblock Stochastic gradient push for distributed deep learning.
\newblock In {\em International Conference on Machine Learning}, pages 344--353. PMLR, 2019.

\bibitem[\protect\citeauthoryear{Bun and Steinke}{2016}]{bun2016concentrated}
Mark Bun and Thomas Steinke.
\newblock Concentrated differential privacy: Simplifications, extensions, and lower bounds.
\newblock In {\em Theory of Cryptography: 14th International Conference, TCC 2016-B, Beijing, China, October 31-November 3, 2016, Proceedings, Part I}, pages 635--658. Springer, 2016.

\bibitem[\protect\citeauthoryear{Carlini \bgroup \em et al.\egroup }{2019}]{carlini2019secret}
Nicholas Carlini, Chang Liu, {\'U}lfar Erlingsson, Jernej Kos, and Dawn Song.
\newblock The secret sharer: Evaluating and testing unintended memorization in neural networks.
\newblock In {\em USENIX Security Symposium}, volume 267, 2019.

\bibitem[\protect\citeauthoryear{Chen \bgroup \em et al.\egroup }{2020}]{chen2020understanding}
Xiangyi Chen, Steven~Z Wu, and Mingyi Hong.
\newblock Understanding gradient clipping in private sgd: A geometric perspective.
\newblock {\em Advances in Neural Information Processing Systems}, 33:13773--13782, 2020.

\bibitem[\protect\citeauthoryear{Cheng \bgroup \em et al.\egroup }{2018}]{cheng2018leasgd}
Hsin-Pai Cheng, Patrick Yu, Haojing Hu, Feng Yan, Shiyu Li, Hai Li, and Yiran Chen.
\newblock Leasgd: an efficient and privacy-preserving decentralized algorithm for distributed learning.
\newblock {\em arXiv preprint arXiv:1811.11124}, 2018.

\bibitem[\protect\citeauthoryear{Cheng \bgroup \em et al.\egroup }{2019}]{cheng2019towards}
Hsin-Pai Cheng, Patrick Yu, Haojing Hu, Syed Zawad, Feng Yan, Shiyu Li, Hai Li, and Yiran Chen.
\newblock Towards decentralized deep learning with differential privacy.
\newblock In {\em International Conference on Cloud Computing}, pages 130--145. Springer, 2019.

\bibitem[\protect\citeauthoryear{Defazio \bgroup \em et al.\egroup }{2014}]{defazio2014saga}
Aaron Defazio, Francis Bach, and Simon Lacoste-Julien.
\newblock Saga: A fast incremental gradient method with support for non-strongly convex composite objectives.
\newblock {\em Advances in neural information processing systems}, 27, 2014.

\bibitem[\protect\citeauthoryear{Deng}{2012}]{deng2012mnist}
Li~Deng.
\newblock The mnist database of handwritten digit images for machine learning research [best of the web].
\newblock {\em IEEE signal processing magazine}, 29(6):141--142, 2012.

\bibitem[\protect\citeauthoryear{Dwork and Lei}{2009}]{dwork2009differential}
Cynthia Dwork and Jing Lei.
\newblock Differential privacy and robust statistics.
\newblock In {\em Proceedings of the forty-first annual ACM symposium on Theory of computing}, pages 371--380, 2009.

\bibitem[\protect\citeauthoryear{Dwork \bgroup \em et al.\egroup }{2006}]{dwork2006our}
Cynthia Dwork, Krishnaram Kenthapadi, Frank McSherry, Ilya Mironov, and Moni Naor.
\newblock Our data, ourselves: Privacy via distributed noise generation.
\newblock In {\em Annual international conference on the theory and applications of cryptographic techniques}, pages 486--503. Springer, 2006.

\bibitem[\protect\citeauthoryear{Dwork \bgroup \em et al.\egroup }{2010}]{dwork2010boosting}
Cynthia Dwork, Guy~N Rothblum, and Salil Vadhan.
\newblock Boosting and differential privacy.
\newblock In {\em 2010 IEEE 51st Annual Symposium on Foundations of Computer Science}, pages 51--60. IEEE, 2010.

\bibitem[\protect\citeauthoryear{Dwork \bgroup \em et al.\egroup }{2014}]{dwork2014algorithmic}
Cynthia Dwork, Aaron Roth, et~al.
\newblock The algorithmic foundations of differential privacy.
\newblock {\em Foundations and Trends{\textregistered} in Theoretical Computer Science}, 9(3--4):211--407, 2014.

\bibitem[\protect\citeauthoryear{He \bgroup \em et al.\egroup }{2016}]{he2016deep}
Kaiming He, Xiangyu Zhang, Shaoqing Ren, and Jian Sun.
\newblock Deep residual learning for image recognition.
\newblock In {\em Proceedings of the IEEE conference on computer vision and pattern recognition}, pages 770--778, 2016.

\bibitem[\protect\citeauthoryear{Kempe \bgroup \em et al.\egroup }{2003}]{kempe2003gossip}
David Kempe, Alin Dobra, and Johannes Gehrke.
\newblock Gossip-based computation of aggregate information.
\newblock In {\em 44th Annual IEEE Symposium on Foundations of Computer Science, 2003. Proceedings.}, pages 482--491. IEEE, 2003.

\bibitem[\protect\citeauthoryear{Krizhevsky \bgroup \em et al.\egroup }{2009}]{krizhevsky2009learning}
Alex Krizhevsky, Geoffrey Hinton, et~al.
\newblock Learning multiple layers of features from tiny images.
\newblock 2009.

\bibitem[\protect\citeauthoryear{Langer \bgroup \em et al.\egroup }{2020}]{langer2020distributed}
Matthias Langer, Zhen He, Wenny Rahayu, and Yanbo Xue.
\newblock Distributed training of deep learning models: A taxonomic perspective.
\newblock {\em IEEE Transactions on Parallel and Distributed Systems}, 31(12):2802--2818, 2020.

\bibitem[\protect\citeauthoryear{Li \bgroup \em et al.\egroup }{2014}]{li2014scaling}
Mu~Li, David~G Andersen, Jun~Woo Park, Alexander~J Smola, Amr Ahmed, Vanja Josifovski, James Long, Eugene~J Shekita, and Bor-Yiing Su.
\newblock Scaling distributed machine learning with the parameter server.
\newblock In {\em 11th $\{$USENIX$\}$ Symposium on Operating Systems Design and Implementation ($\{$OSDI$\}$ 14)}, pages 583--598, 2014.

\bibitem[\protect\citeauthoryear{Li \bgroup \em et al.\egroup }{2019}]{li2019asynchronous}
Yanan Li, Shusen Yang, Xuebin Ren, and Cong Zhao.
\newblock Asynchronous federated learning with differential privacy for edge intelligence.
\newblock {\em arXiv preprint arXiv:1912.07902}, 2019.

\bibitem[\protect\citeauthoryear{Li \bgroup \em et al.\egroup }{2022}]{li2022soteriafl}
Zhize Li, Haoyu Zhao, Boyue Li, and Yuejie Chi.
\newblock Soteriafl: A unified framework for private federated learning with communication compression.
\newblock {\em Advances in Neural Information Processing Systems}, 35:4285--4300, 2022.

\bibitem[\protect\citeauthoryear{Lian \bgroup \em et al.\egroup }{2017}]{lian2017can}
Xiangru Lian, Ce~Zhang, Huan Zhang, Cho-Jui Hsieh, Wei Zhang, and Ji~Liu.
\newblock Can decentralized algorithms outperform centralized algorithms? a case study for decentralized parallel stochastic gradient descent.
\newblock {\em Advances in Neural Information Processing Systems}, 30, 2017.

\bibitem[\protect\citeauthoryear{Lian \bgroup \em et al.\egroup }{2018}]{lian2018asynchronous}
Xiangru Lian, Wei Zhang, Ce~Zhang, and Ji~Liu.
\newblock Asynchronous decentralized parallel stochastic gradient descent.
\newblock In {\em International Conference on Machine Learning}, pages 3043--3052. PMLR, 2018.

\bibitem[\protect\citeauthoryear{Liu \bgroup \em et al.\egroup }{2022}]{liu2022loss}
Tianyu Liu, Boya Di, Bin Wang, and Lingyang Song.
\newblock Loss-privacy tradeoff in federated edge learning.
\newblock {\em IEEE Journal of Selected Topics in Signal Processing}, 16(3):546--558, 2022.

\bibitem[\protect\citeauthoryear{Lowy \bgroup \em et al.\egroup }{2023}]{lowy2023private}
Andrew Lowy, Ali Ghafelebashi, and Meisam Razaviyayn.
\newblock Private non-convex federated learning without a trusted server, 2023.

\bibitem[\protect\citeauthoryear{Luo \bgroup \em et al.\egroup }{2021}]{luo2021cost}
Bing Luo, Xiang Li, Shiqiang Wang, Jianwei Huang, and Leandros Tassiulas.
\newblock Cost-effective federated learning design.
\newblock In {\em IEEE INFOCOM 2021-IEEE Conference on Computer Communications}, pages 1--10. IEEE, 2021.

\bibitem[\protect\citeauthoryear{McMahan \bgroup \em et al.\egroup }{2017}]{mcmahan2017communication}
Brendan McMahan, Eider Moore, Daniel Ramage, Seth Hampson, and Blaise~Aguera y~Arcas.
\newblock Communication-efficient learning of deep networks from decentralized data.
\newblock In {\em Artificial intelligence and statistics}, pages 1273--1282. PMLR, 2017.

\bibitem[\protect\citeauthoryear{Mironov}{2017}]{mironov2017renyi}
Ilya Mironov.
\newblock R{\'e}nyi differential privacy.
\newblock In {\em 2017 IEEE 30th computer security foundations symposium (CSF)}, pages 263--275. IEEE, 2017.

\bibitem[\protect\citeauthoryear{Truex \bgroup \em et al.\egroup }{2019}]{truex2019hybrid}
Stacey Truex, Nathalie Baracaldo, Ali Anwar, Thomas Steinke, Heiko Ludwig, Rui Zhang, and Yi~Zhou.
\newblock A hybrid approach to privacy-preserving federated learning.
\newblock In {\em Proceedings of the 12th ACM workshop on artificial intelligence and security}, pages 1--11, 2019.

\bibitem[\protect\citeauthoryear{Truex \bgroup \em et al.\egroup }{2020}]{truex2020ldp}
Stacey Truex, Ling Liu, Ka-Ho Chow, Mehmet~Emre Gursoy, and Wenqi Wei.
\newblock Ldp-fed: Federated learning with local differential privacy.
\newblock In {\em Proceedings of the Third ACM International Workshop on Edge Systems, Analytics and Networking}, pages 61--66, 2020.

\bibitem[\protect\citeauthoryear{Wang and Nedic}{2022}]{wang2022tailoring}
Yongqiang Wang and Angelia Nedic.
\newblock Tailoring gradient methods for differentially-private distributed optimization.
\newblock {\em arXiv preprint arXiv:2202.01113}, 2022.

\bibitem[\protect\citeauthoryear{Wang \bgroup \em et al.\egroup }{2017}]{wang2017differentially}
Di~Wang, Minwei Ye, and Jinhui Xu.
\newblock Differentially private empirical risk minimization revisited: Faster and more general.
\newblock {\em Advances in Neural Information Processing Systems}, 30, 2017.

\bibitem[\protect\citeauthoryear{Wang \bgroup \em et al.\egroup }{2019a}]{wang2019efficient}
Lingxiao Wang, Bargav Jayaraman, David Evans, and Quanquan Gu.
\newblock Efficient privacy-preserving stochastic nonconvex optimization.
\newblock {\em arXiv e-prints}, pages arXiv--1910, 2019.

\bibitem[\protect\citeauthoryear{Wang \bgroup \em et al.\egroup }{2019b}]{wang2019adaptive}
Shiqiang Wang, Tiffany Tuor, Theodoros Salonidis, Kin~K Leung, Christian Makaya, Ting He, and Kevin Chan.
\newblock Adaptive federated learning in resource constrained edge computing systems.
\newblock {\em IEEE journal on selected areas in communications}, 37(6):1205--1221, 2019.

\bibitem[\protect\citeauthoryear{Wang \bgroup \em et al.\egroup }{2019c}]{wang2019beyond}
Zhibo Wang, Mengkai Song, Zhifei Zhang, Yang Song, Qian Wang, and Hairong Qi.
\newblock Beyond inferring class representatives: User-level privacy leakage from federated learning.
\newblock In {\em IEEE INFOCOM 2019-IEEE conference on computer communications}, pages 2512--2520. IEEE, 2019.

\bibitem[\protect\citeauthoryear{Wang \bgroup \em et al.\egroup }{2020}]{wang2020differentially}
Di~Wang, Hanshen Xiao, Srinivas Devadas, and Jinhui Xu.
\newblock On differentially private stochastic convex optimization with heavy-tailed data.
\newblock In {\em International Conference on Machine Learning}, pages 10081--10091. PMLR, 2020.

\bibitem[\protect\citeauthoryear{Wang \bgroup \em et al.\egroup }{2023}]{wang2023efficient}
Lingxiao Wang, Bargav Jayaraman, David Evans, and Quanquan Gu.
\newblock Efficient privacy-preserving stochastic nonconvex optimization.
\newblock In {\em Uncertainty in Artificial Intelligence}, pages 2203--2213. PMLR, 2023.

\bibitem[\protect\citeauthoryear{Wei \bgroup \em et al.\egroup }{2020}]{wei2020federated}
Kang Wei, Jun Li, Ming Ding, Chuan Ma, Howard~H Yang, Farhad Farokhi, Shi Jin, Tony~QS Quek, and H~Vincent Poor.
\newblock Federated learning with differential privacy: Algorithms and performance analysis.
\newblock {\em IEEE Transactions on Information Forensics and Security}, 15:3454--3469, 2020.

\bibitem[\protect\citeauthoryear{Wei \bgroup \em et al.\egroup }{2021}]{wei2021user}
Kang Wei, Jun Li, Ming Ding, Chuan Ma, Hang Su, Bo~Zhang, and H~Vincent Poor.
\newblock User-level privacy-preserving federated learning: Analysis and performance optimization.
\newblock {\em IEEE Transactions on Mobile Computing}, 21(9):3388--3401, 2021.

\bibitem[\protect\citeauthoryear{Wu \bgroup \em et al.\egroup }{2020}]{wu2020value}
Nan Wu, Farhad Farokhi, David Smith, and Mohamed~Ali Kaafar.
\newblock The value of collaboration in convex machine learning with differential privacy.
\newblock In {\em 2020 IEEE Symposium on Security and Privacy (SP)}, pages 304--317. IEEE, 2020.

\bibitem[\protect\citeauthoryear{Xu \bgroup \em et al.\egroup }{2021}]{xu2021dp}
Jie Xu, Wei Zhang, and Fei Wang.
\newblock A (dp)\^{} 2sgd: Asynchronous decentralized parallel stochastic gradient descent with differential privacy.
\newblock {\em IEEE transactions on pattern analysis and machine intelligence}, 2021.

\bibitem[\protect\citeauthoryear{Yu \bgroup \em et al.\egroup }{2021}]{yu2021decentralized}
Dongxiao Yu, Zongrui Zou, Shuzhen Chen, Youming Tao, Bing Tian, Weifeng Lv, and Xiuzhen Cheng.
\newblock Decentralized parallel sgd with privacy preservation in vehicular networks.
\newblock {\em IEEE Transactions on Vehicular Technology}, 70(6):5211--5220, 2021.

\bibitem[\protect\citeauthoryear{Zeng \bgroup \em et al.\egroup }{2021}]{zeng2021differentially}
Yiming Zeng, Yixuan Lin, Yuanyuan Yang, and Ji~Liu.
\newblock Differentially private federated temporal difference learning.
\newblock {\em IEEE Transactions on Parallel \& Distributed Systems}, (01):1--1, 2021.

\bibitem[\protect\citeauthoryear{Zhang \bgroup \em et al.\egroup }{2020}]{zhang2020private}
Xin Zhang, Minghong Fang, Jia Liu, and Zhengyuan Zhu.
\newblock Private and communication-efficient edge learning: a sparse differential gaussian-masking distributed sgd approach.
\newblock In {\em Proceedings of the Twenty-First International Symposium on Theory, Algorithmic Foundations, and Protocol Design for Mobile Networks and Mobile Computing}, pages 261--270, 2020.

\bibitem[\protect\citeauthoryear{Zhou \bgroup \em et al.\egroup }{2023}]{zhou2023optimizing}
Yipeng Zhou, Xuezheng Liu, Yao Fu, Di~Wu, Jessie~Hui Wang, and Shui Yu.
\newblock Optimizing the numbers of queries and replies in convex federated learning with differential privacy.
\newblock {\em IEEE Transactions on Dependable and Secure Computing}, 2023.

\end{thebibliography}

\onecolumn
\appendix
\begin{center}
\LARGE{\textbf{Appendix}}
\end{center}
{\footnotesize
\tableofcontents
}

\section{Proof of Main Result}
\label{proof_of_theorem}
% \paragraph{Matrix Representation.}
To facilitate our  analysis, we first rewrite the $12^{th}$ step of the proposed PrivSGP-VR (c.f., Algorithm~\ref{PrivSGP-VR}) in a compact form: 
\begin{equation}
\label{iterate_saga}
X^{k+1}=\left( X^k-\gamma \left( G^k+N^k \right) \right) \left( P^k \right) ^{\top},
\end{equation}
where $\left( P^k \right) ^{\top}\in \mathbb{R}^{n\times n}
$ is the transpose of the mixing matrix $P^k$ at iteration $k$, and

$X^{k}:=\left[ x_{1}^{k},x_{2}^{k},\cdot \cdot \cdot ,x_{n}^{k} \right] \in \mathbb{R}^{d\times n}
$: the collection of all nodes' parameters at iteration $k$;

% $Z^{k}:=\left[ z_{1}^{k},z_{2}^{k},\cdot \cdot \cdot ,z_{n}^{k} \right] \in \mathbb{R}^{d\times n}$: the collection of all the nodes' de-biased parameters at iteration $k$; \zh{not used, delete???}

$G^k:=\left[ g_{1}^{k},g_{1}^{k},\cdot \cdot \cdot ,g_{n}^{k} \right] \in \mathbb{R}^{d\times n}$: the collection of all nodes' corrected gradients at iteration $k$;

$N^k:=\left[ N_{1}^{k},N_{2}^{k},\cdot \cdot \cdot ,N_{n}^{k} \right] \in \mathbb{R}^{d\times n}$: the collection of all nodes' added Gaussian noises at iteration $k$.
% from a global viewpoint, as contrast to Algorithm~\ref{PrivSGP-VR} where it is presented from each node $i$’s viewpoint. 
% To this end, we define the following matrices:
% We first give some denotations to help the analysis.
% With the above definitions, Algorithm~\ref{PrivSGP-VR} can be rewritten in a compact form as

Now, let $\bar{x}^k=\frac{1}{n}X^k\mathbf{1}=\frac{1}{n}\sum_{i=1}^n{x_{i}^{k}}\in \mathbb{R}^d
$ denote the average of all nodes' parameters at iteration $k$. Then, the update of average system of~\eqref{iterate_saga} becomes
\begin{equation}
\label{average_system_saga}
\bar{x}^{k+1}=\bar{x}^k-\gamma \cdot \left( \frac{1}{n}\sum_{i=1}^n{g_{i}^{k}}+\frac{1}{n}\sum_{i=1}^n{N_{i}^{k}} \right) ,
\end{equation}
which can be easily obtained by right multiplying $\frac{1}{n}\mathbf{1}$ from both sides of~\eqref{iterate_saga}
and using the column-stochastic property of $P^k$ (c.f., Assumption~\ref{Ass_weight_matrix}). The above average system will be useful in subsequent analysis.

In addition, we denote by $\mathcal{F}^k$ the history sequence $\left\{ \bigcup\nolimits_{i=1}^n{\left( x_{i}^{0},z_{i}^{0},\xi _{i}^{0},N_{i}^{0},\cdot \cdot \cdot ,x_{i}^{k-1},z_{i}^{k-1},\xi _{i}^{k-1},N_{i}^{k-1},x_{i}^{k},z_{i}^{k} \right)} \right\} 
$, and define $\mathbb{E}\left[ \cdot \left| \mathcal{F}^k \right. \right] 
$ as the conditional expectation given $\mathcal{F}^k$.

\subsection{Important Upper Bounds}\label{important_upper_bounds}
In this section, we first provide several technical lemmas  to facilitate the subsequent analysis.

\begin{Lem}
\label{supporting_lemma_2}
Let $\left\{ v^k \right\} _{k=0}^{\infty}$ be a non-negative sequence and $\lambda \in (0,1)$. Then, we have
\begin{equation}
    \left( \sum_{l=0}^k{\lambda ^{k-l}v^l} \right) ^2\leqslant \frac{1}{1-\lambda}\sum_{l=0}^k{\lambda ^{k-l}\left( v^l \right) ^2}.
\end{equation}
\end{Lem}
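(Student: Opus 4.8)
The plan is to recognize this as a direct application of the Cauchy--Schwarz inequality, with the geometric weights $\lambda^{k-l}$ split symmetrically. First I would factor each weight as $\lambda^{k-l}=\sqrt{\lambda^{k-l}}\cdot\sqrt{\lambda^{k-l}}$, so that the sum on the left reads $\sum_{l=0}^{k}\sqrt{\lambda^{k-l}}\cdot\bigl(\sqrt{\lambda^{k-l}}\,v^{l}\bigr)$. Applying Cauchy--Schwarz to this inner product of the vectors with entries $\sqrt{\lambda^{k-l}}$ and $\sqrt{\lambda^{k-l}}\,v^{l}$ yields
\begin{equation*}
\left( \sum_{l=0}^{k}\lambda^{k-l}v^{l} \right)^{2}
\leqslant \left( \sum_{l=0}^{k}\lambda^{k-l} \right)\left( \sum_{l=0}^{k}\lambda^{k-l}\left( v^{l} \right)^{2} \right).
\end{equation*}

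Next I would control the first factor by a geometric series. Substituting $m=k-l$ gives $\sum_{l=0}^{k}\lambda^{k-l}=\sum_{m=0}^{k}\lambda^{m}$, and since $\lambda\in(0,1)$ this partial sum is bounded above by the full geometric series $\sum_{m=0}^{\infty}\lambda^{m}=\frac{1}{1-\lambda}$. Combining this bound with the Cauchy--Schwarz estimate above immediately produces the claimed inequality. The non-negativity of the sequence $\{v^{k}\}$ is not strictly needed for the Cauchy--Schwarz step, but it guarantees the left-hand side is meaningful as written and that no sign issues arise.

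There is essentially no hard part here: the only point requiring minor care is ensuring the geometric sum is bounded uniformly in $k$ by $\frac{1}{1-\lambda}$ rather than by its exact value $\frac{1-\lambda^{k+1}}{1-\lambda}$, which is what makes the stated constant $\frac{1}{1-\lambda}$ valid for every $k$. The result is a standard auxiliary estimate, and I would expect it to be used downstream to convert accumulated geometrically-weighted consensus errors into a cleaner bound on their squares.
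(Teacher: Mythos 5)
Your proof is correct and follows essentially the same route as the paper: the same symmetric splitting $\lambda^{k-l}=\lambda^{\frac{k-l}{2}}\cdot\lambda^{\frac{k-l}{2}}$, the same application of the Cauchy--Schwarz inequality, and the same geometric-series bound $\sum_{l=0}^{k}\lambda^{k-l}\leqslant\frac{1}{1-\lambda}$. No gaps to report.
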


\begin{proof}
Using Cauchy-Swarchz inequality, we have
\begin{equation*}
\begin{aligned}
\left( \sum_{l=0}^k{\lambda ^{k-l}v^l} \right) ^2&=\left( \sum_{l=0}^k{\lambda ^{\frac{k-l}{2}}\left( \lambda ^{\frac{k-l}{2}}v^l \right)} \right) ^2
\\
&\leqslant \sum_{l=0}^k{\left( \lambda ^{\frac{k-l}{2}} \right) ^2}\cdot \sum_{l=0}^k{\left( \lambda ^{\frac{k-l}{2}}v^l \right) ^2}
\\
&\leqslant \frac{1}{1-\lambda}\sum_{l=0}^k{\lambda ^{k-l}\left( v^l \right) ^2},
\end{aligned}
\end{equation*}
which completes the proof.
\end{proof}

\begin{Lem}[Unbiased Estimate of Gradient]
\label{ubias}
Suppose Assumption~\ref{assumption_unbiased_gradient} hold. Then, we have
\begin{equation}
\label{unbiased_corrected_gradient}
\mathbb{E}\left[ g_{i}^{k}\left| \mathcal{F}^k \right. \right] =\nabla f_i\left( z_{i}^{k} \right) 
,
\end{equation}
where $g_i^k$ is the corrected stochastic gradient of node $i$ at iteration $k$.
\end{Lem}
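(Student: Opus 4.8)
The plan is to exploit the fact that, conditioned on the history $\mathcal{F}^k$, the only source of randomness in the corrected gradient $g_i^k$ defined in~\eqref{corrected_stochastic_gradient} is the sampled index $\xi_i^k$, which is drawn uniformly from $\{1,2,\dots,J\}$. First I would observe that both the de-biased iterate $z_i^k$ and the entire collection of stored parameters $\{\phi_{i,j}^k\}_{j=1}^J$ are $\mathcal{F}^k$-measurable: the former is explicitly included in $\mathcal{F}^k$, while the latter is determined by the past sampled indices and iterates through the update rule~\eqref{saga_table}. Consequently, the third term $\frac{1}{J}\sum_{j=1}^J \nabla f_i(\phi_{i,j}^k;j)$ is deterministic given $\mathcal{F}^k$ and passes through the conditional expectation unchanged.

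Next I would take the conditional expectation of each of the three terms in~\eqref{corrected_stochastic_gradient} separately. For the first term, since $z_i^k$ is fixed given $\mathcal{F}^k$ and $\xi_i^k$ is uniform over $\{1,\dots,J\}$, Assumption~\ref{assumption_unbiased_gradient} (together with $f_i(x)=\frac{1}{J}\sum_{j=1}^J f_i(x;j)$) yields $\mathbb{E}[\nabla f_i(z_i^k;\xi_i^k)\mid\mathcal{F}^k] = \nabla f_i(z_i^k)$. The crux of the argument is the second term: because the stored parameter $\phi_{i,\xi_i^k}^k$ is indexed by the \emph{random} $\xi_i^k$, averaging over the uniform draw gives $\mathbb{E}[\nabla f_i(\phi_{i,\xi_i^k}^k;\xi_i^k)\mid\mathcal{F}^k] = \frac{1}{J}\sum_{j=1}^J \nabla f_i(\phi_{i,j}^k;j)$, which is precisely the third term. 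The second and third terms therefore cancel, leaving $\mathbb{E}[g_i^k\mid\mathcal{F}^k] = \nabla f_i(z_i^k)$, as claimed.

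The argument involves no hard estimates, and the only subtlety---the step I would flag as requiring care---is the measurability bookkeeping for the variance-reduction (SAGA) table: one must verify that $\xi_i^k$ is the sole random quantity given $\mathcal{F}^k$, and that $\phi_{i,\xi_i^k}^k$ must be treated as a function of this random index rather than as a fixed point, so that summing over the $J$ equally-likely values of $\xi_i^k$ correctly reproduces the table average. Once this is accounted for, the unbiasedness follows immediately from the standard cancellation between the sampled stored gradient and the average of all stored gradients.
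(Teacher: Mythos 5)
Your proposal is correct and follows essentially the same route as the paper's proof: linearity of conditional expectation, unbiasedness of $\nabla f_i(z_i^k;\xi_i^k)$ for the first term, and the observation that averaging the randomly-indexed stored gradient $\nabla f_i(\phi_{i,\xi_i^k}^k;\xi_i^k)$ over the uniform draw of $\xi_i^k$ reproduces the table average, which cancels the third term. Your explicit measurability bookkeeping for the SAGA table is a slightly more careful write-up of exactly the step the paper performs implicitly at its equality $(a)$.
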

\begin{proof}
According to the definition of $g_i^k$ in~\eqref{corrected_stochastic_gradient}, we have
\begin{align*}
\mathbb{E}\left[ g_{i}^{k}\left| \mathcal{F}^k \right. \right] & =\mathbb{E}\left[ \nabla f_i\left( z_{i}^{k};\xi _{i}^{k} \right) -\nabla f_i\left( \phi _{i,\xi _{i}^{k}}^{k};\xi _{i}^{k} \right) +\frac{1}{J}\sum_{j=1}^J{\nabla f_i\left( \phi _{i,j}^{k};j \right)}\left| \mathcal{F}^k \right. \right] 
\\
&=\mathbb{E}\left[ \nabla f_i\left( z_{i}^{k};\xi _{i}^{k} \right) \left| \mathcal{F}^k \right. \right] -\mathbb{E}\left[ \nabla f_i\left( \phi _{i,\xi _{i}^{k}}^{k};\xi _{i}^{k} \right) \left| \mathcal{F}^k \right. \right] +\frac{1}{J}\sum_{j=1}^J{\nabla f_i\left( \phi _{i,j}^{k};j \right)}
\\
&\overset{\left( a \right)}{=}\nabla f_i\left( z_i^k \right) -\frac{1}{J}\sum_{j=1}^J{\nabla f_i\left( \phi _{i,j}^{k};j \right)}+\frac{1}{J}\sum_{j=1}^J{\nabla f_i\left( \phi _{i,j}^{k};j \right)}
\\
&=\nabla f_i\left( z_i^k \right) ,
\end{align*}
where in $(a)$ we used Assumption \ref{assumption_unbiased_gradient}.
\end{proof}

\begin{Lem}[Bounded Variance]
\label{bounded}
Suppose Assumptions~\ref{assumption_smooth_saga} and~\ref{assumption_unbiased_gradient} hold. Then, we have
\begin{equation}
\label{upper_bound_of_corrected_gradient}
\mathbb{E}\left[ \left\| g_{i}^{k}-\nabla f_i\left( z_{i}^{k} \right) \right\| ^2\left| \mathcal{F}^k \right. \right] \leqslant L^2\cdot \frac{1}{J}\sum_{j=1}^J{\left\| z_{i}^{k}-\phi _{i,j}^{k} \right\| ^2}
.
\end{equation}
\end{Lem}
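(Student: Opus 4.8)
The plan is to exploit the SAGA-type structure of the corrected gradient and the observation that, conditioned on $\mathcal{F}^k$, the only randomness is the uniformly-drawn index $\xi_i^k \in \{1,\dots,J\}$, while the averaged table term $\frac{1}{J}\sum_{j=1}^J \nabla f_i(\phi_{i,j}^k;j)$ is $\mathcal{F}^k$-measurable and hence deterministic. First I would introduce the shorthand $Y := \nabla f_i(z_i^k;\xi_i^k) - \nabla f_i(\phi_{i,\xi_i^k}^k;\xi_i^k)$, so that the definition~\eqref{corrected_stochastic_gradient} reads $g_i^k = Y + \frac{1}{J}\sum_{j=1}^J \nabla f_i(\phi_{i,j}^k;j)$. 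Because the second summand is constant given $\mathcal{F}^k$, and because Lemma~\ref{ubias} already gives $\mathbb{E}[g_i^k \mid \mathcal{F}^k] = \nabla f_i(z_i^k)$, subtracting the mean collapses the correction term exactly, yielding $g_i^k - \nabla f_i(z_i^k) = Y - \mathbb{E}[Y \mid \mathcal{F}^k]$.

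The second step is to apply the elementary identity that the variance is bounded by the raw second moment, i.e. $\mathbb{E}[\|Y - \mathbb{E}[Y\mid\mathcal{F}^k]\|^2 \mid \mathcal{F}^k] = \mathbb{E}[\|Y\|^2\mid\mathcal{F}^k] - \|\mathbb{E}[Y\mid\mathcal{F}^k]\|^2 \leqslant \mathbb{E}[\|Y\|^2\mid\mathcal{F}^k]$. This is the one genuinely useful manipulation: it discards the centering term at no cost and removes the need to carry the averaged table term through the estimate.

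The third step is purely computational. Since $\xi_i^k$ is uniform on $\{1,\dots,J\}$, I would expand $\mathbb{E}[\|Y\|^2 \mid \mathcal{F}^k] = \frac{1}{J}\sum_{j=1}^J \|\nabla f_i(z_i^k;j) - \nabla f_i(\phi_{i,j}^k;j)\|^2$, and then invoke the $L$-Lipschitz continuity of each per-sample gradient (Assumption~\ref{assumption_smooth_saga}) to bound each summand by $L^2\|z_i^k - \phi_{i,j}^k\|^2$. Pulling $L^2$ out of the average gives precisely the right-hand side of~\eqref{upper_bound_of_corrected_gradient}, completing the proof.

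I do not anticipate a real obstacle here; the only subtlety worth stating explicitly is the joint use of the measurability of the stored table term (so that it cancels cleanly under the conditional expectation) together with the variance-versus-second-moment bound, which together let the cross term vanish without any further argument. The remaining steps are a direct expansion over the uniform sampling and a termwise application of smoothness.
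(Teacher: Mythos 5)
Your proposal is correct and follows essentially the same route as the paper's proof: both recognize $g_i^k - \nabla f_i(z_i^k)$ as the centered random variable $Y - \mathbb{E}[Y\mid\mathcal{F}^k]$ (the measurable table-average term cancelling), apply the variance-is-at-most-second-moment inequality, and finish with per-sample $L$-smoothness together with the uniform-sampling expansion. The only difference is trivial ordering (you expand the expectation over $j$ before invoking Lipschitz continuity, the paper does the reverse), which changes nothing.
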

\begin{proof}
Using the definition of $g_i^k$ in \eqref{corrected_stochastic_gradient}, we have 
\begin{align*}
\mathbb{E}\left[ \left\| g_{i}^{k}-\nabla f_i\left( z_{i}^{k} \right) \right\| ^2\left| \mathcal{F}^k \right. \right] &=\mathbb{E}\left[ \left\| \nabla f_i\left( z_{i}^{k};\xi _{i}^{k} \right) -\nabla f_i\left( \phi _{i,\xi _{i}^{k}}^{k};\xi _{i}^{k} \right) +\frac{1}{J}\sum_{j=1}^J{\nabla f_i\left( \phi _{i,j}^{k};j \right)}-\nabla f_i\left( z_{i}^{k} \right) \right\| ^2\left| \mathcal{F}^k \right. \right] 
\\
&=\mathbb{E}\left[ \left\| \nabla f_i\left( z_{i}^{k};\xi _{i}^{k} \right) -\nabla f_i\left( \phi _{i,\xi _{i}^{k}}^{k};\xi _{i}^{k} \right) -\left( \nabla f_i\left( z_{i}^{k} \right) -\frac{1}{J}\sum_{j=1}^J{\nabla f_i\left( \phi _{i,j}^{k};j \right)} \right) \right\| ^2\left| \mathcal{F}^k \right. \right] 
\\
&\overset{\left( a \right)}{\leqslant}\mathbb{E}\left[ \left\| \nabla f_i\left( z_{i}^{k};\xi _{i}^{k} \right) -\nabla f_i\left( \phi _{i,\xi _{i}^{k}}^{k};\xi _{i}^{k} \right) \right\| ^2\left| \mathcal{F}^k \right. \right] 
\\
&\overset{\left( b \right)}{\leqslant}L^2\mathbb{E}\left[ \left\| z_{i}^{k}-\phi _{i,\xi _{i}^{k}}^{k} \right\| ^2\left| \mathcal{F}^k \right. \right] 
\\
&\overset{\left( c \right)}{=}
L^2\cdot \frac{1}{J}\sum_{j=1}^J{\left\| z_{i}^{k}-\phi _{i,j}^{k} \right\| ^2}
,
\end{align*}
where in $(a)$ we used the fact that
$\mathbb{E}\left[\left\|x-\mathbb{E}\left[x\right]\right\|^2\right]\leq \mathbb{E}\left[\left\|x\right\|^2\right]
$
% $\mathbb{E}\left[ \left\| x-\mathbb{E}\left[ x\left| \mathcal{F}^k \right. \right] \right\| ^2\left| \mathcal{F}^k \right. \right] =\mathbb{E}\left[ \left\| a \right\| ^2\left| \mathcal{F}^k \right. \right] -\left\| \mathbb{E}\left[ a\left| \mathcal{F}^k \right. \right] \right\| ^2\leqslant \mathbb{E}\left[ \left\| a \right\| ^2\left| \mathcal{F}^k \right. \right] 
% $
% $\mathbb{E}\left[ \left\| a-\mathbb{E}\left[ a\left| \mathcal{F}^k \right. \right] \right\| ^2\left| \mathcal{F}^k \right. \right] =\mathbb{E}\left[ \left\| a \right\| ^2\left| \mathcal{F}^k \right. \right] -\left\| \mathbb{E}\left[ a\left| \mathcal{F}^k \right. \right] \right\| ^2\leqslant \mathbb{E}\left[ \left\| a \right\| ^2\left| \mathcal{F}^k \right. \right] 
% $ with $a=\nabla f_i\left( z_{i}^{k};\xi _{i}^{k} \right) -\nabla f_i\left( \phi _{i,\xi _{i}^{k}}^{k};\xi _{i}^{k} \right) 
% $ and $\mathbb{E}\left[ a\left| \mathcal{F}^k \right. \right] =\nabla f_i\left( z_{i}^{k} \right) -\frac{1}{J}\sum_{j=1}^J{\nabla f_i\left( \phi _{i,j}^{k};j \right)}
% $ 
and Assumption~\ref{assumption_unbiased_gradient}; $(b)$ is due to Assumption~\ref{assumption_smooth_saga}, and the last equality is due to the property of uniform sampling (c.f., step 5 in Algorithm~\ref{PrivSGP-VR}).
% where in $(a)$ we used the fact that $\mathbb{E}\left[ \left\| a-\mathbb{E}\left[ a \right] \right\| ^2 \right] =\mathbb{E}\left[ \left\| a \right\| ^2 \right] -\left\| \mathbb{E}\left[ a \right] \right\| ^2
% $ with $a=\nabla f_i\left( z_i^k;\xi _{i}^{k} \right) -\nabla f_i\left( \phi _{i,\xi _{i}^{k}}^{k};\xi _{i}^{k} \right) 
% $ and $\mathbb{E}\left[ a \right] =\nabla f_i\left( z_i^k \right) -\frac{1}{J}\sum_{j=1}^J{\nabla f_i\left( \phi _{i,j}^{k};j \right)}
% $ according to Assumption~\ref{assumption_unbiased_gradient} and $(b)$ is due to Assumption~\ref{assumption_smooth_saga}.
\end{proof}

\begin{Lem}
\label{supporting_lemma_3_saga}
Suppose Assumptions~\ref{assumption_smooth_saga} and~\ref{assumption_bounede_outer_variation} hold. Then, we have
\begin{equation*}
\left\| \nabla f_i\left( z_{i}^{k} \right) \right\| ^2\leqslant 3L^2\left\| z_{i}^{k}-\bar{x}^k \right\| ^2+3b^2+3\left\| \nabla f\left( \bar{x}^k \right) \right\| ^2.
\end{equation*}
\end{Lem}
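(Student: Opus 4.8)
The plan is to bound $\nabla f_i(z_i^k)$ by inserting and subtracting two natural reference gradients, namely $\nabla f_i(\bar{x}^k)$ and $\nabla f(\bar{x}^k)$, and then to control each resulting piece with one of the standing assumptions. Concretely, I would write the identity
\begin{equation*}
\nabla f_i\left( z_{i}^{k} \right) = \left( \nabla f_i\left( z_{i}^{k} \right) - \nabla f_i\left( \bar{x}^k \right) \right) + \left( \nabla f_i\left( \bar{x}^k \right) - \nabla f\left( \bar{x}^k \right) \right) + \nabla f\left( \bar{x}^k \right),
\end{equation*}
and then apply the elementary inequality $\left\| a+b+c \right\|^2 \leqslant 3\left\| a \right\|^2 + 3\left\| b \right\|^2 + 3\left\| c \right\|^2$ (a consequence of Cauchy--Schwarz / convexity of $\left\| \cdot \right\|^2$). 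This immediately produces the three terms appearing on the right-hand side of the claim, each with the factor $3$.

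It then remains to estimate the first two squared norms. For the first term I would invoke Assumption~\ref{assumption_smooth_saga}: although smoothness is stated at the level of each sample loss $f_i(\cdot;\xi_i)$, the aggregate $f_i = \frac{1}{J}\sum_{j=1}^J f_i(\cdot;j)$ inherits an $L$-Lipschitz-continuous gradient (the average of $L$-smooth functions is $L$-smooth, via the triangle inequality applied to the difference of gradients), so that $\left\| \nabla f_i\left( z_{i}^{k} \right) - \nabla f_i\left( \bar{x}^k \right) \right\|^2 \leqslant L^2 \left\| z_{i}^{k}-\bar{x}^k \right\|^2$, giving the $3L^2\left\| z_{i}^{k}-\bar{x}^k \right\|^2$ contribution. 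For the second term I would directly apply Assumption~\ref{assumption_bounede_outer_variation} (bounded data heterogeneity) at the point $\bar{x}^k$, which yields $\left\| \nabla f_i\left( \bar{x}^k \right) - \nabla f\left( \bar{x}^k \right) \right\|^2 \leqslant b^2$, producing the $3b^2$ term. Summing the three bounds gives exactly the stated inequality.

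I do not anticipate a genuine obstacle here, since the argument is a routine ``add and subtract'' decomposition; the only point worth stating explicitly is the promotion of per-sample $L$-smoothness in Assumption~\ref{assumption_smooth_saga} to $L$-smoothness of the node-level loss $f_i$, which justifies the use of the Lipschitz bound on $\nabla f_i$ rather than on an individual $\nabla f_i(\cdot;\xi_i)$. Everything else is a direct substitution of the assumptions.
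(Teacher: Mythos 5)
Your proposal is correct and follows essentially the same argument as the paper's proof: the identical add-and-subtract decomposition through $\nabla f_i(\bar{x}^k)$ and $\nabla f(\bar{x}^k)$, the inequality $\left\| a+b+c \right\|^2 \leqslant 3\left\| a \right\|^2+3\left\| b \right\|^2+3\left\| c \right\|^2$, and the same use of Assumptions~\ref{assumption_smooth_saga} and~\ref{assumption_bounede_outer_variation} on the first two terms. Your explicit remark that per-sample $L$-smoothness passes to the node-level loss $f_i$ by averaging is a detail the paper leaves implicit, but it does not change the route.
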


\begin{proof}
Using Assumption~\ref{assumption_smooth_saga} ($L$-smooth) and Assumption~\ref{assumption_bounede_outer_variation} (bounded data heterogeneity), we have
\begin{equation*}
\begin{aligned}
\left\| \nabla f_i\left( z_{i}^{k} \right) \right\| ^2&=\left\| \nabla f_i\left( z_{i}^{k} \right) -\nabla f_i\left( \bar{x}^k \right) +\nabla f_i\left( \bar{x}^k \right) -\nabla f\left( \bar{x}^k \right) +\nabla f\left( \bar{x}^k \right) \right\| ^2
\\
&\leqslant 3\left\| \nabla f_i\left( z_{i}^{k} \right) -\nabla f_i\left( \bar{x}^k \right) \right\| ^2+3\left\| \nabla f_i\left( \bar{x}^k \right) -\nabla f\left( \bar{x}^k \right) \right\| ^2+3\left\| \nabla f\left( \bar{x}^k \right) \right\| ^2
\\
&\leqslant 3L^2\left\| z_{i}^{k}-\bar{x}^k \right\| ^2+3b^2+3\left\| \nabla f\left( \bar{x}^k \right) \right\| ^2,
\end{aligned}
\end{equation*}
which completes the proof.
\end{proof}

\subsection{Supporting Lemmas}
\label{supporting_emmas}
In this section, we provide several supporting lemmas to present three key inequalities associated with non-convex stochastic optimization (c.f.,~\eqref{eq_1}), variance reduction (c.f.,~\eqref{eq_2}) and consensus error (c.f.,~\eqref{eq_3}), respectively, based on which we later carry out the proof of Theorem~\ref{Theorem_1_saga} in Section~\ref{main_proof_theorem_1_saga}.

% : Lemma~\ref{Lemma_descent_lemma} presents an inequality obtained by applying descent lemma (c.f.,~\eqref{eq_1});
% Lemma~\ref{iterate_of_T_k} presents an inequality associated with variance reduction (c.f.,~\eqref{eq_2});
% Lemma~\ref{iterate_of_M_k_saga} presents an inequality upper bounding the accumulative consensus error (c.f.,~\eqref{eq_3}).

% the key descent inequalities associated with non-convex optimization, consensus error and variance reduction.

The following lemma is crucial to the proof of sub-linear convergence rate in the non-convex stochastic optimization analysis, which is obtained by applying the descent lemma recursively from $k=0$ to the total number of iterations.

\begin{Lem}
\label{Lemma_descent_lemma}
Suppose Assumption~\ref{Ass_weight_matrix},~\ref{assumption_smooth_saga} and~\ref{assumption_unbiased_gradient} hold. For a given constant step size $\gamma$, we have
\begin{equation}
\label{eq_1}
\begin{aligned}
&\frac{\gamma}{2}\sum_{k=0}^{K-1}{\mathbb{E}\left[ \left\| \nabla f\left( \bar{x}^k \right) \right\| ^2 \right]}+\sum_{k=0}^{K-1}{\mathbb{E}\left[ f\left( \bar{x}^{k+1} \right) \right]}
\\
\leqslant & \sum_{k=0}^{K-1}{\mathbb{E}\left[ f\left( \bar{x}^k \right) \right]}-\frac{\gamma -\gamma ^2L}{2}\sum_{k=0}^{K-1}{\mathbb{E}\left[ \left\| \frac{1}{n}\sum_{i=1}^n{\nabla f_i\left( z_{i}^{k} \right)} \right\| ^2 \right]}+\left( \frac{\gamma L^2}{2}+\frac{\gamma ^2L^3}{n} \right) \sum_{k=0}^{K-1}{\frac{1}{n}\sum_{i=1}^n{\mathbb{E}\left[ \left\| z_{i}^{k}-\bar{x}^k \right\| ^2 \right]}}
\\
&+\frac{\gamma ^2L^3}{n}\sum_{k=0}^{K-1}{\frac{1}{n}\sum_{i=1}^n{\frac{1}{J}\sum_{j=1}^J{\mathbb{E}\left[ \left\| \bar{x}^k-\phi _{i,j}^{k} \right\| ^2 \right]}}}+K\cdot \frac{\gamma ^2L}{2n}\cdot \frac{d}{n}\sum_{i=1}^n{\sigma _{i}^{2}}
.
\end{aligned}
\end{equation}
% \begin{equation}
% \label{descent_lemma}
% \begin{aligned}
% \mathbb{E}\left[ f\left( \bar{x}^{k+1} \right) \right] \leqslant & \mathbb{E}\left[ f\left( \bar{x}^k \right) \right] -\frac{\gamma}{2}\mathbb{E}\left[ \left\| \nabla f\left( \bar{x}^k \right) \right\| ^2 \right] -\frac{\gamma -\gamma ^2L}{2}\mathbb{E}\left[ \left\| \frac{1}{n}\sum_{i=1}^n{\nabla f_i\left( z_{i}^{k} \right)} \right\| ^2 \right] 
% \\
% &+\left( \frac{\gamma L^2}{2}+\frac{\gamma ^2L^3}{n} \right) \cdot \frac{1}{n}\sum_{i=1}^n{\mathbb{E}\left[ \left\| z_{i}^{k}-\bar{x}^k \right\| ^2 \right]}
% \\
% &+\frac{\gamma ^2L^3}{n}\cdot \frac{1}{n}\sum_{i=1}^n{\frac{1}{J}\sum_{j=1}^J{\mathbb{E}\left[ \left\| \bar{x}^k-\phi _{i,j}^{k} \right\| ^2 \right]}}+\frac{\gamma ^2L}{2n}\cdot \frac{d}{n}\sum_{i=1}^n{\sigma _{i}^{2}}.
% \end{aligned}
% \end{equation}
\end{Lem}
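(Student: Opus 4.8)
The plan is to apply the descent lemma to the averaged iterate $\bar{x}^k$ and then accumulate the resulting per-iteration bounds. Since $f=\frac{1}{n}\sum_i f_i$ is an average of $L$-smooth functions (Assumption~\ref{assumption_smooth_saga}), $f$ itself is $L$-smooth, so $f(\bar{x}^{k+1}) \leq f(\bar{x}^k) + \langle \nabla f(\bar{x}^k),\, \bar{x}^{k+1}-\bar{x}^k\rangle + \frac{L}{2}\|\bar{x}^{k+1}-\bar{x}^k\|^2$. First I would substitute the average-system update \eqref{average_system_saga}, i.e.\ $\bar{x}^{k+1}-\bar{x}^k = -\gamma\bigl(\frac{1}{n}\sum_i g_i^k + \frac{1}{n}\sum_i N_i^k\bigr)$, and take the conditional expectation $\mathbb{E}[\,\cdot\,|\mathcal{F}^k]$, noting that $\|z_i^k-\bar{x}^k\|^2$ and $\|\bar{x}^k-\phi_{i,j}^k\|^2$ are $\mathcal{F}^k$-measurable.

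For the linear (inner-product) term, conditioning on $\mathcal{F}^k$ makes $N_i^k$ zero-mean, and by Lemma~\ref{ubias} we have $\mathbb{E}[g_i^k|\mathcal{F}^k]=\nabla f_i(z_i^k)$, so this term reduces to $-\gamma\langle\nabla f(\bar{x}^k),\,\frac{1}{n}\sum_i\nabla f_i(z_i^k)\rangle$. I would then use the polarization identity $-\langle a,b\rangle = \frac{1}{2}(\|a-b\|^2-\|a\|^2-\|b\|^2)$ with $a=\nabla f(\bar{x}^k)=\frac{1}{n}\sum_i\nabla f_i(\bar{x}^k)$ and $b=\frac{1}{n}\sum_i\nabla f_i(z_i^k)$. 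Combining Jensen's inequality with the smoothness bound $\|\nabla f_i(\bar{x}^k)-\nabla f_i(z_i^k)\|\leq L\|\bar{x}^k-z_i^k\|$ turns the $\|a-b\|^2$ remainder into $L^2\cdot\frac{1}{n}\sum_i\|z_i^k-\bar{x}^k\|^2$, producing the $-\frac{\gamma}{2}\|\nabla f(\bar{x}^k)\|^2$, $-\frac{\gamma}{2}\|\frac{1}{n}\sum_i\nabla f_i(z_i^k)\|^2$, and $+\frac{\gamma L^2}{2}\frac{1}{n}\sum_i\|z_i^k-\bar{x}^k\|^2$ contributions.

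For the quadratic term $\frac{\gamma^2 L}{2}\,\mathbb{E}[\|\frac{1}{n}\sum_i g_i^k + \frac{1}{n}\sum_i N_i^k\|^2\,|\,\mathcal{F}^k]$, I would exploit that the noises are zero-mean, independent across nodes, and independent of the sampling, so all cross terms vanish and the noise contributes exactly $\frac{d}{n^2}\sum_i\sigma_i^2$ via $\mathbb{E}\|N_i^k\|^2 = d\sigma_i^2$; with the coefficient $\frac{\gamma^2 L}{2}$ this becomes $\frac{\gamma^2 L}{2n}\cdot\frac{d}{n}\sum_i\sigma_i^2$. Writing $\frac{1}{n}\sum_i g_i^k = \frac{1}{n}\sum_i\nabla f_i(z_i^k) + \frac{1}{n}\sum_i(g_i^k-\nabla f_i(z_i^k))$ and again using cross-node independence of the zero-mean corrections, the expected squared norm splits into $\|\frac{1}{n}\sum_i\nabla f_i(z_i^k)\|^2 + \frac{1}{n^2}\sum_i\mathbb{E}[\|g_i^k-\nabla f_i(z_i^k)\|^2|\mathcal{F}^k]$, and the variance part is bounded by Lemma~\ref{bounded} to give $\frac{L^2}{n}\cdot\frac{1}{n}\sum_i\frac{1}{J}\sum_j\|z_i^k-\phi_{i,j}^k\|^2$. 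The $-\frac{\gamma}{2}$ and $+\frac{\gamma^2 L}{2}$ coefficients on $\|\frac{1}{n}\sum_i\nabla f_i(z_i^k)\|^2$ then merge into the $-\frac{\gamma-\gamma^2 L}{2}$ factor in the statement.

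Finally, I would apply Young's inequality $\|z_i^k-\phi_{i,j}^k\|^2\leq 2\|z_i^k-\bar{x}^k\|^2 + 2\|\bar{x}^k-\phi_{i,j}^k\|^2$ to the variance-reduction term: the first piece (whose $\frac{1}{J}\sum_j$ is trivial since it is $j$-independent) merges with the consensus term from the linear part to yield the coefficient $\frac{\gamma L^2}{2}+\frac{\gamma^2 L^3}{n}$, while the second piece produces the $\frac{\gamma^2 L^3}{n}$ term in $\|\bar{x}^k-\phi_{i,j}^k\|^2$. Taking total expectation, summing from $k=0$ to $K-1$, and moving $\frac{\gamma}{2}\mathbb{E}[\|\nabla f(\bar{x}^k)\|^2]$ and $\mathbb{E}[f(\bar{x}^{k+1})]$ to the left-hand side yields \eqref{eq_1}. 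I expect the main obstacle to be the bookkeeping: justifying that every cross term vanishes and tracking precisely which of the noise, gradient-mismatch, and consensus-gap contributions feeds into each term, so that the variance-reduction structure involving $\phi_{i,j}^k$ emerges with exactly the stated constants.
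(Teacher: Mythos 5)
Your proposal is correct and follows essentially the same route as the paper's proof: descent lemma on the averaged iterate, conditional expectation with Lemma~\ref{ubias} to reduce the linear term, the polarization identity plus Jensen and smoothness for the $-\frac{\gamma}{2}\left\| \nabla f\left( \bar{x}^k \right) \right\|^2$ and consensus contributions, the variance decomposition of $\frac{1}{n}\sum_{i=1}^n g_i^k$ with cross-node independence and Lemma~\ref{bounded}, Young's inequality to split $\left\| z_i^k - \phi_{i,j}^k \right\|^2$, and the exact noise-variance computation, followed by summation over $k$. All the stated coefficients ($-\frac{\gamma - \gamma^2 L}{2}$, $\frac{\gamma L^2}{2} + \frac{\gamma^2 L^3}{n}$, $\frac{\gamma^2 L^3}{n}$, $\frac{\gamma^2 L}{2n}\cdot\frac{d}{n}\sum_{i=1}^n \sigma_i^2$) emerge exactly as in the paper's argument.
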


\begin{proof}
Applying the descent lemma to $f$ at $\bar{x}^k$ and $\bar{x}^{k+1}$, we have
\begin{equation}
\begin{aligned}
f\left( \bar{x}^{k+1} \right) 
\leqslant & f\left( \bar{x}^k \right) +\left< \nabla f\left( \bar{x}^k \right) ,\bar{x}^{k+1}-\bar{x}^k \right> +\frac{L}{2}\left\| \bar{x}^{k+1}-\bar{x}^k \right\| ^2
\\
\overset{\eqref{average_system_saga}}{=} & f\left( \bar{x}^k \right) -\gamma \left< \nabla f\left( \bar{x}^k \right) ,\frac{1}{n}\sum_{i=1}^n{g_{i}^{k}}+\frac{1}{n}\sum_{i=1}^n{N_{i}^{k}} \right> +\frac{\gamma ^2L}{2}\left\| \frac{1}{n}\sum_{i=1}^n{g_{i}^{k}}+\frac{1}{n}\sum_{i=1}^n{N_{i}^{k}} \right\| ^2
\\
= & f\left( \bar{x}^k \right) -\gamma \left< \nabla f\left( \bar{x}^k \right) ,\frac{1}{n}\sum_{i=1}^n{g_{i}^{k}}+\frac{1}{n}\sum_{i=1}^n{N_{i}^{k}} \right> +\frac{\gamma ^2L}{2}\left\| \frac{1}{n}\sum_{i=1}^n{g_{i}^{k}} \right\| ^2
\\
&+\frac{\gamma ^2L}{2}\left\| \frac{1}{n}\sum_{i=1}^n{N_{i}^{k}} \right\| ^2+\gamma ^2L\left< \frac{1}{n}\sum_{i=1}^n{g_{i}^{k}},\frac{1}{n}\sum_{i=1}^n{N_{i}^{k}} \right> .
\end{aligned}
\end{equation}

Taking the expectation of both sides conditioned on $\mathcal{F}^k$, we obtain
\begin{equation}
\label{descent_lemma_after_expextation}
\begin{aligned}
\mathbb{E}\left[ f\left( \bar{x}^{k+1} \right) \left| \mathcal{F}^k \right. \right]
\overset{\eqref{unbiased_corrected_gradient}}{\leqslant} & f\left( \bar{x}^k \right) \underset{A_1}{\underbrace{-\gamma \left< \nabla f\left( \bar{x}^k \right) ,\frac{1}{n}\sum_{i=1}^n{\nabla f_i\left( z_{i}^{k} \right)} \right> }}
\\
&+\frac{\gamma ^2L}{2}\underset{A_2}{\underbrace{\mathbb{E}\left[ \left\| \frac{1}{n}\sum_{i=1}^n{g_{i}^{k}} \right\| ^2\left| \mathcal{F}^k \right. \right] }}+\frac{\gamma ^2L}{2}\underset{A_3}{\underbrace{\mathbb{E}\left[ \left\| \frac{1}{n}\sum_{i=1}^n{N_{i}^{k}} \right\| ^2\left| \mathcal{F}^k \right. \right] }}
.
\end{aligned}
\end{equation}

For $A_1$, we have
\begin{equation}
\label{upper_bound_A_1}
\begin{aligned}
A_1&=-\frac{\gamma}{2}\left\| \nabla f\left( \bar{x}^k \right) \right\| ^2-\frac{\gamma}{2}\left\| \frac{1}{n}\sum_{i=1}^n{\nabla f_i\left( z_{i}^{k} \right)} \right\| ^2+\frac{\gamma}{2}\left\| \frac{1}{n}\sum_{i=1}^n{\nabla f_i\left( z_{i}^{k} \right)}-\nabla f\left( \bar{x}^k \right) \right\| ^2
\\
&=-\frac{\gamma}{2}\left\| \nabla f\left( \bar{x}^k \right) \right\| ^2-\frac{\gamma}{2}\left\| \frac{1}{n}\sum_{i=1}^n{\nabla f_i\left( z_{i}^{k} \right)} \right\| ^2+\frac{\gamma}{2}\left\| \frac{1}{n}\sum_{i=1}^n{\left( \nabla f_i\left( z_{i}^{k} \right) -\nabla f_i\left( \bar{x}^k \right) \right)} \right\| ^2
\\
&\leqslant -\frac{\gamma}{2}\left\| \nabla f\left( \bar{x}^k \right) \right\| ^2-\frac{\gamma}{2}\left\| \frac{1}{n}\sum_{i=1}^n{\nabla f_i\left( z_{i}^{k} \right)} \right\| ^2+\frac{\gamma}{2n}\sum_{i=1}^n{\left\| \nabla f_i\left( z_{i}^{k} \right) -\nabla f_i\left( \bar{x}^k \right) \right\| ^2}
\\
&\overset{\left( a \right)}{\leqslant}-\frac{\gamma}{2}\left\| \nabla f\left( \bar{x}^k \right) \right\| ^2-\frac{\gamma}{2}\left\| \frac{1}{n}\sum_{i=1}^n{\nabla f_i\left( z_{i}^{k} \right)} \right\| ^2+\frac{\gamma L^2}{2n}\sum_{i=1}^n{\left\| z_{i}^{k}-\bar{x}^k \right\| ^2},
\end{aligned}
\end{equation}
where in $(a)$ we used Assumption~\ref{assumption_smooth_saga}.

For $A_2$, we have
\begin{equation}
\label{upper_bound_A_2}
\begin{aligned}
A_2  = &\mathbb{E}\left[ \left\| \frac{1}{n}\sum_{i=1}^n{\left( g_{i}^{k}-\nabla f_i\left( z_{i}^{k} \right) \right)}+\frac{1}{n}\sum_{i=1}^n{\nabla f_i\left( z_{i}^{k} \right)} \right\| ^2\left| \mathcal{F}^k \right. \right] 
\\
=&\mathbb{E}\left[ \left\| \frac{1}{n}\sum_{i=1}^n{\left( g_{i}^{k}-\nabla f_i\left( z_{i}^{k} \right) \right)} \right\| ^2\left| \mathcal{F}^k \right. \right] +\mathbb{E}\left[ \left\| \frac{1}{n}\sum_{i=1}^n{\nabla f_i\left( z_{i}^{k} \right)} \right\| ^2\left| \mathcal{F}^k \right. \right] 
\\
&+2\mathbb{E}\left[ \left< \frac{1}{n}\sum_{i=1}^n{\left( g_{i}^{k}-\nabla f_i\left( z_{i}^{k} \right) \right)},\frac{1}{n}\sum_{i=1}^n{\nabla f_i\left( z_{i}^{k} \right)} \right> \left| \mathcal{F}^k \right. \right] 
\\
\overset{\left( b \right)}{\leqslant}&\frac{L^2}{n^2}\sum_{i=1}^n{\frac{1}{J}\sum_{j=1}^J{\left\| z_{i}^{k}-\phi _{i,j}^{k} \right\| ^2}}+\left\| \frac{1}{n}\sum_{i=1}^n{\nabla f_i\left( z_{i}^{k} \right)} \right\| ^2
\\
\leqslant &\frac{2L^2}{n}\cdot \frac{1}{n}\sum_{i=1}^n{\left\| z_{i}^{k}-\bar{x}^k \right\| ^2}+\frac{2L^2}{n}\cdot \frac{1}{n}\sum_{i=1}^n{\frac{1}{J}\sum_{j=1}^J{\left\| \bar{x}^k-\phi _{i,j}^{k} \right\| ^2}}+\left\| \frac{1}{n}\sum_{i=1}^n{\nabla f_i\left( z_{i}^{k} \right)} \right\| ^2
,
\end{aligned}
\end{equation}
where in $(b)$ we used~\eqref{unbiased_corrected_gradient} in Lemma~\ref{ubias}, and~\eqref{upper_bound_of_corrected_gradient} in Lemma~\ref{bounded}.

For $A_3$, since Gaussian noises $\{N_i^k\}_{i=1,2,...,n}$ are independent with each other, we have 
\begin{equation}
\label{upper_bound_A_3}
A_3=\frac{1}{n^2}\sum_{i=1}^n{\mathbb{E}\left[ \left\| N_{i}^{k} \right\| ^2 \right]} = \frac{d}{n^2}\sum_{i=1}^n{\sigma _{i}^{2}}.
\end{equation}

Then, substituting~\eqref{upper_bound_A_1},~\eqref{upper_bound_A_2} and~\eqref{upper_bound_A_3} into~\eqref{descent_lemma_after_expextation} yields 
\begin{equation}
\begin{aligned}
\mathbb{E}\left[ f\left( \bar{x}^{k+1} \right) \left| \mathcal{F}^k \right. \right] \leqslant & f\left( \bar{x}^k \right) -\frac{\gamma}{2}\left\| \nabla f\left( \bar{x}^k \right) \right\| ^2-\frac{\gamma -\gamma ^2L}{2}\left\| \frac{1}{n}\sum_{i=1}^n{\nabla f_i\left( z_{i}^{k} \right)} \right\| ^2
\\
&+\left( \frac{\gamma L^2}{2}+\frac{\gamma ^2L^3}{n} \right) \cdot \frac{1}{n}\sum_{i=1}^n{\left\| z_{i}^{k}-\bar{x}^k \right\| ^2}+\frac{\gamma ^2L^3}{n}\cdot \frac{1}{n}\sum_{i=1}^n{\frac{1}{J}\sum_{j=1}^J{\left\| \bar{x}^k-\phi _{i,j}^{k} \right\| ^2}}+\frac{\gamma ^2L}{2n}\cdot \frac{d}{n}\sum_{i=1}^n{\sigma _{i}^{2}}.
\end{aligned}
\end{equation}

Taking total expectation on both sides of the above inequality, we further have
\begin{equation}
\label{before_descent_lemma}
\begin{aligned}
\mathbb{E}\left[ f\left( \bar{x}^{k+1} \right) \right] \leqslant & \mathbb{E}\left[ f\left( \bar{x}^k \right) \right] -\frac{\gamma}{2}\mathbb{E}\left[ \left\| \nabla f\left( \bar{x}^k \right) \right\| ^2 \right] -\frac{\gamma -\gamma ^2L}{2}\mathbb{E}\left[ \left\| \frac{1}{n}\sum_{i=1}^n{\nabla f_i\left( z_{i}^{k} \right)} \right\| ^2 \right] 
\\
&+\left( \frac{\gamma L^2}{2}+\frac{\gamma ^2L^3}{n} \right) \cdot \frac{1}{n}\sum_{i=1}^n{\mathbb{E}\left[ \left\| z_{i}^{k}-\bar{x}^k \right\| ^2 \right]}
\\
&+\frac{\gamma ^2L^3}{n}\cdot \frac{1}{n}\sum_{i=1}^n{\frac{1}{J}\sum_{j=1}^J{\mathbb{E}\left[ \left\| \bar{x}^k-\phi _{i,j}^{k} \right\| ^2 \right]}}+\frac{\gamma ^2L}{2n}\cdot \frac{d}{n}\sum_{i=1}^n{\sigma _{i}^{2}}.
\end{aligned}
\end{equation}

Summing~\eqref{before_descent_lemma} from $k=0$ to $K-1$, we obtain \eqref{eq_1}, which completes the proof.
\end{proof}

The following lemma establishes a corresponding inequality associated with variance reduction (VR), which bounds the accumulative VR-related error term with the accumulative consensus error. 
% $\sum_{k=0}^{K-1}{\frac{1}{n}\sum_{i=1}^n{\mathbb{E}\left[ \left\| z_{i}^{k}-\bar{x}^k \right\| ^2 \right]}}$.}
\begin{Lem}
\label{iterate_of_T_k}
Define
\begin{equation}
\label{def_of_T_k}
T^k:=\frac{1}{n}\sum_{i=1}^n{\frac{1}{J}\sum_{j=1}^J{\left\| \bar{x}^k-\phi _{i,j}^{k} \right\| ^2}},
\end{equation}
and suppose Assumption~\ref{assumption_smooth_saga} and ~\ref{assumption_unbiased_gradient} hold. Then, we have
\begin{equation}
\label{eq_2}
\begin{aligned}
\sum_{k=0}^{K-1}{\mathbb{E}\left[ T^{k+1} \right]}\leqslant & \left[ \frac{2\left( 1+J \right) \gamma ^2L^2}{n}+\left( 1+\frac{1}{J} \right) \frac{1}{J} \right] \sum_{k=0}^{K-1}{\frac{1}{n}\sum_{i=1}^n{\mathbb{E}\left[ \left\| z_{i}^{k}-\bar{x}^k \right\| ^2 \right]}}
\\
&+\left( 1-\frac{1}{J^2}+\frac{2\left( 1+J \right) \gamma ^2L^2}{n} \right) \sum_{k=0}^{K-1}{\mathbb{E}\left[ T^k \right]}
\\
&+\left( 1+J \right) \gamma ^2\sum_{k=0}^{K-1}{\mathbb{E}\left[ \left\| \frac{1}{n}\sum_{i=1}^n{\nabla f_i\left( z_{i}^{k} \right)} \right\| ^2 \right]}+K\cdot \left( 1+J \right) \gamma ^2\frac{d}{n^2}\sum_{i=1}^n{\sigma _{i}^{2}}.
\end{aligned}
\end{equation}
% \begin{equation}
% \label{iterate_of_T_k_kkk}
% \begin{aligned}
% \mathbb{E}\left[ T^{k+1} \right] \leqslant & \left[ \frac{2\left( 1+J \right) \gamma ^2L^2}{n}+\left( 1+\frac{1}{J} \right) \frac{1}{J} \right] \cdot \frac{1}{n}\sum_{i=1}^n{\left\| z_{i}^{k}-\bar{x}^k \right\| ^2}
% \\
% &+\left( 1-\frac{1}{J^2}+\frac{2\left( 1+J \right) \gamma ^2L^2}{n} \right) T^k+\left( 1+J \right) \gamma ^2\left\| \frac{1}{n}\sum_{i=1}^n{\nabla f_i\left( z_{i}^{k} \right)} \right\| ^2
% \\
% &+\left( 1+J \right) \gamma ^2\frac{d}{n^2}\sum_{i=1}^n{\sigma _{i}^{2}}.
% \end{aligned}
% \end{equation}
\end{Lem}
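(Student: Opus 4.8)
The plan is to bound $T^{k+1}$ one $(i,j)$-term at a time, anchoring to the $\mathcal{F}^k$-measurable average $\bar{x}^k$ and peeling off the one-step increment. First I would write $\bar{x}^{k+1}-\phi_{i,j}^{k+1}=(\bar{x}^k-\phi_{i,j}^{k+1})+(\bar{x}^{k+1}-\bar{x}^k)$ and apply Young's inequality $\|a+b\|^2\le(1+\alpha)\|a\|^2+(1+\tfrac1\alpha)\|b\|^2$ with anchor $a=\bar{x}^k-\phi_{i,j}^{k+1}$ and increment $b=\bar{x}^{k+1}-\bar{x}^k$. This is the crucial maneuver: both $\phi_{i,j}^{k+1}$ (via the SAGA rule \eqref{saga_table}) and the increment (via the corrected gradients) depend on the sample index $\xi_i^k$, but after the split the anchor term $\|\bar{x}^k-\phi_{i,j}^{k+1}\|^2$ involves $\xi_i^k$ only through $\phi_{i,j}^{k+1}$, whose conditional law is explicit.

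Next, I would take $\mathbb{E}[\cdot\,|\mathcal{F}^k]$ of the anchor term. Since $\phi_{i,j}^{k+1}=z_i^k$ with probability $1/J$ and $\phi_{i,j}^{k+1}=\phi_{i,j}^k$ otherwise (uniform sampling), and $\bar{x}^k,z_i^k,\phi_{i,j}^k$ are $\mathcal{F}^k$-measurable, one gets $\mathbb{E}[\|\bar{x}^k-\phi_{i,j}^{k+1}\|^2|\mathcal{F}^k]=\tfrac1J\|\bar{x}^k-z_i^k\|^2+(1-\tfrac1J)\|\bar{x}^k-\phi_{i,j}^k\|^2$. Averaging over $i,j$ collapses the first part to $\tfrac1J\cdot\tfrac1n\sum_i\|z_i^k-\bar{x}^k\|^2$ and the second to $(1-\tfrac1J)T^k$ via \eqref{def_of_T_k}. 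The decisive choice is $\alpha=1/J$: then $(1+\alpha)(1-\tfrac1J)=1-\tfrac1{J^2}$ reproduces the exact contraction factor on $T^k$, and $(1+\alpha)\tfrac1J=(1+\tfrac1J)\tfrac1J$ reproduces the stated consensus coefficient, while $1+\tfrac1\alpha=1+J$ sets the weight on the increment.

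It then remains to bound the increment $\mathbb{E}[\|\bar{x}^{k+1}-\bar{x}^k\|^2|\mathcal{F}^k]$. Using the average dynamics \eqref{average_system_saga}, this equals $\gamma^2\mathbb{E}[\|\tfrac1n\sum_i g_i^k+\tfrac1n\sum_i N_i^k\|^2|\mathcal{F}^k]$; the Gaussian noises are zero-mean and independent of the sampling, so the cross term vanishes and the square splits into a gradient part and a noise part. The gradient part is exactly the quantity $A_2$ already bounded in \eqref{upper_bound_A_2} (its middle term is $\tfrac{2L^2}{n}T^k$ by \eqref{def_of_T_k}), which itself rests on Lemma~\ref{ubias} and Lemma~\ref{bounded}, and the noise part is $A_3=\tfrac{d}{n^2}\sum_i\sigma_i^2$ from \eqref{upper_bound_A_3}. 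Multiplying this increment bound by $(1+J)$ supplies the extra $\tfrac{2(1+J)\gamma^2L^2}{n}$ addenda on both the consensus and $T^k$ coefficients, the $(1+J)\gamma^2$ weight on $\|\tfrac1n\sum_i\nabla f_i(z_i^k)\|^2$, and the per-step noise term $(1+J)\gamma^2\tfrac{d}{n^2}\sum_i\sigma_i^2$.

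Finally, I would take total expectation of this per-iteration inequality and sum over $k=0,\dots,K-1$, at which point the constant noise term accumulates to $K(1+J)\gamma^2\tfrac{d}{n^2}\sum_i\sigma_i^2$ and the bound becomes exactly \eqref{eq_2}. I expect the main obstacle to be essentially bookkeeping rather than analysis: the correlation between $\bar{x}^{k+1}$ and the indicator $\mathbf{1}[j=\xi_i^k]$ must be defused before conditioning, which the Young split achieves by leaving the $\mathcal{F}^k$-measurable anchor $\bar{x}^k$, and the constants must be steered onto their exact targets, which the single calibrated choice $\alpha=1/J$ accomplishes through the factorization $1-\tfrac1{J^2}=(1-\tfrac1J)(1+\tfrac1J)$.
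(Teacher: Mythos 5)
Your proposal is correct and follows essentially the same route as the paper's proof: the same Young split $\bar{x}^{k+1}-\phi_{i,j}^{k+1}=(\bar{x}^k-\phi_{i,j}^{k+1})+(\bar{x}^{k+1}-\bar{x}^k)$ with weights $(1+\tfrac{1}{J})$ and $(1+J)$, the same conditional mixture law for $\phi_{i,j}^{k+1}$ yielding $\tfrac{1}{J}\|\bar{x}^k-z_i^k\|^2+(1-\tfrac{1}{J})\|\bar{x}^k-\phi_{i,j}^k\|^2$, and the same reuse of the bounds \eqref{upper_bound_A_2} and \eqref{upper_bound_A_3} for the increment, followed by total expectation and summation over $k$. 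All coefficients land exactly on \eqref{eq_2}, so there is nothing to correct.
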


\begin{proof}
According to the definition in~\eqref{def_of_T_k}, we have
\begin{equation}
\label{derivation_1}
\begin{aligned}
\mathbb{E}\left[ T^{k+1}\left| \mathcal{F}^k \right. \right] & =\frac{1}{n}\sum_{i=1}^n{\frac{1}{J}\sum_{j=1}^J{\mathbb{E}\left[ \left\| \bar{x}^{k+1}-\phi _{i,j}^{k+1} \right\| ^2\left| \mathcal{F}^k \right. \right]}}
\\
&\overset{(a)}{\leqslant}\frac{1}{n}\sum_{i=1}^n{\frac{1}{J}\sum_{j=1}^J{\left( \left( 1+J \right) \mathbb{E}\left[ \left\| \bar{x}^{k+1}-\bar{x}^k \right\| ^2\left| \mathcal{F}^k \right. \right] +\left( 1+\frac{1}{J} \right) \mathbb{E}\left[ \left\| \bar{x}^k-\phi _{i,j}^{k+1} \right\| ^2\left| \mathcal{F}^k \right. \right] \right)}}
\\
&=\left( 1+J \right) \mathbb{E}\left[ \left\| \bar{x}^{k+1}-\bar{x}^k \right\| ^2\left| \mathcal{F}^k \right. \right] +\left( 1+\frac{1}{J} \right) \cdot \frac{1}{n}\sum_{i=1}^n{\frac{1}{J}\sum_{j=1}^J{\mathbb{E}\left[ \left\| \bar{x}^k-\phi _{i,j}^{k+1} \right\| ^2\left| \mathcal{F}^k \right. \right]}}
,
\end{aligned}
\end{equation}
where in (a) we used Young's inequality.

For the first term in the right hand side of~\eqref{derivation_1}, we have
\begin{equation}
\label{derivation_2}
\begin{aligned}
\mathbb{E}\left[ \left\| \bar{x}^{k+1}-\bar{x}^k \right\| ^2\left| \mathcal{F}^k \right. \right] &\overset{\eqref{average_system_saga}}{=}\gamma ^2\mathbb{E}\left[ \left\| \frac{1}{n}\sum_{i=1}^n{g_{i}^{k}}+\frac{1}{n}\sum_{i=1}^n{N_{i}^{k}} \right\| ^2\left| \mathcal{F}^k \right. \right] 
\\
&=\gamma ^2\mathbb{E}\left[ \left\| \frac{1}{n}\sum_{i=1}^n{g_{i}^{k}} \right\| ^2\left| \mathcal{F}^k \right. \right] +\gamma ^2\mathbb{E}\left[ \left\| \frac{1}{n}\sum_{i=1}^n{N_{i}^{k}} \right\| ^2\left| \mathcal{F}^k \right. \right] 
\\
&\overset{\eqref{upper_bound_A_2},\eqref{upper_bound_A_3}}{\leqslant}  \frac{2\gamma ^2L^2}{n}\cdot \frac{1}{n}\sum_{i=1}^n{\left\| z_{i}^{k}-\bar{x}^k \right\| ^2}+\frac{2\gamma ^2L^2}{n}\cdot T^k+\gamma ^2\left\| \frac{1}{n}\sum_{i=1}^n{\nabla f_i\left( z_{i}^{k} \right)} \right\| ^2+\gamma ^2\frac{d}{n^2}\sum_{i=1}^n{\sigma _{i}^{2}}.
\end{aligned}
\end{equation}

For the second term in the right hand side of~\eqref{derivation_1}, we have 
\begin{equation}
\label{derivation_3}
\begin{aligned}
\frac{1}{n}\sum_{i=1}^n{\frac{1}{J}\sum_{j=1}^J{\mathbb{E}\left[ \left\| \bar{x}^k-\phi _{i,j}^{k+1} \right\| ^2\left| \mathcal{F}^k \right. \right]}}
&\overset{(b)}{=}
\frac{1}{n}\sum_{i=1}^n{\frac{1}{J}\sum_{j=1}^J{\left\{ \frac{1}{J}\left\| \bar{x}^k-z_{i}^{k} \right\| ^2+\left( 1-\frac{1}{J} \right) \left\| \bar{x}^k-\phi _{i,j}^{k} \right\| ^2 \right\}}}
\\
&=\frac{1}{J}\cdot \frac{1}{n}\sum_{i=1}^n{\left\| z_{i}^{k}-\bar{x}^k \right\| ^2}+\left( 1-\frac{1}{J} \right) \cdot T^k.
\end{aligned}
\end{equation}
where $(b)$ holds because at iteration $k$, node $i$ uniformly at random chooses one out of $J$ data samples. For the chosen data sample $j$, $\phi_{i,j}^{k+1}=z_i^k$; otherwise $\phi_{i,j}^{k+1}=\phi_{i,j}^{k}$ (c.f.,~\eqref{saga_table}).

Substituting~\eqref{derivation_2} and~\eqref{derivation_3} into~\eqref{derivation_1}, we get
\begin{equation}
\begin{aligned}
\mathbb{E}\left[ T^{k+1}\left| \mathcal{F}^k \right. \right] 
\leqslant & \left[ \frac{2\left( 1+J \right) \gamma ^2L^2}{n}+\left( 1+\frac{1}{J} \right) \frac{1}{J} \right] \cdot \frac{1}{n}\sum_{i=1}^n{\left\| z_{i}^{k}-\bar{x}^k \right\| ^2}
\\
&+\left( 1-\frac{1}{J^2}+\frac{2\left( 1+J \right) \gamma ^2L^2}{n} \right) T^k+\left( 1+J \right) \gamma ^2\left\| \frac{1}{n}\sum_{i=1}^n{\nabla f_i\left( z_{i}^{k} \right)} \right\| ^2
+\left( 1+J \right) \gamma ^2\frac{d}{n^2}\sum_{i=1}^n{\sigma _{i}^{2}}.
\end{aligned}
\end{equation}

Taking the total expectation on both sides of the above inequality, we further have
\begin{equation}
\label{before_T_k}
\begin{aligned}
\mathbb{E}\left[ T^{k+1} \right] \leqslant & \left[ \frac{2\left( 1+J \right) \gamma ^2L^2}{n}+\left( 1+\frac{1}{J} \right) \frac{1}{J} \right] \cdot \frac{1}{n}\sum_{i=1}^n{\mathbb{E}\left[ \left\| z_{i}^{k}-\bar{x}^k \right\| ^2 \right]}
\\
&+\left( 1-\frac{1}{J^2}+\frac{2\left( 1+J \right) \gamma ^2L^2}{n} \right) \mathbb{E}\left[ T^k \right] +\left( 1+J \right) \gamma ^2\mathbb{E}\left[ \left\| \frac{1}{n}\sum_{i=1}^n{\nabla f_i\left( z_{i}^{k} \right)} \right\| ^2 \right] 
\\
&+\left( 1+J \right) \gamma ^2\frac{d}{n^2}\sum_{i=1}^n{\sigma _{i}^{2}}.
\end{aligned}
\end{equation}

Summing~\eqref{before_T_k} from $k=0$ to $K-1$, we obtain ~\eqref{eq_2}, which completes the proof.
\end{proof}
The following lemma bounds the distance between the de-biased parameters $z_i^k$ at each node $i$ and the node-wise average $\bar{x}^k$, which can be adapted from Lemma 3 in~\cite{assran2019stochastic}.
\begin{Lem}
\label{def_of_C_and_q}
Suppose that Assumptions~\ref{Ass_weight_matrix} and~\ref{assumption_mixing_matrix} hold.
Let $\varepsilon$ be the minimum of all non-zero mixing weights, $\lambda =1-n\varepsilon^{\bigtriangleup B}$ and $q=\lambda ^{\frac{1}{\bigtriangleup B+1}}
$. Then, there exists a constant 
\begin{equation}
C<\frac{2\sqrt{d}\varepsilon^{-\bigtriangleup B}}{\lambda ^{\frac{\bigtriangleup B+2}{\bigtriangleup B+1}}}
,
\end{equation}
such that for any $i \in \{1,2,...,n\}$ and $k \geqslant 0$, we have
\begin{equation}
\label{derivation_4_saga}
\left\| z_{i}^{k}-\bar{x}^k \right\| \leqslant Cq^k\left\| x_{i}^{0} \right\| +\gamma C\sum_{s=0}^k{q^{k-s}\left\| g_i^s +N_{i}^{s} \right\|}
.
\end{equation}
\end{Lem}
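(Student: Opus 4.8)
The plan is to reduce the claim to the standard geometric mixing estimate for push-sum over time-varying directed graphs. Working coordinate-wise (treating each of the $d$ components of the iterates as a scalar vector in $\mathbb{R}^n$), I would first unroll the linear recursion~\eqref{iterate_saga}. Writing $\Phi(k,s):=P^{k-1}P^{k-2}\cdots P^s$ for $s\leqslant k-1$ and $\Phi(k,k):=I$, the scalar update $x^{k+1}=P^k\left( x^k-\gamma( g^k+N^k ) \right)$ gives
\begin{equation*}
x^k=\Phi(k,0)x^0-\gamma\sum_{s=0}^{k-1}\Phi(k,s)\left( g^s+N^s \right),\qquad w^k=\Phi(k,0)\mathbf{1},
\end{equation*}
where I used $w^0=\mathbf{1}$. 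Right-multiplying by $\frac1n\mathbf{1}$ and invoking column-stochasticity $\mathbf{1}^\top P^k=\mathbf{1}^\top$ (Assumption~\ref{Ass_weight_matrix}), so that $\mathbf{1}^\top\Phi(k,s)=\mathbf{1}^\top$, yields $\bar x^k=\frac1n\mathbf{1}^\top x^0-\frac{\gamma}{n}\sum_{s=0}^{k-1}\mathbf{1}^\top\left( g^s+N^s \right)$.

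Subtracting, the de-biased error $z_i^k-\bar x^k=x_i^k/w_i^k-\bar x^k$ decomposes into an initial-condition part and a perturbation part,
\begin{equation*}
z_i^k-\bar x^k=\sum_j\Bigl( \tfrac{[\Phi(k,0)]_{ij}}{w_i^k}-\tfrac1n \Bigr)x_j^0-\gamma\sum_{s=0}^{k-1}\sum_j\Bigl( \tfrac{[\Phi(k,s)]_{ij}}{w_i^k}-\tfrac1n \Bigr)\left( g_j^s+N_j^s \right),
\end{equation*}
so the entire statement hinges on controlling the mixing gaps $\bigl|[\Phi(k,s)]_{ij}/w_i^k-1/n\bigr|$. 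The core ingredient, which I would import directly from Lemma~3 of~\cite{assran2019stochastic} (itself resting on the weak-ergodicity analysis of products of column-stochastic matrices), is that under Assumptions~\ref{Ass_weight_matrix} and~\ref{assumption_mixing_matrix} these gaps decay geometrically, bounded by a scalar constant of order $\varepsilon^{-\bigtriangleup B}/\lambda^{(\bigtriangleup B+2)/(\bigtriangleup B+1)}$ times $q^{k-s}$, with $q=\lambda^{1/(\bigtriangleup B+1)}$ and $\lambda=1-n\varepsilon^{\bigtriangleup B}$ exactly as defined in the statement.

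With this bound in hand the assembly is routine: applying the triangle inequality to each of the two sums, factoring out the geometric weights $q^k$ and $q^{k-s}$, and recombining the $d$ scalar coordinates through $\|v\|_2\leqslant\sqrt d\,\|v\|_\infty$ produces the extra $\sqrt d$ factor and hence the constant $C$ with the advertised upper bound. The $s=0$ term supplies $Cq^k\|x_i^0\|$ and the remaining terms supply $\gamma C\sum_{s}q^{k-s}\|g_i^s+N_i^s\|$, matching~\eqref{derivation_4_saga} (the perturbation sum naturally runs to $s=k-1$, and extending it to $s=k$ only loosens the bound).

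I expect the main obstacle to be entirely contained in the imported mixing estimate rather than in anything specific to the present algorithm: one must lower-bound the push-sum weights $w_i^k$ away from zero (so that the division $x_i^k/w_i^k$ is well behaved, which is where the $\varepsilon^{-\bigtriangleup B}$ factor enters) and prove the geometric contraction of the matrix products over $B$-strongly-connected windows of length $\bigtriangleup B$. Since these are precisely the conclusions of~\cite{assran2019stochastic}, the only genuine check here is that our recursion has the identical linear structure, with the additive per-step perturbation $g_i^s+N_i^s$ in place of the plain stochastic gradient — a direct substitution that leaves the constants $C$ and $q$ unchanged.
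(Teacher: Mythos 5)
The paper gives no proof of this lemma at all---it simply asserts that the bound ``can be adapted from Lemma 3 in~\cite{assran2019stochastic}''---and your proposal is exactly that adaptation: observe that the update retains the linear push-sum structure with the additive perturbation $g_i^s+N_i^s$ in place of the plain stochastic gradient, and import the geometric mixing estimate with $C$, $q$ unchanged, so the approaches coincide. One caveat worth noting: your self-contained assembly via the mixing gaps $\bigl|[\Phi(k,s)]_{ij}/w_i^k-1/n\bigr|\lesssim q^{k-s}$ would, taken literally, yield network-wide sums $\sum_j\|x_j^0\|$ and $\sum_j\|g_j^s+N_j^s\|$ rather than the per-node quantities $\|x_i^0\|$ and $\|g_i^s+N_i^s\|$ appearing in~\eqref{derivation_4_saga}, so the per-node form must indeed be taken from the cited lemma as a black box (as your closing paragraph does) rather than re-derived from the gap bounds alone.
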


Now, we attempt to upper bound the accumulative consensus error $\sum_{k=0}^{K-1}{\frac{1}{n}\sum_{i=1}^n{\mathbb{E}\left[ \left\| z_{i}^{k}-\bar{x}^k \right\| ^2 \right]}}$ using $\sum_{k=0}^{K-1}{\mathbb{E}\left[ \left\| \nabla f\left( \bar{x}^k \right) \right\| ^2 \right]}$ and $\sum_{k=0}^{K-1}{\mathbb{E}\left[ T^k \right]}$, which is summarized in the following lemma.

\begin{Lem}
\label{iterate_of_M_k_saga}
Define
\begin{equation}
\label{def_of_M_k_saga}
M^k:=\frac{1}{n}\sum_{i=1}^n{\mathbb{E}\left[ \left\| z_{i}^{k}-\bar{x}^k \right\| ^2 \right]},
\end{equation}
and suppose Assumptions~\ref{Ass_weight_matrix}-\ref{assumption_bounede_outer_variation} hold. 
If the constant step size $\gamma$ satisfies
\begin{equation}
\label{lr_condition_1}
\gamma \leqslant \frac{1-q}{\sqrt{30}LC},
\end{equation}
we have
\begin{equation}
\label{eq_3}
\begin{aligned}
\sum_{k=0}^{K-1}{M^k}\leqslant & \frac{6C^2}{\left( 1-q^2 \right) n}\sum_{i=1}^n{\left\| x_{i}^{0} \right\| ^2}+\frac{6\gamma ^2C^2}{\left( 1-q \right) ^2}\left( 3b^2+\frac{d}{n}\sum_{i=1}^n{\sigma _{i}^{2}} \right) K
\\
&+\frac{18\gamma ^2C^2}{\left( 1-q \right) ^2}\sum_{k=0}^{K-1}{\mathbb{E}\left[ \left\| \nabla f\left( \bar{x}^k \right) \right\| ^2 \right]}+\frac{12\gamma ^2L^2C^2}{\left( 1-q \right) ^2}\sum_{k=0}^{K-1}{\mathbb{E}\left[ T^k \right]}.
\end{aligned}
\end{equation}
% \begin{equation}
% \label{initial_lemma_7}
% \begin{aligned}
% \sum_{k=0}^{K-1}{M^k}\leqslant & \frac{3C^2}{\left( 1-q^2 \right) nD_2}\sum_{i=1}^n{\left\| x_{i}^{0} \right\| ^2}+\frac{3\gamma ^2C^2}{\left( 1-q \right) ^2D_2}\left( 3b^2+\frac{d}{n}\sum_{i=1}^n{\sigma _{i}^{2}} \right) K
% \\
% &+\frac{9\gamma ^2C^2}{\left( 1-q \right) ^2D_2}\sum_{k=0}^{K-1}{\mathbb{E}\left[ \left\| \nabla f\left( \bar{x}^k \right) \right\| ^2 \right]}+\frac{6\gamma ^2L^2C^2}{\left( 1-q \right) ^2D_2}\sum_{k=0}^{K-1}{\mathbb{E}\left[ T^k \right]},
% \end{aligned}
% \end{equation}
% where $D_2=1-\frac{15\gamma ^2L^2C^2}{\left( 1-q \right) ^2}
% $.
% \zh{Should be written as an explicit requirement for step size}
\end{Lem}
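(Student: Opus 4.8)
The plan is to convert the pointwise consensus estimate of Lemma~\ref{def_of_C_and_q} into an accumulative bound by squaring, summing over $k$, and then absorbing the resulting self-referential $\sum_k M^k$ term using the step-size restriction~\eqref{lr_condition_1}.

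First I would square~\eqref{derivation_4_saga}. Applying $(a+b)^2 \le 2a^2 + 2b^2$ separates the initial-condition contribution $2C^2 q^{2k}\|x_i^0\|^2$ from the driving term $2\gamma^2 C^2 \big(\sum_{s=0}^k q^{k-s}\|g_i^s + N_i^s\|\big)^2$. To the latter I would apply Lemma~\ref{supporting_lemma_2} with $\lambda = q$, which replaces the squared weighted sum by $\tfrac{1}{1-q}\sum_{s=0}^k q^{k-s}\|g_i^s + N_i^s\|^2$. Taking expectations, averaging over $i$, and summing over $k=0,\dots,K-1$, the geometric factor $\sum_k q^{2k}\le \tfrac{1}{1-q^2}$ handles the initial term, while exchanging the order of the $k$- and $s$-summations in the driving term yields a second factor $\tfrac{1}{1-q}$, so that it carries the coefficient $\tfrac{2\gamma^2 C^2}{(1-q)^2}$.

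Next I would bound $\tfrac1n\sum_i \mathbb{E}\|g_i^s + N_i^s\|^2$. Since the Gaussian noise $N_i^s$ is independent and zero-mean, this equals $\tfrac1n\sum_i\mathbb{E}\|g_i^s\|^2 + \tfrac{d}{n}\sum_i\sigma_i^2$. For the gradient part I would use unbiasedness (Lemma~\ref{ubias}) to split $\mathbb{E}\|g_i^s\|^2$ into its variance $\mathbb{E}\|g_i^s - \nabla f_i(z_i^s)\|^2$, controlled via Lemma~\ref{bounded}, plus $\|\nabla f_i(z_i^s)\|^2$, controlled via Lemma~\ref{supporting_lemma_3_saga}. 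A Young-type inequality $\|z_i^s - \phi_{i,j}^s\|^2 \le 2\|z_i^s - \bar{x}^s\|^2 + 2\|\bar{x}^s - \phi_{i,j}^s\|^2$ then expresses everything through the consensus error $M^s$ (c.f.,~\eqref{def_of_M_k_saga}) and the variance-reduction error $T^s$ (c.f.,~\eqref{def_of_T_k}), together with $b^2$, $\|\nabla f(\bar{x}^s)\|^2$, and the noise level. This leaves an inequality of the schematic form $\sum_k M^k \le (\text{initial}) + c\,L^2\tfrac{\gamma^2 C^2}{(1-q)^2}\sum_k M^k + (\text{terms in } T^k,\ \|\nabla f(\bar x^k)\|^2,\ b^2,\ \sigma^2)$.

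The main obstacle — and the step where hypothesis~\eqref{lr_condition_1} is used — is closing this recursion, since the right-hand side still contains $\sum_k M^k$. The bound $\gamma \le \tfrac{1-q}{\sqrt{30}LC}$ is exactly what forces the self-coefficient $c\,L^2\tfrac{\gamma^2 C^2}{(1-q)^2}$ below $1$ (in fact small enough that $1$ minus it is bounded below by a fixed constant), so that $\sum_k M^k$ can be moved to the left-hand side. Dividing through and collecting constants then produces~\eqref{eq_3}. I expect the only delicate bookkeeping to be tracking the numerical constants through the Young splittings so that the absorbed coefficient lands in the required range; everything else is the geometric-sum manipulation and the direct invocation of Lemmas~\ref{supporting_lemma_2},~\ref{ubias},~\ref{bounded}, and~\ref{supporting_lemma_3_saga}.
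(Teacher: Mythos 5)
Your proposal is correct and follows essentially the same route as the paper: square the pointwise bound from Lemma~\ref{def_of_C_and_q}, apply Lemma~\ref{supporting_lemma_2}, decompose $\mathbb{E}\left[ \left\| g_i^s \right\|^2 \right]$ via Lemmas~\ref{ubias},~\ref{bounded},~\ref{supporting_lemma_3_saga} and a Young splitting, sum over $k$ exchanging summation order, and absorb the resulting $\sum_k M^k$ term using~\eqref{lr_condition_1}. The only (harmless) deviation is that you keep $g_i^s+N_i^s$ together with a factor-$2$ split and expand the noise exactly by independence, whereas the paper uses a three-term split with factor $3$; your self-coefficient becomes $10\gamma^2L^2C^2/(1-q)^2\leqslant 1/3$ rather than $15\gamma^2L^2C^2/(1-q)^2\leqslant 1/2$, yielding slightly smaller constants that still imply~\eqref{eq_3}.
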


\begin{proof}
According to~\eqref{derivation_4_saga}, we have
\begin{equation}
\left\| z_{i}^{k}-\bar{x}^k \right\| \leqslant Cq^k\left\| x_{i}^{0} \right\| +\gamma C\sum_{s=0}^k{q^{k-s}\left\| g_{i}^{s} \right\|}+\gamma C\sum_{s=0}^k{q^{k-s}\left\| N_{i}^{s} \right\|}.
\end{equation}

Squaring on both sides of the above inequality, we have
\begin{equation}
\label{why_here}
\begin{aligned}
\left\| z_{i}^{k}-\bar{x}^k \right\| ^2 & \leqslant 3C^2q^{2k}\left\| x_{i}^{0} \right\| ^2+3\gamma ^2C^2\left( \sum_{s=0}^k{q^{k-s}\left\| g_{i}^{s} \right\|} \right) ^2+3\gamma ^2C^2\left( \sum_{s=0}^k{q^{k-s}\left\| N_{i}^{s} \right\|} \right) ^2
\\
&\leqslant 3C^2q^{2k}\left\| x_{i}^{0} \right\| ^2+\frac{3\gamma ^2C^2}{1-q}\sum_{s=0}^k{q^{k-s}\left\| g_{i}^{s} \right\| ^2}+\frac{3\gamma ^2C^2}{1-q}\sum_{s=0}^k{q^{k-s}\left\| N_{i}^{s} \right\| ^2},
\end{aligned}
\end{equation}
where in the second inequality we used Lemma~\ref{supporting_lemma_2}.

Taking total expectation on both sides of~\eqref{why_here}, yields
\begin{equation}
\begin{aligned}
&\mathbb{E}\left[ \left\| z_{i}^{k}-\bar{x}^k \right\| ^2 \right] 
\\
\leqslant& 3C^2q^{2k}\left\| x_{i}^{0} \right\| ^2+\frac{3\gamma ^2C^2}{1-q}\sum_{s=0}^k{q^{k-s}\mathbb{E}\left[ \left\| g_{i}^{s}-\nabla f_i\left( z_{i}^{s} \right) +\nabla f_i\left( z_{i}^{s} \right) \right\| ^2 \right]}
+\frac{3\gamma ^2C^2}{1-q}\sum_{s=0}^k{q^{k-s}\mathbb{E}\left[ \left\| N_{i}^{s} \right\| ^2 \right]}
\\
=&3C^2q^{2k}\left\| x_{i}^{0} \right\| ^2+\frac{3\gamma ^2C^2}{1-q}\sum_{s=0}^k{q^{k-s}\mathbb{E}\left[ \left\| g_{i}^{s}-\nabla f_i\left( z_{i}^{s} \right) \right\| ^2 \right]}
\\
&+\frac{3\gamma ^2C^2}{1-q}\sum_{s=0}^k{q^{k-s}\mathbb{E}\left[ \left\| \nabla f_i\left( z_{i}^{s} \right) \right\| ^2 \right]}+\frac{3\gamma ^2C^2}{1-q}\sum_{s=0}^k{q^{k-s}\mathbb{E}\left[ \left\| N_{i}^{s} \right\| ^2 \right]}
\\
\overset{\left( a \right)}{\leqslant} & 3C^2q^{2k}\left\| x_{i}^{0} \right\| ^2+\frac{3\gamma ^2C^2}{1-q}\sum_{s=0}^k{q^{k-s}L^2\cdot \frac{1}{J}\sum_{j=1}^J{\mathbb{E}\left[ \left\| z_{i}^{s}-\phi _{i,j}^{s} \right\| ^2 \right]}}
\\
&+\frac{3\gamma ^2C^2}{1-q}\sum_{s=0}^k{q^{k-s}\mathbb{E}\left[ 3L^2\left\| z_{i}^{s}-\bar{x}^s \right\| ^2+3b^2+3\left\| \nabla f\left( \bar{x}^s \right) \right\| ^2 \right]}+\frac{3\gamma ^2C^2}{1-q}\sum_{s=0}^k{q^{k-s}\cdot d\sigma _{i}^{2}}
\\
\leqslant & 3C^2q^{2k}\left\| x_{i}^{0} \right\| ^2+\frac{3\gamma ^2C^2}{1-q}\sum_{s=0}^k{q^{k-s}L^2\cdot \frac{1}{J}\sum_{j=1}^J{\left( 2\mathbb{E}\left[ \left\| z_{i}^{s}-\bar{x}^s \right\| ^2 \right] +2\mathbb{E}\left[ \left\| \bar{x}^s-\phi _{i,j}^{s} \right\| ^2 \right] \right)}}
\\
&+\frac{3\gamma ^2C^2}{1-q}\sum_{s=0}^k{q^{k-s}\mathbb{E}\left[ 3L^2\left\| z_{i}^{s}-\bar{x}^s \right\| ^2+3b^2+3\left\| \nabla f\left( \bar{x}^s \right) \right\| ^2 \right]}+\frac{3\gamma ^2C^2}{\left( 1-q \right) ^2}d\sigma _{i}^{2}
\\
\leqslant & 3C^2q^{2k}\left\| x_{i}^{0} \right\| ^2+\frac{3\gamma ^2C^2}{\left( 1-q \right) ^2}\left( 3b^2+d\sigma _{i}^{2} \right) +\frac{15\gamma ^2L^2C^2}{1-q}\sum_{s=0}^k{q^{k-s}\mathbb{E}\left[ \left\| z_{i}^{s}-\bar{x}^s \right\| ^2 \right]}
\\
&+\frac{9\gamma ^2C^2}{1-q}\sum_{s=0}^k{q^{k-s}\mathbb{E}\left[ \left\| \nabla f\left( \bar{x}^s \right) \right\| ^2 \right]}+\frac{6\gamma ^2L^2C^2}{1-q}\sum_{s=0}^k{q^{k-s}\cdot \frac{1}{J}\sum_{j=1}^J{\mathbb{E}\left[ \left\| \bar{x}^s-\phi _{i,j}^{s} \right\| ^2 \right]}},
\end{aligned}
\end{equation}
where in $(a)$ we have used Lemma~\ref{bounded} and Lemma~\ref{supporting_lemma_3_saga}.

According to the definition of 
$M^k$ in~\eqref{def_of_M_k_saga}, we have
\begin{equation}
\begin{aligned}
M^k=&\frac{1}{n}\sum_{i=1}^n{\mathbb{E}\left[ \left\| z_{i}^{k}-\bar{x}^k \right\| ^2 \right]}
\\
\leqslant & \frac{3C^2q^{2k}}{n}\sum_{i=1}^n{\left\| x_{i}^{0} \right\| ^2}+\frac{3\gamma ^2C^2}{\left( 1-q \right) ^2}\left( 3b^2+\frac{d}{n}\sum_{i=1}^n{\sigma _{i}^{2}} \right) 
+\frac{15\gamma ^2L^2C^2}{1-q}\sum_{s=0}^k{q^{k-s}\cdot \frac{1}{n}\sum_{i=1}^n{\mathbb{E}\left[ \left\| z_{i}^{s}-\bar{x}^s \right\| ^2 \right]}}
\\
&+\frac{9\gamma ^2C^2}{1-q}\sum_{s=0}^k{q^{k-s}\mathbb{E}\left[ \left\| \nabla f\left( \bar{x}^s \right) \right\| ^2 \right]}+\frac{6\gamma ^2L^2C^2}{1-q}\sum_{s=0}^k{q^{k-s}\cdot \frac{1}{n}\sum_{i=1}^n{\frac{1}{J}\sum_{j=1}^J{\mathbb{E}\left[ \left\| \bar{x}^s-\phi _{i,j}^{s} \right\| ^2 \right]}}}
\\
\overset{\eqref{def_of_M_k_saga}\eqref{def_of_T_k}}{=}&\frac{3C^2q^{2k}}{n}\sum_{i=1}^n{\left\| x_{i}^{0} \right\| ^2}+\frac{3\gamma ^2C^2}{\left( 1-q \right) ^2}\left( 3b^2+\frac{d}{n}\sum_{i=1}^n{\sigma _{i}^{2}} \right) +\frac{15\gamma ^2L^2C^2}{1-q}\sum_{s=0}^k{q^{k-s}M^s}
\\
&+\frac{9\gamma ^2C^2}{1-q}\sum_{s=0}^k{q^{k-s}\mathbb{E}\left[ \left\| \nabla f\left( \bar{x}^s \right) \right\| ^2 \right]}+\frac{6\gamma ^2L^2C^2}{1-q}\sum_{s=0}^k{q^{k-s}\mathbb{E}\left[ T^s \right]}
.
\end{aligned}
\end{equation}

Summing the above from $k=0$ to $K-1$, yields that
\begin{equation}
\label{derivation_5}
\begin{aligned}
\sum_{k=0}^{K-1}{M^k}
\leqslant & \frac{3C^2}{\left( 1-q^2 \right) n}\sum_{i=1}^n{\left\| x_{i}^{0} \right\| ^2}+\frac{3\gamma ^2C^2}{\left( 1-q \right) ^2}\left( 3b^2+\frac{d}{n}\sum_{i=1}^n{\sigma _{i}^{2}} \right) K+\frac{15\gamma ^2L^2C^2}{1-q}\sum_{k=0}^{K-1}{\sum_{s=0}^k{q^{k-s}M^s}}
\\
&+\frac{9\gamma ^2C^2}{1-q}\sum_{k=0}^{K-1}{\sum_{s=0}^k{q^{k-s}\mathbb{E}\left[ \left\| \nabla f\left( \bar{x}^s \right) \right\| ^2 \right]}}+\frac{6\gamma ^2L^2C^2}{1-q}\sum_{k=0}^{K-1}{\sum_{s=0}^k{q^{k-s}\mathbb{E}\left[ T^s \right]}}
\\
\overset{\left( b \right)}{\leqslant} & \frac{3C^2}{\left( 1-q^2 \right) n}\sum_{i=1}^n{\left\| x_{i}^{0} \right\| ^2}+\frac{3\gamma ^2C^2}{\left( 1-q \right) ^2}\left( 3b^2+\frac{d}{n}\sum_{i=1}^n{\sigma _{i}^{2}} \right) K+\frac{15\gamma ^2L^2C^2}{\left( 1-q \right) ^2}\sum_{k=0}^{K-1}{M^k}
\\
&+\frac{9\gamma ^2C^2}{\left( 1-q \right) ^2}\sum_{k=0}^{K-1}{\mathbb{E}\left[ \left\| \nabla f\left( \bar{x}^k \right) \right\| ^2 \right]}+\frac{6\gamma ^2L^2C^2}{\left( 1-q \right) ^2}\sum_{k=0}^{K-1}{\mathbb{E}\left[ T^k \right]},
\end{aligned}
\end{equation}
where in $(b)$ we used $\sum_{k=0}^{K-1}{\sum_{s=0}^k{q^{k-s}a_s}}\leqslant \frac{1}{1-q}\sum_{k=0}^{K-1}{a_k}
$.

Rearranging the term in~\eqref{derivation_5}, we get
\begin{equation}
\label{asdf}
\begin{aligned}
\left( 1-\frac{15\gamma ^2L^2C^2}{\left( 1-q \right) ^2} \right) \sum_{k=0}^{K-1}{M^k}\leqslant &\frac{3C^2}{\left( 1-q^2 \right) n}\sum_{i=1}^n{\left\| x_{i}^{0} \right\| ^2}+\frac{3\gamma ^2C^2}{\left( 1-q \right) ^2}\left( 3b^2+\frac{d}{n}\sum_{i=1}^n{\sigma _{i}^{2}} \right) K
\\
&+\frac{9\gamma ^2C^2}{\left( 1-q \right) ^2}\sum_{k=0}^{K-1}{\mathbb{E}\left[ \left\| \nabla f\left( \bar{x}^k \right) \right\| ^2 \right]}+\frac{6\gamma ^2L^2C^2}{\left( 1-q \right) ^2}\sum_{k=0}^{K-1}{\mathbb{E}\left[ T^k \right]}.
\end{aligned}
\end{equation}

According to~\eqref{lr_condition_1}, we have
\begin{equation}
1-\frac{15\gamma ^2L^2C^2}{\left( 1-q \right) ^2}\geqslant \frac{1}{2},
\end{equation}
thus~\eqref{asdf} can be further relaxed as
\begin{equation}
% \label{eq_3}
\begin{aligned}
\sum_{k=0}^{K-1}{M^k}\leqslant & \frac{6C^2}{\left( 1-q^2 \right) n}\sum_{i=1}^n{\left\| x_{i}^{0} \right\| ^2}+\frac{6\gamma ^2C^2}{\left( 1-q \right) ^2}\left( 3b^2+\frac{d}{n}\sum_{i=1}^n{\sigma _{i}^{2}} \right) K
\\
&+\frac{18\gamma ^2C^2}{\left( 1-q \right) ^2}\sum_{k=0}^{K-1}{\mathbb{E}\left[ \left\| \nabla f\left( \bar{x}^k \right) \right\| ^2 \right]}+\frac{12\gamma ^2L^2C^2}{\left( 1-q \right) ^2}\sum_{k=0}^{K-1}{\mathbb{E}\left[ T^k \right]},
\end{aligned}
\end{equation}
which completes the proof.
\end{proof}

% \section{Proof of Main Results}
\subsection{Proof of Theorem~\ref{Theorem_1_saga}}
% \subsection{A.3 \quad Proof of Theorem \ref{Theorem_1_saga}}
\label{main_proof_theorem_1_saga}
\textbf{Proof Sketch.}
With the above supporting lemmas, we are now ready to provide the proof of Theorem~\ref{Theorem_1_saga}, which consists of three key steps.
The first step (\textbf{Step 1}) is to upper bound $\frac{1}{K}\sum_{k=0}^{K-1}{\mathbb{E}\left[ \left\| \nabla f\left( \bar{x}^k \right) \right\| ^2 \right]}$, which could be achieved by carefully coupling the above-mentioned three key inequalities (refer to~\eqref{combine_with_eq1_and_eq2} and~\eqref{ed_saga}) and properly designing the value of coupling coefficient $\beta$ (refer to~\eqref{value_of_beta}).
The second step (\textbf{Step 2}) is to upper bound $\frac{1}{K}\sum_{k=0}^{K-1}{\frac{1}{n}\sum_{i=1}^n{\mathbb{E}\left[ \left\| z_{i}^{k}-\bar{x}^k \right\| ^2 \right]}}$ through connecting three key inequalities (refer to~\eqref{my_eq} and~\eqref{last_eq}).
The last step (\textbf{Step 3}) is to upper bound $\frac{1}{K}\sum_{k=0}^{K-1}{\frac{1}{n}\sum_{i=1}^n{\mathbb{E}\left[ \left\| \nabla f\left( z_{i}^{k} \right) \right\| ^2 \right]}}$ based on those two upper bounds obtained in the above two steps.
\\
\\
\noindent
\textbf{Step 1:} \quad \textbf{Upper bounding} $\frac{1}{K}\sum_{k=0}^{K-1}{\mathbb{E}\left[ \left\| \nabla f\left( \bar{x}^k \right) \right\| ^2 \right]}$
\\
\\
According to the definition of $T^k$ (c.f.,~\eqref{def_of_T_k}) and $M^k$ (c.f.,~\eqref{def_of_M_k_saga}), \eqref{eq_1} and~\eqref{eq_2} can be rewritten as
\begin{equation}
\label{after_eq_1}
\begin{aligned}
&\frac{\gamma}{2}\sum_{k=0}^{K-1}{\mathbb{E}\left[ \left\| \nabla f\left( \bar{x}^k \right) \right\| ^2 \right]}+\sum_{k=0}^{K-1}{\mathbb{E}\left[ f\left( \bar{x}^{k+1} \right) \right]}
\\
\leqslant& \sum_{k=0}^{K-1}{\mathbb{E}\left[ f\left( \bar{x}^k \right) \right]}-\frac{\gamma -\gamma ^2L}{2}\sum_{k=0}^{K-1}{\mathbb{E}\left[ \left\| \frac{1}{n}\sum_{i=1}^n{\nabla f_i\left( z_{i}^{k} \right)} \right\| ^2 \right]} +\left( \frac{\gamma L^2}{2}+\frac{\gamma ^2L^3}{n} \right) \sum_{k=0}^{K-1}{M^k}
\\
&+\frac{\gamma ^2L^3}{n}\sum_{k=0}^{K-1}{\mathbb{E}\left[ T^k \right]}+K\cdot \frac{\gamma ^2L}{2n}\cdot \frac{d}{n}\sum_{i=1}^n{\sigma _{i}^{2}}
\end{aligned}
\end{equation}
and
\begin{equation}
\label{after_eq_2}
\begin{aligned}
\sum_{k=0}^{K-1}{\mathbb{E}\left[ T^{k+1} \right]}
\leqslant &\left[ \frac{2\left( 1+J \right) \gamma ^2L^2}{n}+\left( 1+\frac{1}{J} \right) \frac{1}{J} \right] \sum_{k=0}^{K-1}{M^k}+\left( 1-\frac{1}{J^2}+\frac{2\left( 1+J \right) \gamma ^2L^2}{n} \right) \sum_{k=0}^{K-1}{\mathbb{E}\left[ T^k \right]}
\\
&+\left( 1+J \right) \gamma ^2\sum_{k=0}^{K-1}{\mathbb{E}\left[ \left\| \frac{1}{n}\sum_{i=1}^n{\nabla f_i\left( z_{i}^{k} \right)} \right\| ^2 \right]}+K\cdot \left( 1+J \right) \gamma ^2\frac{d}{n^2}\sum_{i=1}^n{\sigma _{i}^{2}}
\end{aligned}
\end{equation}
respectively.

By computing~\eqref{after_eq_1}$+$\eqref{after_eq_2} $\times \beta L$ ($\beta$ is a positive constant to be properly determined later), we get
\begin{equation}
\label{combine_with_eq1_and_eq2}
\begin{aligned}
&\sum_{k=0}^{K-1}{\mathbb{E}\left[ f\left( \bar{x}^{k+1} \right) \right]}+\beta L\cdot \sum_{k=0}^{K-1}{\mathbb{E}\left[ T^{k+1} \right]}
\\
\leqslant & \sum_{k=0}^{K-1}{\mathbb{E}\left[ f\left( \bar{x}^k \right) \right]}-\frac{\gamma}{2}\sum_{k=0}^{K-1}{\mathbb{E}\left[ \left\| \nabla f\left( \bar{x}^k \right) \right\| ^2 \right]}
-\frac{\gamma -\gamma ^2L-2\left( 1+J \right) \beta \gamma ^2L}{2}\cdot \sum_{k=0}^{K-1}{\mathbb{E}\left[ \left\| \frac{1}{n}\sum_{i=1}^n{\nabla f_i\left( z_{i}^{k} \right)} \right\| ^2 \right]}
\\
&+\left[ \frac{\gamma ^2L^3}{n}+\beta \left( \left( 1-\frac{1}{J^2} \right) L+\frac{2\left( 1+J \right) \gamma ^2L^3}{n} \right) \right] \cdot \sum_{k=0}^{K-1}{\mathbb{E}\left[ T^k \right]}
\\
&+\left[ \frac{\gamma L^2}{2}+\frac{\gamma ^2L^3}{n}+\beta \left[ \frac{2\left( 1+J \right) \gamma ^2L^3}{n}+\left( 1+\frac{1}{J} \right) \frac{L}{J} \right] \right] \cdot \sum_{k=0}^{K-1}{M^k}
+K\left( \frac{\gamma ^2L}{2n}+\frac{\beta \left( 1+J \right) \gamma ^2L}{n} \right) \frac{d}{n}\sum_{i=1}^n{\sigma _{i}^{2}}.
\end{aligned}
\end{equation}

Substituting the upper bound of $\sum_{k=0}^{K-1}{M^k}$ (c.f.~\eqref{eq_3}) into the above inequality, yields that
\begin{equation}
\label{ed_saga}
\begin{aligned}
&\sum_{k=0}^{K-1}{\mathbb{E}\left[ f\left( \bar{x}^{k+1} \right) \right]}+\beta L\cdot \sum_{k=0}^{K-1}{\mathbb{E}\left[ T^{k+1} \right]}
\\
\leqslant & \sum_{k=0}^{K-1}{\mathbb{E}\left[ f\left( \bar{x}^k \right) \right]}-\frac{\gamma}{2}\sum_{k=0}^{K-1}{\mathbb{E}\left[ \left\| \nabla f\left( \bar{x}^k \right) \right\| ^2 \right]}
\\
&-\frac{\gamma -\gamma ^2L-2\left( 1+J \right) \beta \gamma ^2L}{2}\cdot \sum_{k=0}^{K-1}{\mathbb{E}\left[ \left\| \frac{1}{n}\sum_{i=1}^n{\nabla f_i\left( z_{i}^{k} \right)} \right\| ^2 \right]}+A_4\sum_{k=0}^{K-1}{\mathbb{E}\left[ T^k \right]}
\\
&+A_5\left[ \frac{6C^2}{\left( 1-q^2 \right) n}\sum_{i=1}^n{\left\| x_{i}^{0} \right\| ^2}+\frac{6\gamma ^2C^2}{\left( 1-q \right) ^2}\left( 3b^2+\frac{d}{n}\sum_{i=1}^n{\sigma _{i}^{2}} \right) K+\frac{18\gamma ^2C^2}{\left( 1-q \right) ^2}\sum_{k=0}^{K-1}{\mathbb{E}\left[ \left\| \nabla f\left( \bar{x}^k \right) \right\| ^2 \right]} \right] 
\\
&+K\left( \frac{\gamma ^2L}{2n}+\frac{\beta \left( 1+J \right) \gamma ^2L}{n} \right) \frac{d}{n}\sum_{i=1}^n{\sigma _{i}^{2}},
\end{aligned}
\end{equation}
where
\begin{equation}
\label{def_of_A_3}
\begin{aligned}
A_4=&\frac{\gamma ^2L^3}{n}+\beta \left[ \left( 1-\frac{1}{J^2} \right) L+\frac{2\left( 1+J \right) \gamma ^2L^3}{n} \right] 
\\
&+\left[ \frac{\gamma L^2}{2}+\frac{\gamma ^2L^3}{n}+\beta \left[ \frac{2\left( 1+J \right) \gamma ^2L^3}{n}+\left( 1+\frac{1}{J} \right) \frac{L}{J} \right] \right] \cdot \frac{12 \gamma ^2L^2C^2}{\left( 1-q \right) ^2}
,
\end{aligned}
\end{equation}
and
\begin{equation}
\label{def_of_A_4}
A_5=\frac{\gamma L^2}{2}+\frac{\gamma ^2L^3}{n}+\beta \left[ \frac{2\left( 1+J \right) \gamma ^2L^3}{n}+\left( 1+\frac{1}{J} \right) \frac{L}{J} \right] .
\end{equation}

If the constant step size $\gamma$ satisfies
\begin{equation}
\label{lr_condition_2}
\gamma \leqslant \frac{1}{2L\sqrt{\frac{J^2\left( 1+J \right)}{n}+\frac{12\left( 1+J \right) C^2}{\left( 1-q \right) ^2}}},
\end{equation}
by setting
\begin{equation}
\label{value_of_beta}
\beta =\frac{\frac{\gamma ^2L^2}{n}+\left( \frac{\gamma L}{2}+\frac{\gamma ^2L^2}{n} \right) \cdot \frac{12\gamma ^2L^2C^2}{\left( 1-q \right) ^2}}{\frac{1}{J^2}-\frac{2\left( 1+J \right) \gamma ^2L^2}{n}-\left[ \frac{2\left( 1+J \right) \gamma ^2L^2}{n}+\left( 1+\frac{1}{J} \right) \frac{1}{J} \right] \cdot \frac{12\gamma ^2L^2C^2}{\left( 1-q \right) ^2}}
\leqslant \frac{2J^2\gamma ^2L^2}{n}+\left( \gamma L+\frac{2\gamma ^2L^2}{n} \right) \cdot \frac{12J^2\gamma ^2L^2C^2}{\left( 1-q \right) ^2},
\end{equation}
% \begin{equation}
% \label{value_of_beta}
% \begin{aligned}
% \beta &=\frac{\frac{\gamma ^2L^2}{n}+\left( \frac{\gamma L}{2}+\frac{\gamma ^2L^2}{n} \right) \cdot \frac{12\gamma ^2L^2C^2}{\left( 1-q \right) ^2}}{\frac{1}{J^2}-\frac{2\left( 1+J \right) \gamma ^2L^2}{n}-\left[ \frac{2\left( 1+J \right) \gamma ^2L^2}{n}+\left( 1+\frac{1}{J} \right) \frac{1}{J} \right] \cdot \frac{12\gamma ^2L^2C^2}{\left( 1-q \right) ^2}}
% \\
% &\leqslant \frac{2J^2\gamma ^2L^2}{n}+\left( \gamma L+\frac{2\gamma ^2L^2}{n} \right) \cdot \frac{12J^2\gamma ^2L^2C^2}{\left( 1-q \right) ^2},
% \end{aligned}
% \end{equation}
we have 
\begin{equation}
\label{A_3_eq}
A_4=\beta L.
\end{equation}

Further, if $\gamma$ satisfies
\begin{equation}
\label{lr_condition_3}
\gamma \leqslant \min \left\{ \frac{n}{2L},\frac{\left( 1-q \right) ^2}{12 nLC^2} \right\} ,
\end{equation}
then, we have
\begin{equation}
\label{beta_upper_bound}
\beta \leqslant \frac{4J^2\gamma ^2L^2}{n},
\end{equation}
and in turn we obtain
\begin{equation}
\begin{aligned}
A_5& \overset{\eqref{def_of_A_4}}{=} \frac{\gamma L^2}{2}+\frac{\gamma ^2L^3}{n}+\beta \left[ \frac{2\left( 1+J \right) \gamma ^2L^3}{n}+\left( 1+\frac{1}{J} \right) \frac{L}{J} \right] 
\\
&\overset{\eqref{beta_upper_bound}}{\leqslant} \frac{\gamma L^2}{2}+\frac{\gamma ^2L^3}{n}+\frac{8\left( 1+J \right) J^2\gamma ^4L^5}{n^2}+\frac{4\left( 1+J \right) \gamma ^2L^3}{n}.
\end{aligned}
\end{equation}

If $\gamma$ further satisfies
\begin{equation}
\label{lr_condition_4}
\gamma \leqslant \min \left\{ \frac{\sqrt{n}}{\sqrt{2}JL},\frac{n}{4\left( 4J+5 \right) L} \right\} ,
\end{equation}
then, we have
\begin{equation}
\label{A_4_upper_bound}
A_5\leqslant \gamma L^2.
\end{equation}

Now, substituting~\eqref{A_3_eq}, \eqref{beta_upper_bound} and~\eqref{A_4_upper_bound} into~\eqref{ed_saga} yields that
\begin{equation}
\label{before_rearranging_term}
\begin{aligned}
&\sum_{k=0}^{K-1}{\mathbb{E}\left[ f\left( \bar{x}^{k+1} \right) \right]}+\beta L\sum_{k=0}^{K-1}{\mathbb{E}\left[ T^{k+1} \right]}
\\
\leqslant & \sum_{k=0}^{K-1}{\mathbb{E}\left[ f\left( \bar{x}^k \right) \right]}-\frac{\gamma}{2}\sum_{k=0}^{K-1}{\mathbb{E}\left[ \left\| \nabla f\left( \bar{x}^k \right) \right\| ^2 \right]}+\beta L\sum_{k=0}^{K-1}{\mathbb{E}\left[ T^k \right]}
\\
&-\frac{\gamma -\gamma ^2L-2\left( 1+J \right) \cdot \frac{4J^2\gamma ^2L^2}{n}\cdot \gamma ^2L}{2}\cdot \sum_{k=0}^{K-1}{\mathbb{E}\left[ \left\| \frac{1}{n}\sum_{i=1}^n{\nabla f_i\left( z_{i}^{k} \right)} \right\| ^2 \right]}
\\
&+\gamma L^2\left[ \frac{6C^2}{\left( 1-q^2 \right) n}\sum_{i=1}^n{\left\| x_{i}^{0} \right\| ^2}+\frac{6\gamma ^2C^2}{\left( 1-q \right) ^2}\left( 3b^2+\frac{d}{n}\sum_{i=1}^n{\sigma _{i}^{2}} \right) K+\frac{18\gamma ^2C^2}{\left( 1-q \right) ^2}\sum_{k=0}^{K-1}{\mathbb{E}\left[ \left\| \nabla f\left( \bar{x}^k \right) \right\| ^2 \right]} \right] 
\\
&+K\left( \frac{\gamma ^2L}{2n}+\frac{4J^2\gamma ^2L^2}{n}\cdot \frac{\left( 1+J \right) \gamma ^2L}{n} \right) \frac{d}{n}\sum_{i=1}^n{\sigma _{i}^{2}}.
\end{aligned}
\end{equation}

By rearranging terms of~\eqref{before_rearranging_term}, we have
\begin{equation}
\label{ed2}
\begin{aligned}
&\left( \frac{\gamma}{2}-\frac{18\gamma ^3L^2C^2}{\left( 1-q \right) ^2} \right) \sum_{k=0}^{K-1}{\mathbb{E}\left[ \left\| \nabla f\left( \bar{x}^k \right) \right\| ^2 \right]}
+\left( \frac{\gamma}{2}-\frac{\gamma ^2L}{2}-\frac{4\left( 1+J \right) J^2\gamma ^4L^3}{n} \right) \sum_{k=0}^{K-1}{\mathbb{E}\left[ \left\| \frac{1}{n}\sum_{i=1}^n{\nabla f_i\left( z_{i}^{k} \right)} \right\| ^2 \right]}
\\
\leqslant & f\left( \bar{x}^0 \right) -f^*+\beta L\cdot T^0+\frac{6\gamma L^2C^2}{\left( 1-q^2 \right) n}\sum_{i=1}^n{\left\| x_{i}^{0} \right\| ^2}
+K\left( \frac{\gamma ^2L}{2n}+\frac{4\left( 1+J \right) J^2\gamma ^4L^3}{n^2} \right) \frac{d}{n}\sum_{i=1}^n{\sigma _{i}^{2}}
\\
&+\frac{6\gamma ^3L^2C^2}{\left( 1-q \right) ^2}\left( 3b^2+\frac{d}{n}\sum_{i=1}^n{\sigma _{i}^{2}} \right) K.
\end{aligned}
\end{equation}

Recalling that in Algorithm~\ref{PrivSGP-VR} we initialize 
% \jm{**no need to numbering the quation?}
\begin{equation}
\label{initialization}
x_{i}^{0}=z_{i}^{0}=x^0 \in \mathbb{R}^d
\end{equation}
for all $i \in \{1,2,...,n\}$, thus we have
\begin{equation}
T^0=0
\end{equation}
according to the definition of $T^k$ in~\eqref{def_of_T_k}.

Therefore, \eqref{ed2} becomes
\begin{equation}
\label{before_dividing_gamma_K}
\begin{aligned}
&\left( \frac{\gamma}{2}-\frac{18\gamma ^3L^2C^2}{\left( 1-q \right) ^2} \right) \sum_{k=0}^{K-1}{\mathbb{E}\left[ \left\| \nabla f\left( \bar{x}^k \right) \right\| ^2 \right]}
+\left( \frac{\gamma}{2}-\frac{\gamma ^2L}{2}-\frac{4\left( 1+J \right) J^2\gamma ^4L^3}{n} \right) \sum_{k=0}^{K-1}{\mathbb{E}\left[ \left\| \frac{1}{n}\sum_{i=1}^n{\nabla f_i\left( z_{i}^{k} \right)} \right\| ^2 \right]}
\\
\leqslant & f\left( \bar{x}^0 \right) -f^*+\frac{6\gamma L^2C^2}{\left( 1-q^2 \right)}\left\| x^0 \right\| ^2+K\left( \frac{\gamma ^2L}{2n}+\frac{4\left( 1+J \right) J^2\gamma ^4L^3}{n^2} \right) \frac{d}{n}\sum_{i=1}^n{\sigma _{i}^{2}}
+\frac{6\gamma ^3L^2C^2}{\left( 1-q \right) ^2}\left( 3b^2+\frac{d}{n}\sum_{i=1}^n{\sigma _{i}^{2}} \right) K.
\end{aligned}
\end{equation}

Dividing by $\gamma K$ on both sides of~\eqref{before_dividing_gamma_K}, we have
\begin{equation}
\label{ed3}
\begin{aligned}
&\left( \frac{1}{2}-\frac{18\gamma ^2L^2C^2}{\left( 1-q \right) ^2} \right) \cdot \frac{1}{K}\sum_{k=0}^{K-1}{\mathbb{E}\left[ \left\| \nabla f\left( \bar{x}^k \right) \right\| ^2 \right]}
\\
&+\left( \frac{1}{2}-\frac{\gamma L}{2}-\frac{4\left( 1+J \right) J^2\gamma ^3L^3}{n} \right) \cdot \frac{1}{K}\sum_{k=0}^{K-1}{\mathbb{E}\left[ \left\| \frac{1}{n}\sum_{i=1}^n{\nabla f_i\left( z_{i}^{k} \right)} \right\| ^2 \right]}
\\
\leqslant & \frac{f\left( \bar{x}^0 \right) -f^*}{\gamma K}+\frac{6L^2C^2}{\left( 1-q^2 \right) K}\left\| x^0 \right\| ^2+\left( \frac{\gamma L}{2n}+\frac{4\left( 1+J \right) J^2\gamma ^3L^3}{n^2} \right) \frac{d}{n}\sum_{i=1}^n{\sigma _{i}^{2}}
+\frac{6\gamma ^2L^2C^2}{\left( 1-q \right) ^2}\left( 3b^2+\frac{d}{n}\sum_{i=1}^n{\sigma _{i}^{2}} \right) .
\end{aligned}
\end{equation}

We notice that if
\begin{equation}
\label{lr_condition_5}
\gamma \leqslant \min \left\{ \frac{1}{3L},\frac{\sqrt{n}}{2JL\sqrt{1+J}} \right\} ,
\end{equation}
then we have
\begin{equation}
\label{upper_bound_1}
\frac{1}{2}-\frac{\gamma L}{2}-\frac{4\left( 1+J \right) J^2\gamma ^3L^3}{n}\geqslant 0.
\end{equation}
and, if 
\begin{equation}
\label{lr_condition_6}
\gamma \leqslant \frac{1-q}{6\sqrt{2}LC},
\end{equation}
then we have
\begin{equation}
\label{upper_bound_2}
\frac{1}{2}-\frac{18\gamma ^2L^2C^2}{\left( 1-q \right) ^2}\geqslant \frac{1}{4}.
\end{equation}

Thus, using~\eqref{upper_bound_1} and~\eqref{upper_bound_2}, \eqref{ed3} becomes
\begin{equation}
\begin{aligned}
\frac{1}{4}\cdot \frac{1}{K}\sum_{k=0}^{K-1}{\mathbb{E}\left[ \left\| \nabla f\left( \bar{x}^k \right) \right\| ^2 \right]}
\leqslant & \frac{f\left( \bar{x}^0 \right) -f^*}{\gamma K}+\frac{6L^2C^2}{\left( 1-q^2 \right) K}\left\| x^0 \right\| ^2+\left( \frac{\gamma L}{2n}+\frac{4\left( 1+J \right) J^2\gamma ^3L^3}{n^2} \right) \frac{d}{n}\sum_{i=1}^n{\sigma _{i}^{2}}
\\
&+\frac{6\gamma ^2L^2C^2}{\left( 1-q \right) ^2}\left( 3b^2+\frac{d}{n}\sum_{i=1}^n{\sigma _{i}^{2}} \right),
\end{aligned}
\end{equation}
or, equivalently,
\begin{equation}
\label{ed4_copy}
\begin{aligned}
\frac{1}{K}\sum_{k=0}^{K-1}{\mathbb{E}\left[ \left\| \nabla f\left( \bar{x}^k \right) \right\| ^2 \right]}
\leqslant & \frac{4\left( f\left( \bar{x}^0 \right) -f^* \right)}{\gamma K}+\frac{24L^2C^2}{\left( 1-q^2 \right) K}\left\| x^0 \right\| ^2+4\left( \frac{\gamma L}{2n}+\frac{4\left( 1+J \right) J^2\gamma ^3L^3}{n^2} \right) \frac{d}{n}\sum_{i=1}^n{\sigma _{i}^{2}}
\\
&+\frac{24\gamma ^2L^2C^2}{\left( 1-q \right) ^2}\left( 3b^2+\frac{d}{n}\sum_{i=1}^n{\sigma _{i}^{2}} \right),
\end{aligned}
\end{equation}
where $\gamma$ need to satisfy~\eqref{lr_condition_1}, \eqref{lr_condition_2}, \eqref{lr_condition_3}, \eqref{lr_condition_4}, \eqref{lr_condition_5} and~\eqref{lr_condition_6}, i.e.,
% \begin{equation}
% \label{total_lr_condition_1}
% \gamma \leqslant \min \left\{ \frac{1}{2L\sqrt{\frac{J^2\left( 1+J \right)}{n}+\frac{8\left( 1+J \right) C^2}{\left( 1-q \right) ^2}}},\frac{\left( 1-q \right) ^2}{12nLC^2},\frac{\sqrt{n}}{\sqrt{2}JL},\frac{n}{4\left( 4J+5 \right) L},\frac{1}{3L} \right\} ,
% \end{equation}
\begin{equation}
\label{total_lr_condition_1}
\gamma \leqslant \min \left\{ \frac{1}{2L\sqrt{\frac{J^2\left( 1+J \right)}{n}+\frac{12\left( 1+J \right) C^2}{\left( 1-q \right) ^2}}},\frac{\left( 1-q \right) ^2}{12nLC^2},\frac{n}{4\left( 4J+5 \right) L},\frac{1}{3L} \right\} ,
\end{equation}
and with~\eqref{total_lr_condition_1}, \eqref{ed4_copy} can be further relaxed as
\begin{equation}
\label{ed4}
\begin{aligned}
\frac{1}{K}\sum_{k=0}^{K-1}{\mathbb{E}\left[ \left\| \nabla f\left( \bar{x}^k \right) \right\| ^2 \right]}\leqslant & \frac{4\left( f\left( \bar{x}^0 \right) -f^* \right)}{\gamma K}+\frac{6\gamma Ld}{n}\cdot \frac{1}{n}\sum_{i=1}^n{\sigma _{i}^{2}}
+\frac{24\gamma ^2L^2C^2}{\left( 1-q \right) ^2}\left( 3b^2+\frac{d}{n}\sum_{i=1}^n{\sigma _{i}^{2}} \right) +\frac{24L^2C^2}{\left( 1-q \right) ^2K}\left\| x^0 \right\| ^2
.
\end{aligned}
\end{equation}

Now, let $\gamma =\sqrt{\frac{n}{K}}$. To ensure~\eqref{total_lr_condition_1}, we thus have
% \begin{equation}
% \sqrt{\frac{n}{K}}\leqslant \min \left\{ \frac{1}{2L\sqrt{\frac{J^2\left( 1+J \right)}{n}+\frac{8\left( 1+J \right) C^2}{\left( 1-q \right) ^2}}},\frac{\left( 1-q \right) ^2}{12nLC^2},\frac{\sqrt{n}}{\sqrt{2}JL},\frac{n}{4\left( 4J+5 \right) L},\frac{1}{3L} \right\} 
% \end{equation}
\begin{equation}
\sqrt{\frac{n}{K}} \leqslant \min \left\{ \frac{1}{2L\sqrt{\frac{J^2\left( 1+J \right)}{n}+\frac{12\left( 1+J \right) C^2}{\left( 1-q \right) ^2}}},\frac{\left( 1-q \right) ^2}{12nLC^2},\frac{n}{4\left( 4J+5 \right) L},\frac{1}{3L} \right\}, 
\end{equation}
which implies that
\begin{equation}
\label{total_K_1}
K\geqslant \max \left\{ 4nL^2\left( \frac{J^2\left( 1+J \right)}{n}+\frac{12\left( 1+J \right) C^2}{\left( 1-q \right) ^2} \right) ,\frac{144n^3L^2C^4}{\left( 1-q \right) ^4},\frac{16\left( 4J+5 \right) ^2L^2}{n},9nL^2 \right\},
\end{equation}
and substituting $\gamma=\sqrt{\frac{n}{K}}$ into~\eqref{ed4} yields that
\begin{equation}
\label{upper_bound_running_sum_nabla_f_k}
\begin{aligned}
& \frac{1}{K}\sum_{k=0}^{K-1}{\mathbb{E}\left[ \left\| \nabla f\left( \bar{x}^k \right) \right\| ^2 \right]}
\\
\leqslant &\frac{4\left( f\left( \bar{x}^0 \right) -f^* \right)}{\sqrt{nK}}+\frac{6Ld}{\sqrt{nK}}\cdot \frac{1}{n}\sum_{i=1}^n{\sigma _{i}^{2}}
+\frac{24nL^2C^2}{\left( 1-q \right) ^2K}\left( 3b^2+\frac{d}{n}\sum_{i=1}^n{\sigma _{i}^{2}} \right) +\frac{24L^2C^2}{\left( 1-q \right) ^2K}\left\| x^0 \right\| ^2
\\
\overset{\eqref{total_K_1}}{\leqslant}&\frac{4\left( f\left( \bar{x}^0 \right) -f^* \right)}{\sqrt{nK}}+\frac{6Ld}{\sqrt{nK}}\cdot \frac{1}{n}\sum_{i=1}^n{\sigma _{i}^{2}}
+\frac{2L}{\sqrt{nK}}\left( 3b^2+\frac{d}{n}\sum_{i=1}^n{\sigma _{i}^{2}} \right) +\frac{2L}{\sqrt{nK}}\left\| x^0 \right\| ^2
\\
=&\frac{4\left( f\left( \bar{x}^0 \right) -f^* \right) +2L\left\| x^0 \right\| ^2+6Lb^2+8L\cdot \frac{d}{n}\sum_{i=1}^n{\sigma _{i}^{2}}}{\sqrt{nK}}.
\end{aligned}
\end{equation}
\\
\\
\textbf{Step 2:} \quad \textbf{Upper bounding} $\frac{1}{K}\sum_{k=0}^{K-1}{\frac{1}{n}\sum_{i=1}^n{\mathbb{E}\left[ \left\| z_{i}^{k}-\bar{x}^k \right\| ^2 \right]}}$
% \subsubsection{Upper bounding $\frac{1}{K}\sum_{k=0}^{K-1}{\frac{1}{n}\sum_{i=1}^n{\mathbb{E}\left[ \left\| z_{i}^{k}-\bar{x}^k \right\| ^2 \right]}}$}
% \label{step_2}
\\
\\
According to~\eqref{after_eq_2} and $T^0=0$, we can easily derive that
\begin{equation}
\label{eq_66}
\begin{aligned}
\left( \frac{1}{J^2}-\frac{2\left( 1+J \right) \gamma ^2L^2}{n} \right) \sum_{k=0}^{K-1}{\mathbb{E}\left[ T^k \right]}
\leqslant& \left[ \frac{2\left( 1+J \right) \gamma ^2L^2}{n}+\left( 1+\frac{1}{J} \right) \frac{1}{J} \right] \cdot \sum_{k=0}^{K-1}{M^k}
\\
&+\left( 1+J \right) \gamma ^2\cdot \sum_{k=0}^{K-1}{\mathbb{E}\left[ \left\| \frac{1}{n}\sum_{i=1}^n{\nabla f_i\left( z_{i}^{k} \right)} \right\| ^2 \right]}+K\left( 1+J \right) \gamma ^2\frac{d}{n^2}\sum_{i=1}^n{\sigma _{i}^{2}}.
\end{aligned}
\end{equation}

In addition, according to~\eqref{total_lr_condition_1}, we can derive that
\begin{equation}
\label{eq_67}
\frac{1}{J^2}-\frac{2\left( 1+J \right) \gamma ^2L^2}{n}\geqslant \frac{1}{2J^2}\,\,   ,
\end{equation}
and
\begin{equation}
\label{eq_68}
\frac{2\left( 1+J \right) \gamma ^2L^2}{n}+\left( 1+\frac{1}{J} \right) \frac{1}{J}\leqslant \frac{2\left( 1+J \right)}{J^2}.
\end{equation}

Substituting~\eqref{eq_67} and~\eqref{eq_68} into~\eqref{eq_66}, we have
\begin{equation}
\label{its_eq}
\begin{aligned}
\sum_{k=0}^{K-1}{\mathbb{E}\left[ T^k \right]}\leqslant & 4\left( 1+J \right) \cdot \sum_{k=0}^{K-1}{M^k}+2J^2\left( 1+J \right) \gamma ^2\cdot \sum_{k=0}^{K-1}{\mathbb{E}\left[ \left\| \frac{1}{n}\sum_{i=1}^n{\nabla f_i\left( z_{i}^{k} \right)} \right\| ^2 \right]}
\\
&+K\cdot 2J^2\left( 1+J \right) \gamma ^2\frac{d}{n^2}\sum_{i=1}^n{\sigma _{i}^{2}}.
\end{aligned}
\end{equation}

According to~\eqref{after_eq_1}, we can easily derive that
\begin{equation}
\label{sum_of_nabla}
\begin{aligned}
&\frac{\gamma}{2}\sum_{k=0}^{K-1}{\mathbb{E}\left[ \left\| \nabla f\left( \bar{x}^k \right) \right\| ^2 \right]}
\\
\leqslant & f\left( \bar{x}^0 \right) -f^*+\left( \frac{\gamma L^2}{2}+\frac{\gamma ^2L^3}{n} \right) \sum_{k=0}^{K-1}{M^k}+\frac{\gamma ^2L^3}{n}\sum_{k=0}^{K-1}{\mathbb{E}\left[ T^k \right]}-\frac{\gamma -\gamma ^2L}{2}\sum_{k=0}^{K-1}{\mathbb{E}\left[ \left\| \frac{1}{n}\sum_{i=1}^n{\nabla f_i\left( z_{i}^{k} \right)} \right\| ^2 \right]}+K\frac{\gamma ^2L}{2n}\cdot \frac{d}{n}\sum_{i=1}^n{\sigma _{i}^{2}}
\\
\overset{\eqref{total_lr_condition_1}}{\leqslant} & f\left( \bar{x}^0 \right) -f^*+\gamma L^2\sum_{k=0}^{K-1}{M^k}+\frac{\gamma ^2L^3}{n}\sum_{k=0}^{K-1}{\mathbb{E}\left[ T^k \right]}
-\frac{\gamma -\gamma ^2L}{2}\sum_{k=0}^{K-1}{\mathbb{E}\left[ \left\| \frac{1}{n}\sum_{i=1}^n{\nabla f_i\left( z_{i}^{k} \right)} \right\| ^2 \right]}+K\frac{\gamma ^2L}{2n}\cdot \frac{d}{n}\sum_{i=1}^n{\sigma _{i}^{2}}.
\end{aligned}
\end{equation}

Substituting~\eqref{sum_of_nabla} into~\eqref{eq_3}, we get
\begin{equation}
\label{my_eq}
\begin{aligned}
&\sum_{k=0}^{K-1}{M^k}
\\
\leqslant & \frac{6C^2}{\left( 1-q \right) ^2n}\sum_{i=1}^n{\left\| x_{i}^{0} \right\| ^2}+\frac{6\gamma ^2C^2}{\left( 1-q \right) ^2}\left( 3b^2+\frac{d}{n}\sum_{i=1}^n{\sigma _{i}^{2}} \right) K
+\frac{18\gamma ^3LC^2}{\left( 1-q \right) ^2n}\cdot \frac{d}{n}\sum_{i=1}^n{\sigma _{i}^{2}}\cdot K+\frac{36\gamma C^2\left( f\left( \bar{x}^0 \right) -f^* \right)}{\left( 1-q \right) ^2}
\\
&+\frac{36\gamma ^2L^2C^2}{\left( 1-q \right) ^2}\sum_{k=0}^{K-1}{M^k}+\frac{36\gamma ^3L^3C^2}{\left( 1-q \right) ^2n}\sum_{k=0}^{K-1}{\mathbb{E}\left[ T^k \right]}+\frac{12\gamma ^2L^2C^2}{\left( 1-q \right) ^2}\sum_{k=0}^{K-1}{\mathbb{E}\left[ T^k \right]}
\\
&-\frac{36\gamma C^2}{\left( 1-q \right) ^2}\cdot \frac{\gamma -\gamma ^2L}{2}\sum_{k=0}^{K-1}{\mathbb{E}\left[ \left\| \frac{1}{n}\sum_{i=1}^n{\nabla f_i\left( z_{i}^{k} \right)} \right\| ^2 \right]}
\\
\overset{\eqref{initialization}\eqref{total_lr_condition_1}}{\leqslant} & \frac{6C^2}{\left( 1-q \right) ^2}\left\| x^0 \right\| ^2+\frac{6\gamma ^2C^2}{\left( 1-q \right) ^2}\left( 3b^2+\frac{2d}{n}\sum_{i=1}^n{\sigma _{i}^{2}} \right) K
+\frac{36\gamma C^2\left( f\left( \bar{x}^0 \right) -f^* \right)}{\left( 1-q \right) ^2}+\frac{36\gamma ^2L^2C^2}{\left( 1-q \right) ^2}\sum_{k=0}^{K-1}{M^k}
\\
&+\frac{24\gamma ^2L^2C^2}{\left( 1-q \right) ^2}\sum_{k=0}^{K-1}{\mathbb{E}\left[ T^k \right]}-\frac{36\gamma C^2}{\left( 1-q \right) ^2}\cdot \frac{\gamma -\gamma ^2L}{2}\sum_{k=0}^{K-1}{\mathbb{E}\left[ \left\| \frac{1}{n}\sum_{i=1}^n{\nabla f_i\left( z_{i}^{k} \right)} \right\| ^2 \right]}.
\end{aligned}
\end{equation}

Substituting~\eqref{its_eq} into~\eqref{my_eq}, yields that
% \begin{equation}
% \label{last_eq}
% \begin{aligned}
% \sum_{k=0}^{K-1}{M^k}\leqslant &\frac{6C^2}{\left( 1-q \right) ^2}\left\| x^0 \right\| ^2+\frac{6\gamma ^2C^2}{\left( 1-q \right) ^2}\left( 3b^2+\frac{4d}{n}\sum_{i=1}^n{\sigma _{i}^{2}} \right) K
% \\
% &+\frac{36\gamma C^2\left( f\left( \bar{x}^0 \right) -f^* \right)}{\left( 1-q \right) ^2}+\frac{\left( 96J+132 \right) \gamma ^2L^2C^2}{\left( 1-q \right) ^2}\sum_{k=0}^{K-1}{M^k}
% \\
% &-\frac{6\gamma ^2C^2}{\left( 1-q \right) ^2}\left[ 3-3\gamma L-8J^2\left( 1+J \right) \gamma ^2L^2 \right] \cdot \sum_{k=0}^{K-1}{\mathbb{E}\left[ \left\| \frac{1}{n}\sum_{i=1}^n{\nabla f_i\left( z_{i}^{k} \right)} \right\| ^2 \right]}
% \end{aligned}
% \end{equation}
\begin{equation}
\label{last_eq}
\begin{aligned}
&\sum_{k=0}^{K-1}{M^k}
\\
\leqslant & \frac{6C^2}{\left( 1-q \right) ^2}\left\| x^0 \right\| ^2+\frac{6\gamma ^2C^2}{\left( 1-q \right) ^2}\left( 3b^2+\frac{2d}{n}\sum_{i=1}^n{\sigma _{i}^{2}} \right) K
+\frac{36\gamma C^2\left( f\left( \bar{x}^0 \right) -f^* \right)}{\left( 1-q \right) ^2}+\frac{36\gamma ^2L^2C^2}{\left( 1-q \right) ^2}\sum_{k=0}^{K-1}{M^k}
\\
&+\frac{24\gamma ^2L^2C^2}{\left( 1-q \right) ^2}\cdot 4\left( 1+J \right) \cdot \sum_{k=0}^{K-1}{M^k}
+\frac{24\gamma ^2L^2C^2}{\left( 1-q \right) ^2}\cdot K\cdot 2J^2\left( 1+J \right) \gamma ^2\frac{d}{n^2}\sum_{i=1}^n{\sigma _{i}^{2}}
\\
&+\frac{24\gamma ^2L^2C^2}{\left( 1-q \right) ^2}\cdot 2J^2\left( 1+J \right) \gamma ^2\cdot \sum_{k=0}^{K-1}{\mathbb{E}\left[ \left\| \frac{1}{n}\sum_{i=1}^n{\nabla f_i\left( z_{i}^{k} \right)} \right\| ^2 \right]}
\\
&-\frac{36\gamma C^2}{\left( 1-q \right) ^2}\cdot \frac{\gamma -\gamma ^2L}{2}\sum_{k=0}^{K-1}{\mathbb{E}\left[ \left\| \frac{1}{n}\sum_{i=1}^n{\nabla f_i\left( z_{i}^{k} \right)} \right\| ^2 \right]}
\\
\overset{\eqref{total_lr_condition_1}}{\leqslant}&\frac{6C^2}{\left( 1-q \right) ^2}\left\| x^0 \right\| ^2+\frac{6\gamma ^2C^2}{\left( 1-q \right) ^2}\left( 3b^2+\frac{4d}{n}\sum_{i=1}^n{\sigma _{i}^{2}} \right) K
+\frac{36\gamma C^2\left( f\left( \bar{x}^0 \right) -f^* \right)}{\left( 1-q \right) ^2}+\frac{\left( 96J+132 \right) \gamma ^2L^2C^2}{\left( 1-q \right) ^2}\sum_{k=0}^{K-1}{M^k}
\\
&-\frac{6\gamma ^2C^2}{\left( 1-q \right) ^2}\left[ 3-3\gamma L-8J^2\left( 1+J \right) \gamma ^2L^2 \right] \cdot \sum_{k=0}^{K-1}{\mathbb{E}\left[ \left\| \frac{1}{n}\sum_{i=1}^n{\nabla f_i\left( z_{i}^{k} \right)} \right\| ^2 \right]}.
\end{aligned}
\end{equation}

In addition to satisfying~\eqref{total_lr_condition_1}, if $\gamma$ further satisfies 
\begin{equation}
\label{total_lr_condition_2}
\gamma \leqslant \min \left\{ \frac{3}{8J^2\left( 1+J \right) L},\frac{1-q}{2LC\sqrt{48J+66}} \right\}  ,
\end{equation}
we have
\begin{equation}
\label{coef_1}
3-3\gamma L-8J^2\left( 1+J \right) \gamma ^2L^2\geqslant 0
\end{equation}
and
\begin{equation}
\label{coef_2}
\frac{\left( 96J+132 \right) \gamma ^2L^2C^2}{\left( 1-q \right) ^2}\leqslant \frac{1}{2}.
\end{equation}

Substituting~\eqref{coef_1} and~\eqref{coef_2} into~\eqref{last_eq}, we further have
\begin{equation}
\label{futher_inequal}
\sum_{k=0}^{K-1}{M^k}\leqslant \frac{12C^2}{\left( 1-q \right) ^2}\left\| x^0 \right\| ^2+\frac{12\gamma ^2C^2}{\left( 1-q \right) ^2}\left( 3b^2+\frac{4d}{n}\sum_{i=1}^n{\sigma _{i}^{2}} \right) K+\frac{72\gamma C^2\left( f\left( \bar{x}^0 \right) -f^* \right)}{\left( 1-q \right) ^2}.
\end{equation}

By now, the constant step size $\gamma$ need to satisfy~\eqref{total_lr_condition_1} and~\eqref{total_lr_condition_2}, i.e.,
\begin{equation}
\label{total_lr_condition_3}
\gamma \leqslant \min \left\{ \frac{1}{2L\sqrt{\frac{J^2\left( 1+J \right)}{n}+\frac{12\left( 1+J \right) C^2}{\left( 1-q \right) ^2}}},\frac{\left( 1-q \right) ^2}{12nLC^2},\frac{n}{4\left( 4J+5 \right) L},\frac{1-q}{2LC\sqrt{48J+66}},\frac{3}{8J^2\left( 1+J \right) L} \right\} .
\end{equation}

Now, let $\gamma =\sqrt{\frac{n}{K}}$. To ensure~\eqref{total_lr_condition_3}, we thus have
\begin{equation}
\sqrt{\frac{n}{K}} \leqslant \min \left\{ \frac{1}{2L\sqrt{\frac{J^2\left( 1+J \right)}{n}+\frac{12\left( 1+J \right) C^2}{\left( 1-q \right) ^2}}},\frac{\left( 1-q \right) ^2}{12nLC^2},\frac{n}{4\left( 4J+5 \right) L},\frac{1-q}{2LC\sqrt{48J+66}},\frac{3}{8J^2\left( 1+J \right) L} \right\},
\end{equation}
which implies that
% \begin{equation}
% \label{total_total_iteration_K}
% \begin{aligned}
% K \geqslant \hat{K}(C,q)= \max \Bigg \{ & 4nL^2\left( \frac{J^2\left( 1+J \right)}{n}+\frac{12\left( 1+J \right) C^2}{\left( 1-q \right) ^2} \right) ,\frac{144n^3L^2C^4}{\left( 1-q \right) ^4},
% \\
% &\frac{16\left( 4J+5 \right) ^2L^2}{n}, \frac{4L^2C^2\left( 48J+66 \right) n}{\left( 1-q \right) ^2}, \frac{64J^4\left( 1+J \right) ^2L^2n}{9}
%  \Bigg \},
% \end{aligned}
% \end{equation}
\begin{equation}
\label{total_total_iteration_K}
\begin{aligned}
&K\geqslant 
\\
&\underset{\hat{K}(C,q)}{\underbrace{\max \left\{ 4nL^2\left( \frac{J^2\left( 1+J \right)}{n}+\frac{12\left( 1+J \right) C^2}{\left( 1-q \right) ^2} \right) ,\frac{144n^3L^2C^4}{\left( 1-q \right) ^4},\frac{16\left( 4J+5 \right) ^2L^2}{n},\frac{4L^2C^2\left( 48J+66 \right) n}{\left( 1-q \right) ^2},\frac{64J^4\left( 1+J \right) ^2L^2n}{9} \right\} }},
\end{aligned}
\end{equation}
and substituting $\gamma=\sqrt{\frac{n}{K}}$ into~\eqref{futher_inequal} yields that
\begin{equation}
\sum_{k=0}^{K-1}{M^k}\leqslant \frac{12C^2}{\left( 1-q \right) ^2}\left\| x^0 \right\| ^2+\frac{12nC^2}{\left( 1-q \right) ^2}\left( 3b^2+\frac{4d}{n}\sum_{i=1}^n{\sigma _{i}^{2}} \right) +\frac{72\sqrt{n}C^2\left( f\left( \bar{x}^0 \right) -f^* \right)}{\left( 1-q \right) ^2\sqrt{K}},
\end{equation}
i.e.,
\begin{equation}
\label{consensus_error}
\frac{1}{K}\sum_{k=0}^{K-1}{M^k}\leqslant \frac{12C^2}{\left( 1-q \right) ^2K}\left\| x^0 \right\| ^2+\frac{12nC^2}{\left( 1-q \right) ^2K}\left( 3b^2+\frac{4d}{n}\sum_{i=1}^n{\sigma _{i}^{2}} \right)+\frac{72\sqrt{n}C^2\left( f\left( \bar{x}^0 \right) -f^* \right)}{\left( 1-q \right) ^2K^{\frac{3}{2}}}.
\end{equation}
% \begin{equation}
% \label{consensus_error}
% \frac{1}{K}\sum_{k=0}^{K-1}{M^k}\leqslant \frac{12C^2}{\left( 1-q \right) ^2K}\left\| x^0 \right\| ^2+\frac{12nC^2}{\left( 1-q \right) ^2K}\left( 3b^2+\frac{4d}{n}\sum_{i=1}^n{\sigma _{i}^{2}} \right) +\frac{72\sqrt{n}C^2\left( f\left( \bar{x}^0 \right) -f^* \right)}{\left( 1-q \right) ^2K^{\frac{3}{2}}}
% .
% \end{equation}
% where $K$ need to satisfy
% \begin{equation}
% % \label{total_total_iteration_K}
% \begin{aligned}
% K \geqslant \max \Bigg \{ & 4nL^2\left( \frac{J^2\left( 1+J \right)}{n}+\frac{8\left( 1+J \right) C^2}{\left( 1-q \right) ^2} \right) ,\frac{144n^3L^2C^4}{\left( 1-q \right) ^4},
% \\
% &2J^2L^2,\frac{16\left( 4J+5 \right) ^2L^2}{n},9nL^2, \frac{4L^2C^2\left( 48J+66 \right) n}{\left( 1-q \right) ^2}, \frac{64J^4\left( 1+J \right) ^2L^2n}{9}
%  \Bigg \}.
% \end{aligned}
% \end{equation}
\\
\\
\\
\noindent
\textbf{Step 3:} \quad \textbf{Upper bounding} $\frac{1}{K}\sum_{k=0}^{K-1}{\frac{1}{n}\sum_{i=1}^n{\mathbb{E}\left[ \left\| \nabla f\left( z_{i}^{k} \right) \right\| ^2 \right]}}$
% \subsubsection{Upper bounding $\frac{1}{K}\sum_{k=0}^{K-1}{\frac{1}{n}\sum_{i=1}^n{\mathbb{E}\left[ \left\| \nabla f\left( z_{i}^{k} \right) \right\| ^2 \right]}}$}
% \label{step_3}
\\
\\
Using the upper bound in~\eqref{upper_bound_running_sum_nabla_f_k} and~\eqref{consensus_error}, we obtain
\begin{equation}
\label{hhhzzzjjj}
\begin{aligned}
&\frac{1}{K}\sum_{k=0}^{K-1}{\frac{1}{n}\sum_{i=1}^n{\mathbb{E}\left[ \left\| \nabla f\left( z_{i}^{k} \right) \right\| ^2 \right]}}
\\
=&\frac{1}{K}\sum_{k=0}^{K-1}{\frac{1}{n}\sum_{i=1}^n{\mathbb{E}\left[ \left\| \nabla f\left( z_{i}^{k} \right) -\nabla f\left( \bar{x}^k \right) +\nabla f\left( \bar{x}^k \right) \right\| ^2 \right]}}
\\
\leqslant & 2\frac{1}{K}\sum_{k=0}^{K-1}{\frac{1}{n}\sum_{i=1}^n{\mathbb{E}\left[ \left\| \nabla f\left( z_{i}^{k} \right) -\nabla f\left( \bar{x}^k \right) \right\| ^2 \right]}}+2\frac{1}{K}\sum_{k=0}^{K-1}{\frac{1}{n}\sum_{i=1}^n{\mathbb{E}\left[ \left\| \nabla f\left( \bar{x}^k \right) \right\| ^2 \right]}}
\\
\overset{\text{Assumption~\ref{assumption_smooth_saga}}}{\leqslant}&2L^2\frac{1}{K}\sum_{k=0}^{K-1}{\frac{1}{n}\sum_{i=1}^n{\mathbb{E}\left[ \left\| z_{i}^{k}-\bar{x}^k \right\| ^2 \right]}}+2\frac{1}{K}\sum_{k=0}^{K-1}{\frac{1}{n}\sum_{i=1}^n{\mathbb{E}\left[ \left\| \nabla f\left( \bar{x}^k \right) \right\| ^2 \right]}}
\\
\overset{\eqref{def_of_M_k_saga}}{=} & 2L^2\frac{1}{K}\sum_{k=0}^{K-1}{M^k}+2\frac{1}{K}\sum_{k=0}^{K-1}{\mathbb{E}\left[ \left\| \nabla f\left( \bar{x}^k \right) \right\| ^2 \right]}
\\
\overset{\eqref{upper_bound_running_sum_nabla_f_k}~\eqref{consensus_error}}{\leqslant} & \frac{24L^2C^2}{\left( 1-q \right) ^2K}\left\| x^0 \right\| ^2+\frac{24nL^2C^2}{\left( 1-q \right) ^2K}\left( 3b^2+\frac{4d}{n}\sum_{i=1}^n{\sigma _{i}^{2}} \right) +\frac{144\sqrt{n}L^2C^2\left( f\left( \bar{x}^0 \right) -f^* \right)}{\left( 1-q \right) ^2K^{\frac{3}{2}}}
\\
&+\frac{8\left( f\left( \bar{x}^0 \right) -f^* \right) +4L\left\| x^0 \right\| ^2+12Lb^2+16L\cdot \frac{d}{n}\sum_{i=1}^n{\sigma _{i}^{2}}}{\sqrt{nK}}
\\
\overset{\eqref{total_total_iteration_K}}{\leqslant}&\frac{13\left( f\left( \bar{x}^0 \right) -f^* \right) +6L\left\| x^0 \right\| ^2+18Lb^2+24L\cdot \frac{d}{n}\sum_{i=1}^n{\sigma _{i}^{2}}}{\sqrt{nK}}
.
\end{aligned}
\end{equation}

According to~\eqref{initialization} and the definition of $F^0$ in Theorem~\ref{Theorem_1_saga}, we have
\begin{equation}
f\left( \bar{x}^0 \right) -f^*=f\left( x^0 \right) -f^*=F^0,
\end{equation}
and thus~\eqref{hhhzzzjjj} becomes
\begin{equation}
\label{final_upper_bound}
\frac{1}{K}\sum_{k=0}^{K-1}{\frac{1}{n}\sum_{i=1}^n{\mathbb{E}\left[ \left\| \nabla f\left( z_{i}^{k} \right) \right\| ^2 \right]}}
\leqslant 
\frac{13 F^0 +6L\left\| x^0 \right\| ^2+18Lb^2+24L\cdot \frac{d}{n}\sum_{i=1}^n{\sigma _{i}^{2}}}{\sqrt{nK}}.
\end{equation}

By now, the proof of Theorem \ref{Theorem_1_saga} has been completed. 

% \jm{**which part is the main proof for Theorem 1; should be well reorganized?}

\section{Proof of Theorem~\ref{Theorem_3}}
\label{proof_of_moments_accountant}
In this section, we provide the proof of Theorem~\ref{Theorem_3}.
\begin{proof}
According to the definition of $g_i^k$ in~\eqref{corrected_stochastic_gradient}, we have
\begin{equation}
\label{clipping_bound}
\begin{aligned}
\left\| g_{i}^{k} \right\| & =\left\| \nabla f_i(z_{i}^{k};\xi _{i}^{k})-\nabla f_i(\phi _{i,\xi _{i}^{k}};\xi _{i}^{k})+\frac{1}{J}\sum_{j=1}^J{\nabla}f_i(\phi _{i,j};j) \right\| 
\\
& \leqslant \left\| \nabla f_i(z_{i}^{k};\xi _{i}^{k}) \right\| +\left\| \nabla f_i(\phi _{i,\xi _{i}^{k}};\xi _{i}^{k}) \right\| +\left\| \frac{1}{J}\sum_{j=1}^J{\nabla}f_i(\phi _{i,j};j) \right\| 
\\
& \leqslant \left\| \nabla f_i(z_{i}^{k};\xi _{i}^{k}) \right\| +\left\| \nabla f_i(\phi _{i,\xi _{i}^{k}};\xi _{i}^{k}) \right\| +\frac{1}{J}\sum_{j=1}^J{\left\| \nabla f_i(\phi _{i,j};j) \right\|}
\\
& \leqslant G+G+\frac{1}{J}\sum_{j=1}^J{G}
\\
& =3G.
\end{aligned}
\end{equation}

Also, knowing that the sampling probability is $\frac{1}{J}$ for each node $i$, the proof of Theorem~\ref{Theorem_3} is straightforward by extending the Theorem 1 in~\cite{abadi2016deep}.
\end{proof}

\section{PrivSGP Algorithm}
\label{appendix_alg}
In this section, we supplement the pseudo-code of our generated differentially private algorithm PrivSGP we missed in the main text, and it is presented in Algorithm~\ref{PrivSGP}.

\begin{center}
\begin{minipage}{9cm}
\begin{algorithm}[H]
\caption{PrivSGP}
\label{PrivSGP}
\textbf{Initialization}: $x_{i}^{0}=z_{i}^{0}=x^0\in \mathbb{R}^d$, $w_i^0=1$ and privacy budget $(\epsilon_i,\delta_i)$ for all $i \in \{1,2,...,n\}$, step size $\gamma > 0$, and  total number of iterations $K$.
\begin{algorithmic}[1] %[1] enables line numbers
\FOR{$k=0,1,2,...,K-1$, at node $i$,}
\STATE Randomly samples a local training data $\xi_i^k$ with the sampling probability $\frac{1}{J}$;
\STATE Computes stochastic gradient at $z_i^k$: $\nabla f_i(z_i^k;\xi_i^k)$
\STATE Adds noise $\nabla \tilde{f}_i(z_{i}^{k};\xi _{i}^{k})=\nabla f_i(z_{i}^{k};\xi _{i}^{k})+N_i^k$, where $N_i^k \in 
    \mathbb{R}^d \thicksim \mathcal{N}\left( 0,\sigma_i^2\mathbb{I}_d \right)$;
\STATE Generates intermediate parameter:  $x_i^{k+\frac{1}{2}}=x_i^k-\gamma \nabla \tilde{f}_i(z_{i}^{k};\xi _{i}^{k})$ ;
\STATE Sends $\left( x_i^{k+\frac{1}{2}}, w_i^k \right)$ to out-neighbors;
\STATE Receives $\left( x_j^{k+\frac{1}{2}}, w_j^k \right)$ from in-neighbors;
\STATE Updates $x_i^{k+1}$ by: \quad $x_{i}^{k+1}=\sum_{j=1}^n{P_{i,j}^{k}x_{j}^{k+\frac{1}{2}}}$;
\STATE Updates $w_i^{k+1}$ by: \quad $w_{i}^{k+1}=\sum_{j=1}^n{P_{i,j}^{k}w_{j}^{k}}$;
\STATE Updates $z_i^{k+1}$ by: \quad $z_{i}^{k+1}=x_{i}^{k+1}/w_i^{k+1}$;
\ENDFOR
\end{algorithmic}
\end{algorithm}
\end{minipage}
\end{center}

\section{Missing Definition of Time-varying Directed Exponential Graph}
\label{missing_definition_of_graph}

We supplement the definition of time-varying directed exponential graph~\cite{assran2019stochastic} we missed in the main text.
Specifically, $n$ nodes are ordered sequentially with their rank $0$,$1$...,$n-1$, and each node has out-neighbours that are $2^0$,$2^1$,...,$2^{\lfloor \log _2\left( n-1 \right) \rfloor}$ hops away.
Each node cycles through these out-neighbours, and only transmits messages to one of its out-neighbours at each iteration.
For example, each node sends message to its $2^0$-hop out-neighbour at iteration $k$, and to its $2^1$-hop out-neighbour at iteration $k+1$, and so on. The above procedure will be repeated within the list of out-neighbours. 
Note that each node only sends and receives a single message at each iteration.

\section{Experimental Results for Training 2-layer Neural Network on Mnist Dataset}
\label{extra_experiments}

\begin{figure}[!htpb]
  \centering
  \subfigure[Training loss]{
    \includegraphics[width=0.3\linewidth]{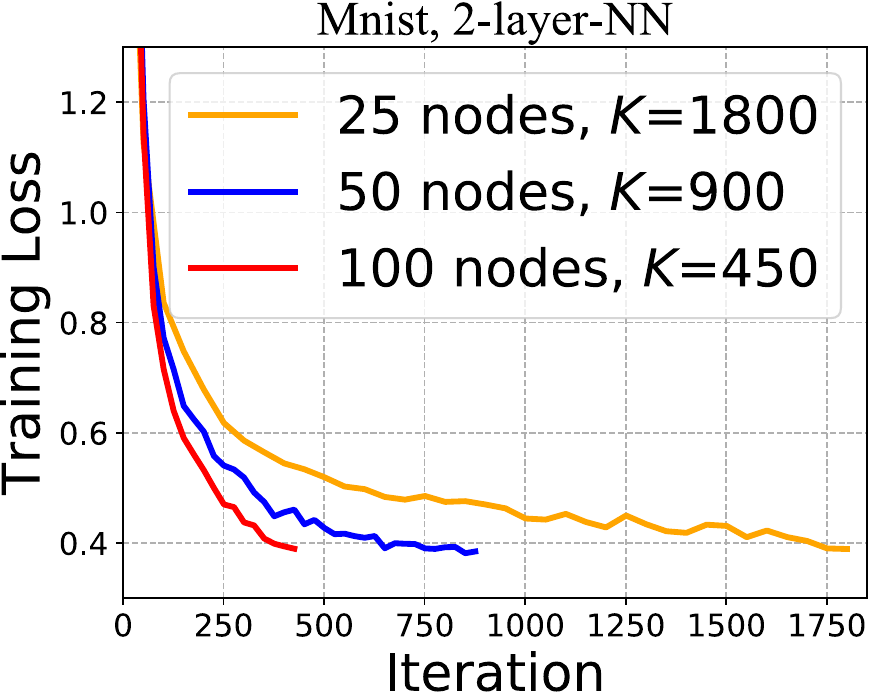}
    \label{Mnist_loss_speed_up}
  }
  \subfigure[Testing accuracy]{
    \includegraphics[width=0.3\linewidth]{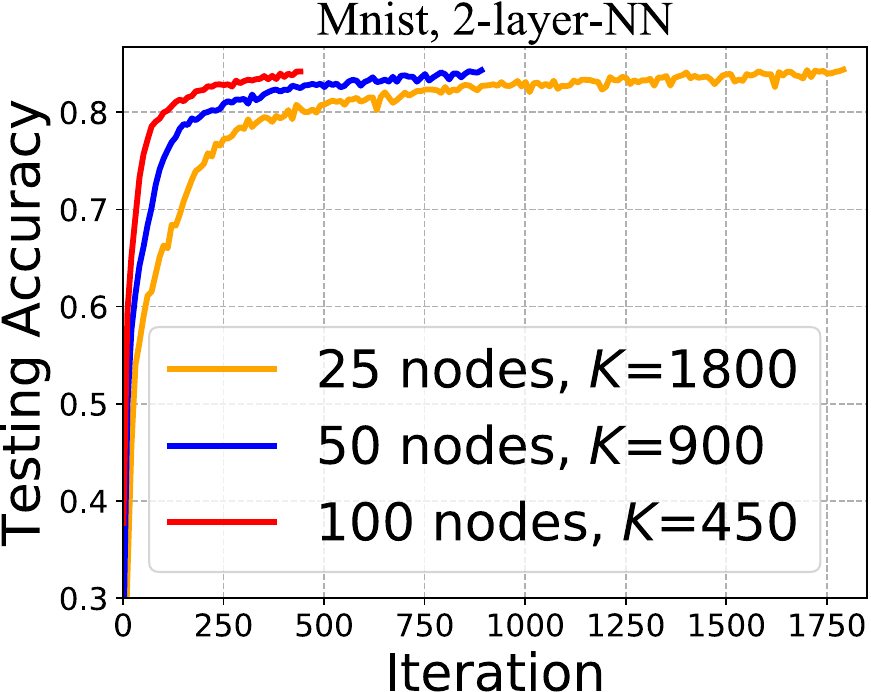}
    \label{Mnist_acc_speed_up}
  }
  \caption{Comparison of convergence performance for PrivSGP-VR over 25, 50 and 100 nodes under the same DP Gaussian noise variance, when training 2-layer neural network on Mnist dataset.}
  \label{Mnist_speed up}
\end{figure}

\paragraph{Linear speedup under constant DP Gaussian noise variance.}
We conduct experiments of training shallow 2-layer neural network on Mnist dataset for our PrivSGP-VR, under 3 distinct network configurations, comprising 25, 50 and 100 nodes, respectively. All configurations utilize the same DP Gaussian noise variance $\sigma_i^2=0.01$ for each node $i$. It can be observed from Figure~\ref{Mnist_speed up} that, by increasing the number of nodes by a factor of 2, we can achieve comparable final training loss and model testing accuracy by running only half the total number of iterations, which also illustrates the linear speedup property in terms of number of nodes $n$ exhibited by our PrivSGP-VR.

\begin{figure}[!htpb]
  \centering
  \subfigure[Training loss]{
    \includegraphics[width=0.3\linewidth]{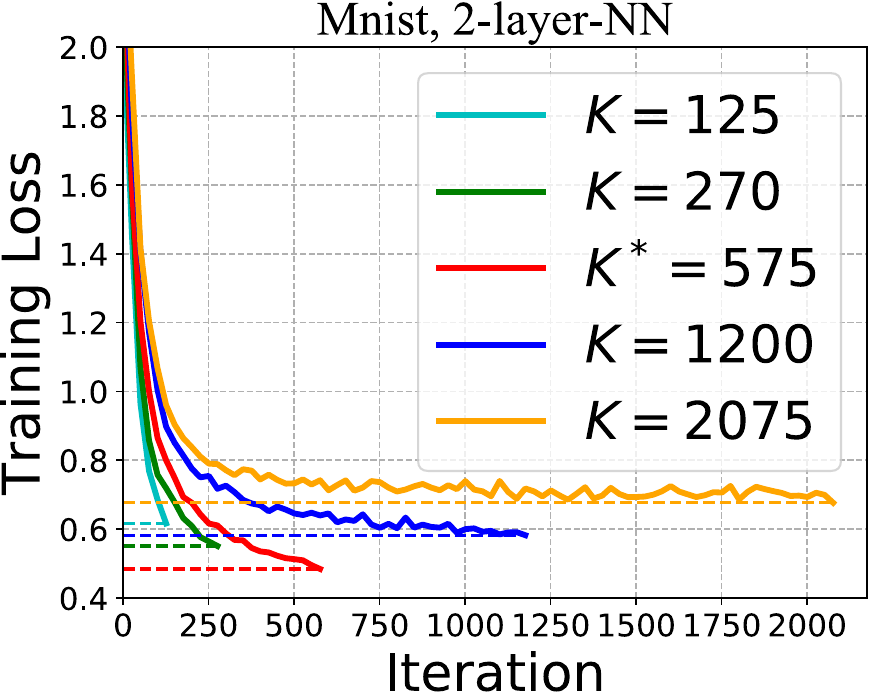}
    \label{Mnist_loss_different_K}
  }
  % \hfill
  \subfigure[Testing accuracy]{
    \includegraphics[width=0.3\linewidth]{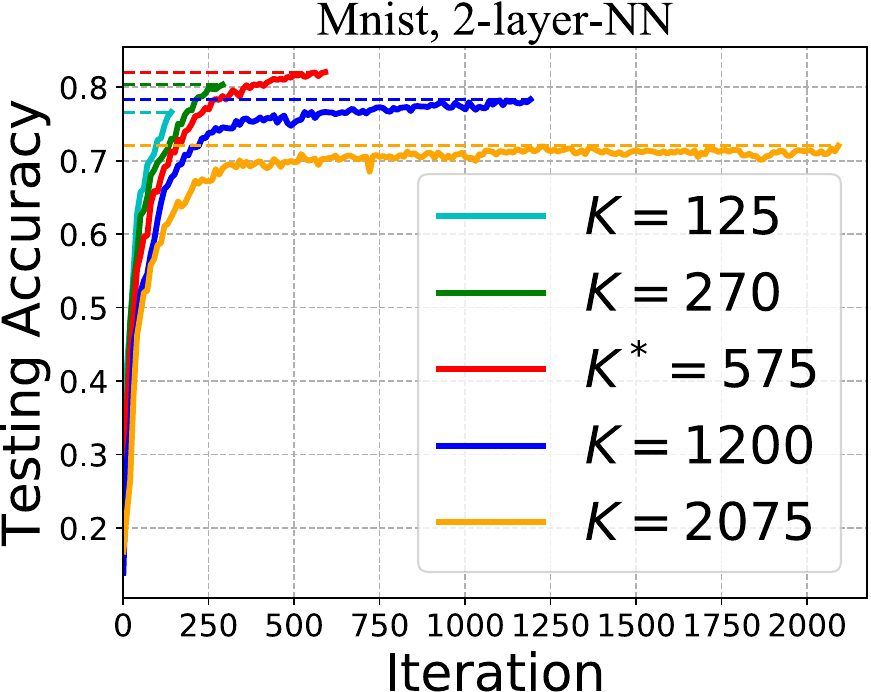}
    \label{Mnist_acc_different_K}
  }
  \caption{Comparison of convergence performance for PrivSGP-VR over 50 nodes by setting different total number of iterations $K$ under a certain privacy budget, when training 2-layer neural network on Mnist dataset.}
  \label{Mnist_different total iteration}
\end{figure}

\paragraph{Optimizing number of iterations under certain privacy budget.}
We conduct experiments of training shallow 2-layer neural network on Mnist dataset for our PrivSGP-VR, under a 50-node setup. We set privacy budget $\epsilon_i=1$ and $\delta_i=10^{-5}$ for each node $i$; and we determine the optimal value of $K$ to be approximately $575$, according to~\eqref{value_of_T_saga}. To investigate the impact of the total number of iterations $K$ on the overall performance of PrivSGP-VR, we consider other values of $K$ for comparison: $125$, $270$, $1200$ and $2075$. For each value of $K$, to guarantee the given privacy budget, we add DP Gaussian noise with variance $\sigma_i^2$ calculated according to~\eqref{noise_scale_saga}. It follows from Figure~\ref{Mnist_different total iteration} that the value of $K$ obtained by our proposed approach leads to the minimum loss and maximum accuracy, which also validates the importance of selecting an appropriate value for $K$ to ensure optimal performance of PrivSGP-VR under a certain privacy budget.

\begin{figure}[!htpb]
\centering
\includegraphics[width=0.45\columnwidth]{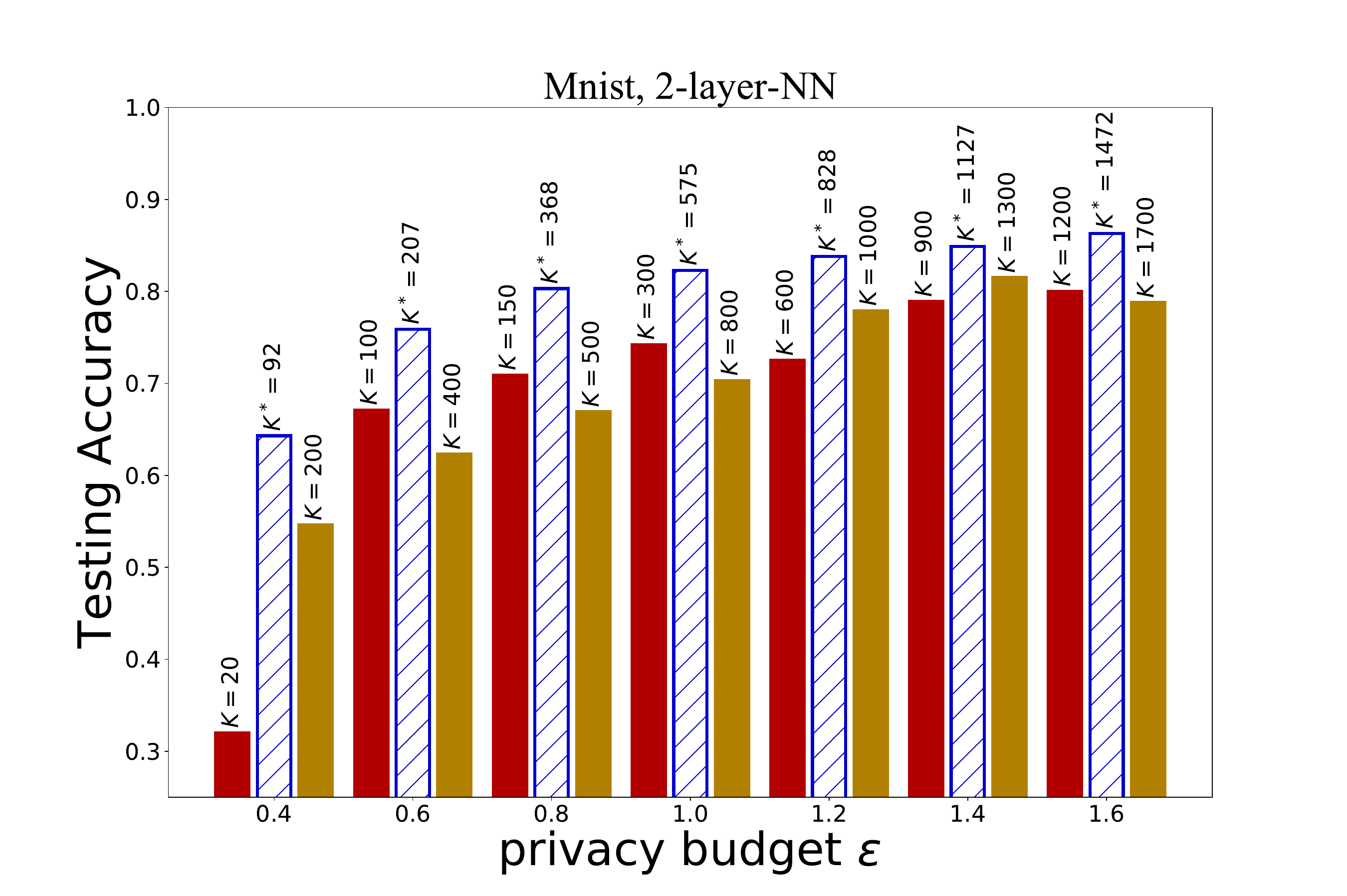} 
\caption{Performance of running PrivSGP-VR for $K^*$ ($K$) iterations under different certain privacy budgets $\epsilon$, when training 2-layer neural network on Mnist dataset.}
\label{Mnist_trade_off_utility_privacy}
\end{figure}

\paragraph{Trade off between the maximized model utility and privacy guarantee.}
We conduct experiments by deploying the PrivSGP-VR algorithm on a network consisting of 50 nodes with a fixed value of $\delta=10^{-5}$ for each node. The $\epsilon$ value for each node is varied from the set $\{0.4, 0.6, 0.8, 1, 1.2, 1.4, 1.6\}$.
For each value of $\epsilon$, we determine the optimal total number of iterations $K^*$ using equation~\eqref{value_of_T_saga}. Then, we execute PrivSGP-VR for $K^*$ iterations, along with two other $K$ values for comparative analysis. We incorporate the corresponding DP Gaussian noise with variance calculated according to equation~\eqref{noise_scale_saga}.
Figure~\ref{trade_off_utility_privacy} illustrates the trade-off between model utility (testing accuracy) and privacy under the optimized number of iterations. As the privacy budget $\epsilon$ diminishes (indicating a higher level of privacy protection), the maximized model utility deteriorates. This trade-off between privacy and maximized utility aligns with the theoretical insights outlined in Remark~\ref{remark_for_trade_off}.

\begin{figure}[!htpb]
% \vspace{-10pt}
% \setlength{\abovecaptionskip}{-0cm}
  \centering
  \subfigure[Training loss]{
    \includegraphics[width=0.3\linewidth]{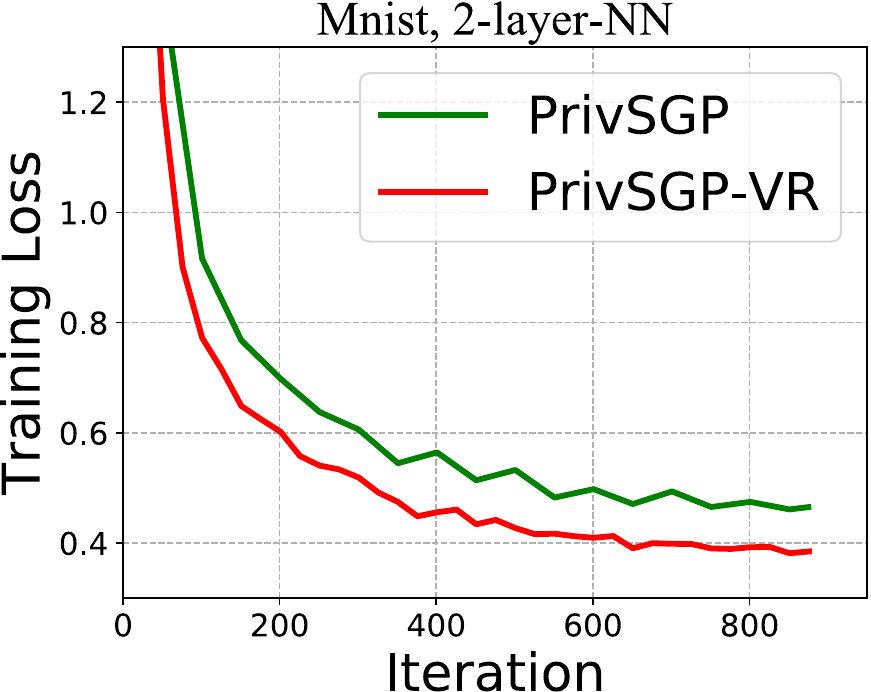}
    \label{Mnist_loss_VR}
  }
  \subfigure[Testing accuracy]{
    \includegraphics[width=0.3\linewidth]{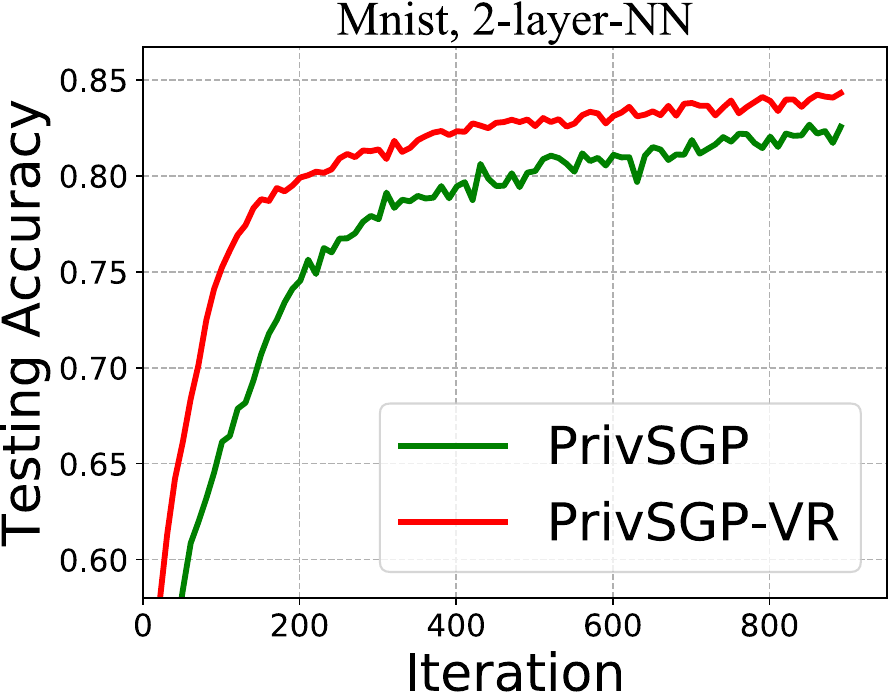}
    \label{Mnist_acc_VR}
  }
  \caption{Comparison of convergence performance for PrivSGP-VR with PrivSGP over 50 nodes under the same DP Gaussian noise variance, when training 2-layer neural network on Mnist dataset.}
  \label{Mnist_variance reduction}
% \vspace{-10pt}
\end{figure}

\paragraph{Verifying the effectiveness of variance reduction technique.}
We conduct experiments of training shallow 2-layer neural network on Mnist dataset to compare PrivSGP-VR with PrivSGP (Algorithm~\ref{PrivSGP}, without the variance reduction technique), under a 50-node setup. We apply DP Gaussian noise with an identical variance of $\sigma_i^2=0.01$ for both PrivSGP-VR and PrivSGP, and we execute both algorithms for same iterations. The results, as shown in Figure~\ref{Mnist_variance reduction}, illustrate that PrivSGP-VR outperforms PrivSGP in terms of both training loss and model testing accuracy, which also validates the effectiveness of the variance reduction technique integrated into PrivSGP-VR.

\begin{figure}[!htpb]
  \centering
  \subfigure[Training loss]{
    \includegraphics[width=0.3\linewidth]{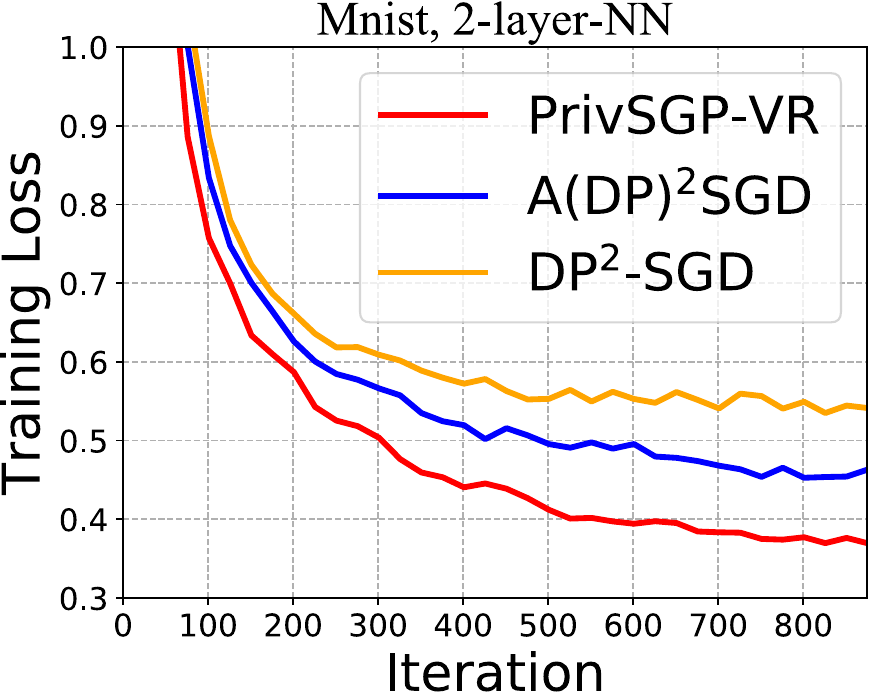}
    \label{Mnist_loss_compare_alg}
  }
  \subfigure[Testing accuracy]{
    \includegraphics[width=0.3\linewidth]{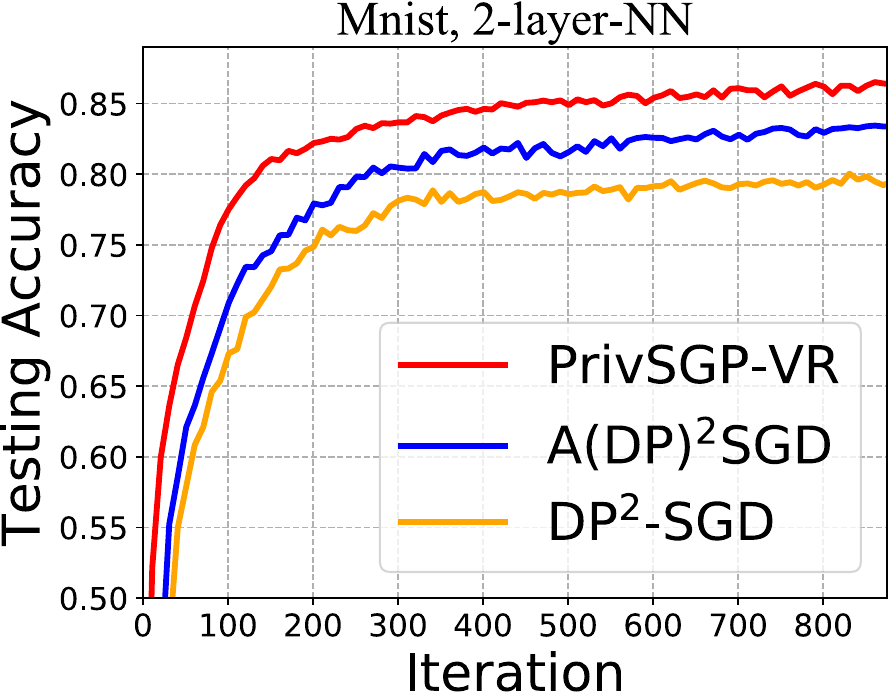}
    \label{Mnist_acc_compare_alg}
  }
  \caption{Comparison of convergence performance for PrivSGP-VR with DP$^2$-SGD and A(DP)$^2$SGD over 50 nodes with $(1,10^{-5})$-DP guarantee for each node, when training 2-layer neural network on Mnist dataset.}
  \label{Mnist_compare_methods}
\end{figure}

\paragraph{Comparison with existing decentralized counterparts.}
We present experiments of training shallow 2-layer neural network on Mnist dataset to compare the performance of PrivSGP-VR with other fully decentralized private stochastic algorithms DP$^2$-SGD and A(DP)$^2$SGD, under an undirected ring graph consisting of 50 nodes. The results shown in Figure~\ref{Mnist_compare_methods} also demonstrate that our PrivSGP-VR outperforms DP$^2$-SGD and A(DP)$^2$SGD under the same differential privacy guarantee.

\end{document}